\newcommand{\authmark}[1]{\textsuperscript{#1}}
\definecolor{Gray}{gray}{0.85}
\definecolor{LightCyan}{rgb}{0.88,1,1}
\newcolumntype{a}{>{\columncolor{Gray}}c}
\newcolumntype{b}{>{\columncolor{white}}c}
\newcommand{\tabincell}[2]{\begin{tabular}{@{}#1@{}}#2\end{tabular}}
\journalname{Submitted in April, 2014.}
\begin{document}

\title{Constraint Reduction using Marginal Polytope Diagrams for MAP LP Relaxations
}

\titlerunning{Constraint Reduction using Marginal Polytope Diagrams}        

\author{Zhen Zhang\authmark{1,2} \and Qinfeng Shi\authmark{2} \and Yanning Zhang\authmark{1} \and Chunhua
  Shen\authmark{2} \and Anton van den Hengel\authmark{2}   
}

\authorrunning{Z. Zhang\etal} 

\institute{
  1. School of Computer Science and Technology, Northwestern
  Polytechnical University, Xi'an, China, 710129\\
  Shaanxi Provincial Key Laboratory of Speech \& Image Information
  Processing,  Xi'an, China, 710129\\
  2. School of Computer Science, the University of Adelaide, Adelaide,
  Australia, SA, 5005
}

\date{Received: date / Accepted: date}

\maketitle
\begin{abstract}
%
 
 LP relaxation-based message passing algorithms provide an effective tool for MAP inference 
over 
Probabilistic Graphical Models. However, 
different
LP relaxations often have different objective functions and 
variables of differing dimensions,
which presents a barrier to effective comparison and analysis.
In addition, the computational complexity 
of LP relaxation-based methods
grows quickly with the number of constraints.   Reducing the number of constraints without sacrificing the quality of the solutions is thus desirable.   

We propose a unified formulation under which existing MAP LP relaxations 
may be
compared and analysed.
 Furthermore, we propose a new tool called Marginal Polytope Diagrams. Some properties of Marginal Polytope Diagrams are exploited such as node redundancy and edge equivalence. 
We show that 
using Marginal Polytope Diagrams
allows the
number of constraints
to be reduced 
without loosening the LP relaxations. Then, using  Marginal Polytope Diagrams and constraint reduction, we develop three novel message passing algorithms, and demonstrate that two of these show a significant improvement in speed over state-of-art algorithms while delivering a competitive, and sometimes higher, quality of solution.
\end{abstract}

\keywords{Constraint Reduction\and Higher Order Potential\and Message Passing\and Probabilistic Graphical Models\and MAP inference}

\maketitle

\section{Introduction}


Linear Programming (LP) relaxations have been used to approximate the maximum a posteriori (MAP) inference of Probabilistic Graphical Models (PGMs) \cite{koller2009probabilistic} by enforcing local consistency over edges or clusters. An attractive property of this approach is that it is guaranteed to find the 
optimal MAP solution when the labels 
are integers.
This is particularly significant in light of the fact that 
Kumar \etal showed that LP relaxation provides 
a better
approximation than Quadratic Programming relaxation and Second Order Cone 
Programming relaxation \cite{kumar2009analysis}. Despite their success, 
there remain a variety of large-scale problems that off-the-shelf LP solvers can not solve~\cite{yanover2006linear}. 
%
 Moreover, it has been shown~\cite{yanover2006linear, SontagEtAl_uai08} that LP relaxations have a large gap between the dual objective and the decoded primal objective and fail to find the 
optimal MAP solution
in many real-world problems.    

In response to this shortcoming a number of dual message passing methods have been proposed including
Dual Decompositions \cite{komodakis2007mrf,SonGloJaa_optbook,SontagChoeLi_uai12} and Generalised Max Product Linear Programming (GMPLP) \cite{globerson2007fixing}. 
These methods can still be computationally expensive when there are a large number of constraints in the LP relaxations. It is desirable to reduce the number of constraints 
in order to reduce computational complexity
without sacrificing the quality of the solution. However, this is non-trivial, because for a MAP inference problem the dimension of the primal variable can be different in various LP relaxations. This also presents a barrier for effectively comparing 
the quality of 
two LP relaxations and their corresponding message passing methods. 
Furthermore, these message-passing methods may get stuck in
non-optimal 
solutions
due to the non-smooth dual objectives~\cite{SchwingNIPS2012, hazan2010norm,meshi2012convergence}. 

Our contributions are: 1) we propose a unified form for MAP LP
relaxations, under which existing MAP LP relaxations can be rewritten
as constrained optimisation problems with variables of the same
dimension and objective; 2) we present a new tool which we call the Marginal
Polytope Diagram to effectively compare different MAP LP relaxations.
%
 We show that any MAP LP relaxation in the above unified form has a Marginal Polytope Diagram, and vice versa. We establish propositions to conveniently show the equivalence of seemingly different Marginal Polytope Diagrams; 3) Using Marginal Polytope Diagrams,  we show how to safely reduce the number of constraints (and consequently the number of messages) without sacrificing the quality of the solution, and propose three new message passing algorithms in the dual; 4) we show how to perform message passing in the dual without computing and storing messages (via updating the beliefs only and directly); 5) we propose a new cluster pursuit strategy.  
\section{MAP Inference and LP Relaxations}
We consider MAP inference over factor graphs with discrete states. For
generality,  we will use higher order potentials (where possible) throughout the paper.
\subsection{MAP inference} 
Assume that there are $n$ variables $X_1, \cdots, X_n$, each taking discrete 
states $x_i\in$ Vals$(X_i)$. 
Let  $\Vcal = \{1,\cdots,n\}$ denote the node set, and let $\Ccal$ be a collection of subsets of $\Vcal$. 
$\Ccal$ has an associated
 group of potentials $\boldsymbol{\theta}=\{\theta_c(\xb_c)\in\mathbb{R}|c\in\Ccal\}$, where 
$\xb_c=[x_i]_{i\in c}$. 
Given a graph $G=(\Vcal,\Ccal)$ and potentials $\boldsymbol{\theta}$, we 
consider the following exponential family distribution 
\citep{wainwright2008graphical}:
\begin{equation}
  \label{eq:Bdist}
p(\xb|\boldsymbol{\theta})=\frac{1}{Z}\exp\left(\sum_{c\in\Ccal} 
\theta_c(\xb_c)\right),
\end{equation}
where  
$\xb = [x_1,x_2,\ldots,x_n]\in\Xcal$, and $Z=\sum_{\xb\in\Xcal}\exp(\sum_{c\in\Ccal}\theta_c(\xb_c))$ 
is known as a normaliser, or partition function. The goal of  MAP inference is 
to find the MAP assignment, $\xb^{*}$, that maximises 
$p(\xb|\boldsymbol{\theta})$. That is
\begin{align}
\label{eq:map-cluster}
\xb^{*} =
\mathop{\mathrm{argmax}}_{\xb}\sum_{c\in \mathcal{C}} 
\theta_{c}(\xb_c).
\end{align}
Here we slightly generalise the notation of $\xb_c$ to $\xb_s=[x_i]_{i\in s}$, $\xb_t=[x_i]_{i\in t}$ and $\xb_f=[x_i]_{i\in f}$ where $s,t,f$ are subsets of $\Vcal$ reserved for later use.


\subsection{Linear Programming Relaxations}
By introducing 
\begin{align}
\mub=(\mu_c(\xb_c))_{c\in\Ccal},  \label{def:mub}
\end{align}
the MAP inference problem can be written as an equivalent Linear Programming (LP) problem as follows
\begin{align}
     \displaystyle{\mub^{*} = \mathop{\mathrm{argmax}}_{\mub\in\Mcal(G)}\sum_{c\in\Ccal}\sum_{\xb_c}\mu_c(\xb_c)\theta_c(\xb_c)},\label{eq:LPProb}
\end{align}
in which the feasible set, $\Mcal(G)$, is known as the marginal polytope 
\citep{wainwright2008graphical},  defined as follows
\begin{align}
  \left\{
    \mub \left| 
      \begin{array}{l}
        q(\xb) \geqslant 0,~\forall \xb\\
        \sum_{\xb}q(\xb)=1\\
        \sum_{\xb_{\Vcal\setminus c}}q(\xb)=\mu_c(\xb_c), \forall c\in\Ccal,\xb_c\\
      \end{array}
      \right.
  \right\}.
\end{align}
Here the first two groups of constraints specify that $q(\xb)$ is a
distribution over $\Xcal$, and we refer to the last group of
constraints as the \textit{global marginalisation constraint}, which
guarantees that for arbitrary $\mub$ in $\Mcal(G)$, all $\mu_c(\xb_c),c\in\Ccal$ can be obtained by marginalisation from a common
distribution $q(\xb)$ over $\Xcal$. In general, exponentially
many 
inequality constraints (\ie $q(\xb) \geqslant 0,~\forall \xb$) are required to define a marginal polytope, which makes
the LP hard to solve. Thus \eq{eq:LPProb} is often relaxed with 
a
local marginal
polytope $\Mcal_L(G)$ to obtain 
the following LP relaxation
\begin{align}
\displaystyle{\mub^{*} = 
\mathop{\mathrm{argmax}}_{\mub\in\Mcal_L(G)}\sum_{c\in\Ccal}\sum_{\xb_c}
\mu_c(\xb_c)\theta_c(\xb_c)}.
\end{align}
Different LP relaxation schemes define different local marginal
polytopes. 
A typical local marginal polytope 
defined in
\cite{SonGloJaa_optbook,hazan2010norm} is as follows:
\begin{align}
\label{eq:lpr}
  \left\{ \mub \left|
      \begin{array}{l}
        \sum\limits_{\xb_{c\setminus \{i\}}}\mu_c(\xb_c)=\mu_i(x_i),\forall 
c\in\Ccal,i\in c,x_i\\
        \mu_c(\xb_c)\geqslant 0,
        \sum\limits_{\xb_c}\mu_c(\xb_c)=1,\forall c \in \Ccal, \xb_c
      \end{array}
    \right.
  \right\}.
\end{align}
Compared to the marginal polytope, for arbitrary $\mub$ in a local marginal polytope, all $\mub_c(\xb_c)$ may not be the marginal distributions 
of
a common distribution $q(\xb)$ over $\Xcal$, but there are much fewer constraints in local marginal polytope. As a result, 
the LP relaxation can be solved more efficiently. 
%
%
This is of particular practical significance because
state-of-the-art interior point or simplex LP solvers can only handle 
%
problems with up to a few hundred thousand variables and constraints
while many real-world datasets 
demand 
far more variables and constraints \citep{yanover2006linear,kumar2009analysis}.

Several message passing-based approximate algorithms
\citep{globerson2007fixing,SontagEtAl_uai08,SontagChoeLi_uai12} have been
proposed to solve large scale LP relaxations. Each of them applies
coordinate descent to the dual objective of an LP relaxation problem with a particular local marginal polytope. 
%
Different local marginal polytopes use 
different local marginalisation constraints, which leads to different dual
problems and hence different message updating schemes.  
 
\subsection{Generalised Max Product Linear Programming}
\citet{globerson2007fixing} showed that LP relaxations
can also be solved by message passing, known as Max Product LP (MPLP)
when only node and edge potentials are considered, or Generalised MPLP (GMPLP) (see Section 6 of \citep{globerson2007fixing}) when
potentials over clusters are considered.

In GMPLP, they define $\Ical=\{s|s=c\cap c';c,c'\in\Ccal\}$, and
 \begin{align}
   \mub^g=(\mu_c(\xb_c),\mu_s(\xb_s))_{c\in\Ccal,s\in\Ical}.
   \label{eq:mu_g}
 \end{align}
Then they consider the
following LP relaxation
 \begin{align}
   \displaystyle{\mub^{g*} = 
\mathop{\mathrm{argmax}}_{\mub^g\in\Mcal_L^g(G)}\sum_{c\in\Ccal}\sum_{\xb_c}
\mu_c(\xb_c)\theta_c(\xb_c)}\label{eq:OBJGMPLP},
 \end{align}
 where the local marginal polytope $\Mcal_L^g(G)$ is defined as
   \begin{equation}
  \label{eq:MLGOriginal}
  \left\{
      \boldsymbol{\mu}^g
    \left|
      \begin{array}{l}
        \mu_c(\xb_c) \geqslant 0, \forall c\in\Ccal ,\xb_c\\
        {\sum\limits_{\xb_c}\mu_{c}(\xb_{c})=1, ~\forall c \in \Ccal}\\
        {\sum\limits_{\xb_{c\setminus s}}
        \mu_{c}(\xb_{c})=\mu_{s}(\xb_s), \forall c \in\mathcal{C}, s\in\mathcal{S}_g(c), \xb_s}\\
      \end{array}
    \right.\right\}
\end{equation}
with $\Scal_g(c)=\{s|s\in\Ical,s\subseteq c\}$. To derive a desirable dual 
formulation, they replace the third group of constraints with the following 
equivalent constraints
 \begin{align}
      & \mu_{c\setminus s,s}(\xb_{c\setminus s},\xb_s) =\mu_{c}(\xb_c),  \quad\forall c\in\mathcal{C}, s\in\mathcal{S}_{g}(c), \xb_c\notag,\\
      &  \sum_{\xb_{c\setminus s}}\mu_{c\setminus s,s}(\xb_{c\setminus s},\xb_s)=\mu_{s}(\xb_s), \quad\forall c \in\mathcal{C}, s\in\mathcal{S}_{g}(c), \xb_s \notag
 \end{align}
where $\mu_{c\setminus s,s}(\xb_{c\setminus s},\xb_s)$ is known as the copy variable. Let $\beta_{c\setminus s,s}(\xb_{c \setminus s},\xb_s)$ be the dual variable associated with the first group of the new constraints above, using standard Lagrangian yields the following dual problem:
\begin{align}
\min_{\beta}& \sum_{s\in\Ical}\mathop{\mathrm{max}}_{\xb_s}\sum_{c\in\Ccal,s\in\Scal_g(c)}\mathop{\mathrm{max}}_{\xb_{c\setminus s}}\beta_{c\setminus s,s}(\xb_{c \setminus s},\xb_s)\notag\\
\mathrm{s.t.} &~\theta_c(\xb_c)=\sum_{s\in\mathcal{S}_g(c)}\beta_{c\setminus s,s}(\xb_{c\setminus s},\xb_s), ~\forall c\in\Ccal,\xb_c.
\end{align}
Let $\lambda_{c\rightarrow s}(\xb_s)=\mathop{\mathrm{max}}_{\xb_{c\setminus 
s}}\beta_{c\setminus s,s}(\xb_{c \setminus s},\xb_s)$, they use a coordinate 
descent method to minimise the dual 
by picking up a particular $c\in\Ccal$ and updating all 
$\lambda^*_{c\rightarrow s}(\xb_s)$ as following:
\begin{align}
  &\lambda^*_{c\rightarrow s}(\xb_s)  = -\lambda_{s}^{-c}(\xb_s)+\notag\\
&\frac{1}{\left|\mathcal{S}_g(c)\right|}\hspace{-0.02in}\mathop{\mathrm{max}}
\limits_ { \xb_{ c\setminus 
s}}\bigg[\theta_c(\xb_c)\hspace{-0.02in}+\hspace{-0.02in}\sum\limits_{\hat{s}
\in\mathcal{S}_g(c) }\lambda_{\hat{s }}^{-c}(\xb_{\hat{s}})\bigg],\forall 
s\in\Scal_g(c), \xb_s
\end{align}
where 
$\lambda_{s}^{-c}(\xb_s)=\sum_{\hat{c}\in\{\bar{c}|\bar{c}\in\Ccal,\bar{c}\neq 
c,s\in\Scal_g(\bar{c})\}}\lambda_{\hat{c}\rightarrow s}(\xb_s)$.
At each iteration the dual objective always decreases, thus guaranteeing convergence. Under certain conditions GMPLP finds the exact solution.
Sontag \etal\citep{SontagEtAl_uai08} extended this idea by iteratively adding clusters and reported faster convergence empirically.
\subsection{Dual Decomposition}

Dual Decomposition \citep{komodakis2007mrf,SonGloJaa_optbook} explicitly splits node potentials 
(those potentials of order 1) 
from cluster potentials with order greater than 1, and
rewrites the MAP objective \eq{eq:map-cluster} as 
\begin{equation}
\label{eq:map-dd}
\sum_{i\in \Vcal}\theta_i(x_i)+\sum_{f\in\mathcal{F}}\theta_f(\xb_f),
\end{equation}
where $\mathcal{F}=\{f|f\in\Ccal,|f|>1\}$. By defining $\boldsymbol{\mu}^d=(\mu_i(x_i),\mu_f(\xb_f))_{i\in \Vcal, f\in\Fcal}$, they consider the following LP relaxation:
\begin{align}
\hspace{-0.1in}  \max_{\boldsymbol{\mu}^d\in 
\mathcal{M}_{L}^d(G)}&f_d(\mub^d)\notag\\
\hspace{-0.1in}  f_d(\mub^d)=&\sum_{i\in 
\Vcal}\sum_{x_i}\mu_i(x_i)\theta_i(x_i)+\sum_{f\in\mathcal{F}}
\sum_{\xb_f}\mu_f(\xb_f)\theta_f(\xb_f)\label{eq:OBJDD}
\end{align}
with a different local marginal polytope $\Mcal_{L}^d(G)$ defined as 
\begin{equation}
\label{eq:Md}
\hspace{-0.0in}\left\{\boldsymbol{\mu}^d\geqslant 0
\left|
\begin{matrix}
\sum_{x_i}\mu_i(x_i)=1, \forall i \in \Vcal\\
\sum_{\xb_{f/\{i\}}} \mu_{f}(\xb_f)=\mu_{i}(x_i), \forall f\in 
\mathcal{F},i\in f,x_i\\
\end{matrix}
\right.
\right\}\hspace{-0.03in}.\hspace{-0.2in}
\end{equation}
Let $\lambda_{fi}(x_i)$ be the Lagrangian multipliers corresponding to each 
$\sum_{\xb_{f\setminus\{i\}}} \mu_{f}(\xb_f)=\mu_{i}(x_i)$ for each 
$f\in\Fcal,i\in f, x_i$, one can show that the standard Lagrangian duality is
\begin{align}
L(\lambda) &= \sum_{i\in \Vcal}\max_{x_i}\Big(\theta_i(x_i) 
+\sum_{f\in\{f'|f'\in\Fcal,i\in f'\}}\lambda_{fi}(x_i)\Big) 
\nonumber\\&+\sum_{f\in 
\mathcal{F}}\max_{\xb_f}\Big(\theta_{f}(\xb_f)-\sum_{i\in 
f}\lambda_{fi}(x_i)\Big).
\end{align}
Subgradient or coordinate descent can be used to minimise the dual objective. Since the Dual Decomposition using coordinate descent is closely related to GMPLP and the unified form which we will present, we give the update rule derived by coordinate descent below, 
%
\begin{align}
&\lambda^*_{fi}(x_i)=-\theta_i(x_i)-\lambda_{i}^{-f}(x_i)+\nonumber\\&\frac{1}{
|f|}\max_{\xb_{f\setminus\{i\}}}\hspace{-0.02in}\Big[\theta_{f}(\xb_f)\hspace{
-0.02in}+\hspace{ -0.02in}\sum_{ \hat{i}\in 
f}\theta_{\hat{i}}(x_{\hat{i}})\hspace{-0.02in}+\hspace{-0.02in}\sum_{\hat{i}
\in 
f}\lambda_{\hat{i}}^{-f}(x_{\hat{i}})\Big]\hspace{-0.02in},\hspace{-0.02in}
\forall i\hspace{-0.02in}\in\hspace{-0.02in} 
f,\hspace{-0.02in}x_i\hspace{-0.02in}
\end{align}
where $f$ is a particular cluster from $\Fcal$, and
$\lambda_{i}^{-f}(x_i)=\sum_{\hat{f}\in\{\bar{f}|\bar{f}\in\Fcal,\bar{f}\neq 
f,i \in \bar{f}\}}\lambda_{\hat{f}i}(x_i)$. 

Compared to GMPLP, the local marginal polytope in the Dual Decomposition has 
much fewer constraints. In general for an arbitrary graph 
$G=(\Vcal,\Ccal)$,  $\Mcal_L^d(G)$ is looser than $\Mcal_L^g(G)$ (\ie $\Mcal_L^d(G) \supseteq \Mcal_L^g(G)$) .

\subsection{Dual Decomposition with cycle inequalities}
Recently, Sontag \etal \cite{SontagChoeLi_uai12} proposed a Dual Decomposition with cycle inequalities considering the following LP relaxation
\begin{align}
\label{eq:cycle}
  \mub^{d*}=\max_{\mu\in\Mcal^o_L(G)}f_d(\mub^d)
\end{align}
with a local marginal polytope $\Mcal^o_{L}(G)$, 
\begin{equation}
\label{eq:M0}
\notag
\left\{\hspace{-0.03in}\boldsymbol{\mu}^d\hspace{-0.03in}
\left|
\begin{matrix}
\sum_{x_i}\mu_i(x_i)=1, \forall i \in \Vcal, x_i \\
\sum\limits_{\xb_{f/\{i\}}}\hspace{-0.03in}\mu_{f}(\xb_f)\hspace{-0.03in}
=\hspace{-0.03in}\mu_{i }(x_i),\forall f\in\Fcal, i\in 
f,x_i\\
\sum\limits_{\xb_{f/e}}\hspace{-0.03in}\mu_{f}(\xb_f)\hspace{-0.03in}
=\hspace{-0.03in}\mu_ {e}(\xb_e),\hspace{-0.03in} \forall 
f,e\hspace{-0.03in}\in\hspace{-0.03in}\Fcal,e\hspace{-0.03in}\subset\hspace{
-0.03in } f, \xb_e,|e|\hspace{-0.03in}=\hspace{-0.03in}2, 
|f|\hspace{-0.03in}\ge\hspace{-0.03in}3\\
\boldsymbol{\mu}^d\geqslant 0
\end{matrix}
\right.\hspace{-0.03in}
\right\}
\end{equation}
They added cycle inequalities to tighten the 
problem. Reducing the primal feasible set may reduce the maximum primal objective, which reduces the minimum dual objective. They showed that finding the ``tightest'' cycles, which maximise the decrease in the dual objective, is NP-hard. Thus, instead, they looked for the most ``frustrated'' cycles, which correspond to the cycles with the smallest LHS of their cycle inequalities. Searching for ``frustrated'' cycles, adding the cycles' inequalities and updating the dual is repeated until the algorithm converges. 
\section{A Unified View of MAP LP Relaxations}
In different LP relaxations, not only the formulations of the objective, but also the dimension of primal variable may vary, which makes 
comparison difficult.
By way of illustration, note that the primal variable in GMPLP is $\mub^g=(\mu_c(\xb_c),\mu_s(\xb_s))_{c\in\Ccal,s\in\Ical}$, while in Dual Decomposition the primal variable is $\mub^d=\{\mu_i(x_i),\mu_f(\xb_f)\}_{i\in \Vcal, f\in\Fcal}$. Although $\mub^d$ can be 
reformulated to
$(\mu_c(\xb_c))_{c\in\Ccal}$ if $\Ccal = \{\{i\}|i\in\Vcal\} \cup 
\Fcal$, the variables $\{\mu_s(\xb_s)\}_{s\in\Ical}$ (corresponding to 
intersections) in GMPLP still do not appear in Dual Decomposition. This shows 
that the dimensions of the primal variables in GMPLP and Dual Decomposition are 
different.

\subsection{A Unified Formulation}
When using the local marginal polytope the objective of the LP relaxation depends only 
on those $\mu_c(\xb_c),c\in\Ccal$. We thus reformulate the LP Relaxation into a 
unified formulation as follows:
\begin{align}
  \displaystyle{\mub^{*} = 
\mathop{\mathrm{argmax}}_{\mub\in\Mcal_L(G,\Ccal',\Scal(\Ccal'))}\sum_{c\in\Ccal
} \sum_ { \xb_c}\mu_c(\xb_c)\theta_c(\xb_c)}\label{eq:LPR},
\end{align}
where $\mub$ is defined in \eq{def:mub}. The local marginal polytope, 
$\Mcal_L(G,\Ccal',\Scal(\Ccal'))$, can be defined in a unified formulation as
\begin{align}
\left\{
  \mub\left|
    \begin{array}{l}
      \mu_c(\xb_c) \geqslant 0, \forall c \in \Ccal',\xb_c\\
      \sum\limits_{\xb_c}\mu_c(\xb_c) = 1, \forall c \in\Ccal'\\
      \sum\limits_{\xb_{c\setminus s}}\mu_c(\xb_c) = \mu_s(\xb_s),\forall c \in \Ccal', s\in\Scal(c),\xb_s
    \end{array}
    \right.
    \right\}.\label{eq:ML}
\end{align}
Here  $\Ccal'$ is what we call an extended cluster set, where each $c\in\Ccal'$ 
is called an extended cluster.
$\Scal(\Ccal')=(\Scal(c))_{c\in\Ccal'}$, where
each $s\in\Scal(c)$ is a subset of $c$, which we refer to as a \textit{sub-cluster}.
The choices of $\Ccal'$ and $\Scal(\Ccal')$ correspond to existing or even new 
inference algorithms, which will be shown later, and when specifying  
$\Ccal'$ and $\Scal(\Ccal')$,  we require $\Ccal'\cup 
(\mathop{\cup}_{c\in\Ccal'}\Scal(c))\supseteq \Ccal$.   

The first two groups of constraints in $\Mcal_L(G,\Ccal',\Scal(\Ccal'))$ 
ensure that $\forall c\in\Ccal'$, $\mu_c(\xb_c)$ is a distribution over 
Vals($\xb_c$). We refer to the third group of constraints as \textit{local 
marginalisation constraints}. 

\paragraph{Remarks} The LP formulation in (1) and (2) of
\citep{Werner2010} may look similar to ours. However, the work of
\citep{Werner2010} is in fact a special case of ours. In their
work, an additional restriction  $s \subset c$ for \eq{eq:ML} must be satisfied (see (4) in
\citep{Werner2010}). As a result, their work does not cover the LP
relaxations in  \cite{SontagEtAl_uai08} and GMPLP, where
redundant constraints like $\mu_c(\xb_c)=\mu_c(\xb_c)$ are used to
derive a message from one cluster to itself (see Figure 1 of
\cite{SontagEtAl_uai08}). Our approach, however, is in fact a
generalisation of \cite{SontagEtAl_uai08}, GMPLP and \cite{Werner2010}.
\subsection{Reformulating GMPLP and Dual Decomposition }
Here we show that both GMPLP and Dual Decomposition can be reformulated by \eq{eq:LPR}. 

Let us start with GMPLP first. 
Let $\Ccal'$ be $\Ccal$ and 
$\Scal(\Ccal')$ be $\Scal_g(\Ccal)=(\Scal_g(c))_{c\in\Ccal}$. GMPLP 
\eq{eq:OBJGMPLP} can be reformulated as follows
\begin{align}
  \displaystyle{\mub^{*} = 
\mathop{\mathrm{argmax}}_{\mub\in\Mcal_L(G,\Ccal,\Scal_g(\Ccal))}\sum_{c\in\Ccal
} \sum_ {\xb_c}\mu_c(\xb_c)\theta_c(\xb_c)}, \label{eq:GMPLPref}
\end{align}
where $\Mcal_L(G,\Ccal,\Scal_g(\Ccal))$ is defined as 
  \begin{align}
    \left\{
      \mub\left|
        \begin{array}{l}
          \mu_c(\xb_c) \geqslant 0, \forall c \in \Ccal,\xb_c\\
          \sum\limits_{\xb_c}\mu_c(\xb_c) = 1, \forall c \in\Ccal\\
          \sum\limits_{\xb_{c\setminus s}}\mu_c(\xb_c) = \mu_s(\xb_s),\forall c \in \Ccal, s\in\Scal_g(c),\xb_s
        \end{array}
      \right.
    \right\}\label{eq:MLGrefrom}.
  \end{align}
We can see that \eq{eq:MLGrefrom} and \eq{eq:MLGOriginal} only differ in the dimensions of their variables $\mub$ and $\mub^g$ (see \eq{def:mub} and \eq{eq:mu_g}). Since the objectives in \eq{eq:GMPLPref} and \eq{eq:OBJGMPLP} do not depend on $\mu_s(\xb_s),s\in\Ical$ directly, the solutions of the two optimisation problems \eq{eq:GMPLPref} and \eq{eq:OBJGMPLP} are the same on $\mub$.

For Dual Decomposition, we let $\Ccal_d = \{\{i\}|i\in\Vcal\} \cup \Fcal$, and 

\begin{align}
  \Scal_d(c)=\left\{\begin{array}{ll}
      \emptyset, &~|c| = 1\\
      \{\{i\}|i\in\Ccal\} & ~|c|>1
\end{array}\right..
\end{align} 
Let $\Ccal'$ be $\Ccal_d$ and 
$\Scal(\Ccal')$ be $\Scal_d(\Ccal_d)=(\Scal_d(c))_{c\in\Ccal_d}$. Dual 
Decomposition \eq{eq:OBJDD} can be reformulated as 
  \begin{align}
     \displaystyle{\mub^{*} = 
\mathop{\mathrm{argmax}}_{\mub\in\Mcal_L(G,\Ccal_d,\Scal_d(\Ccal_d))}\sum_{
c\in\Ccal
} \sum_ {\xb_c}\mu_c(\xb_c)\theta_c(\xb_c)}, \label{eq:DDref}
  \end{align}
where $\Mcal_L(G,\Ccal_d,\Scal_d(\Ccal_d))$ is defined as
\begin{align}
  \left\{ 
    \mub\left| 
      \begin{array}{l}
        \mu_c(\xb_c)\geqslant 0,~\forall c\in\Ccal_d,\xb_c\\
        \sum\limits_{\xb_c}\mu_c(\xb_c)=1,~\forall c\in\Ccal_d\\
        \sum\limits_{\xb_{c\setminus s}}\mu_c(\xb_c)=\mu_s(\xb_s),~\forall 
c\in\Ccal_d,s\in\Scal_d(c),\xb_s
      \end{array}
    \right.\hspace{-3mm}
  \right\}.\label{eq:MLDref}
\end{align}

Similarly, for Dual Decomposition with cycle inequalities in \eq{eq:cycle}, we 
define $\Scal_o(c)$ as follows
\begin{align}
  \Scal_o(c)=\left\{\begin{array}{l}
      \emptyset, ~|c|=1\\
      \{\{i,j\}|\{i,j\}\subset c\} ~|c|=3\\
      \{\{i\}\} ~|c|>1,|c|\neq 3
    \end{array}\right..
\end{align}
Let $\Ccal'$ be $\Ccal_d$ and 
$\Scal(\Ccal')$ be $\Scal_o(\Ccal_d)=(\Scal_o(c))_{c\in\Ccal_d}$, we 
reformulate 
the problem in \eq{eq:cycle} as
\begin{align}
  \displaystyle{\mub^{*} = 
\mathop{\mathrm{argmax}}_{\mub\in\Mcal_L(G,\Ccal_d,\Scal_o(\Ccal_d))}\sum_{
c\in\Ccal
} \sum_ {\xb_c}\mu_c(\xb_c)\theta_c(\xb_c)}.
\end{align}

\subsection{Generalised Dual Decomposition}
Note that $\Mcal_L^d(G)$ and $\Mcal_L^o(G)$ in Dual Decomposition are looser than $\Mcal_L^g(G)$. This suggests that for some $\boldsymbol{\theta}$ 
Dual Decomposition may achieve a 
lower quality
solution or slower convergence (in terms of number of 
iterations) than GMPLP \footnote{\scriptsize This does not contradict the 
result reported in \cite{SontagChoeLi_uai12}, where Dual Decomposition with cycle inequalities converges faster in terms 
of running time than GMPLP. In \cite{SontagChoeLi_uai12}, on all their datasets $\Mcal_L^o(G)=\Mcal_L^g(G)$ as 
the order of clusters are at most 3. Dual Decomposition with cycle inequalities runs faster  because it has a
better cluster pursuit strategy. On datasets with higher order potentials, it may have worse performance than GMPLP.}. 
We show using the unified formulation of LP Relaxation in \eq{eq:LPR}, Dual 
Decomposition can be derived on arbitrary local marginal polytopes (including 
those tighter than $\Mcal_L^g(G)$, $\Mcal_L^d(G)$ and $\Mcal_L^o(G)$). We refer 
to this 
new type of Dual Decomposition as Generalised Dual Decomposition (GDD), which forms 
a basic framework for more efficient algorithms to be presented in Section 
\ref{sec:algs}.

\subsubsection{GDD Message Passing}\label{subsubsec:tddmp}
Let $\lambda_{c\rightarrow s}(\xb_s)$ be the Lagrangian multipliers (dual 
variables) corresponding to the local marginalisation constraints 
$\sum_{\xb_{c\setminus s}}\mu_{c}(\xb_c)=\mu_s(\xb_s)$ for each $c\in\Ccal',s\in 
\Scal(c),\xb_s$. Define
\begin{align}
  \Tcal=\Ccal'\cup[\mathop{\cup}_{\hat{c}\in\Ccal'}\Scal(\hat{c})]\label{eq:tdefs},
\end{align}
and the following variables $\forall t\in\Tcal, \xb_t$:
\begin{subequations}
\begin{align}
&\begin{array}{ll}\hat{\theta}_t(\xb_t)&=\mathbbm{1}
(t\in\Ccal)\theta_t(\xb_t)\end{array},\label{eq:hthetadef}\\ 
&\begin{array}{ll}\gamma_t(\xb_t)&=\mathbbm{1}(t\in\Ccal')\sum\limits_{\hat{s} 
\in\Scal(t)\setminus \{t\}}\lambda_{t\rightarrow 
\hat{s}}(\xb_{\hat{s}})\end{array},\label{def:gamma}\\ 
&\begin{array}{ll}\lambda_t(\xb_t)&=\sum\limits_{c\in\{c'|c'\in \Ccal',
t\in\Scal(c')\setminus \{c'\}\} } 
\lambda_{c\rightarrow t}(\xb_t)\end{array}\label{def:lambda},\\
  &\begin{array}{ll}b_t(\xb_t)&= \hat{\theta}_t(\xb_t)+\lambda_t(\xb_t)-\gamma_t(\xb_t)\end{array},\label{eq:bcdef}
\end{align}\label{eq:GDDdefs}
\end{subequations}
where $\mathbbm{1}(S)$ is the indicator function, which is equal to 1 if the 
statement $S$ 
is true and 0 otherwise. Define 
$\lambdab=(\lambda_{c\rightarrow s}(\xb_s))_{c\in\Ccal',s\in\Scal(c)}$, we have
the dual problem  (see derivation in Section \ref{sec:der_tdd_mp} of the
supplementary).

\begin{align}
g(\boldsymbol{\lambda})=& \max\limits_{\begin{matrix} \forall t\in 
\Tcal,\xb_t, \mub_t(\xb_t) \geqslant 0,\\
\sum_{\xb_t}\mu_t(\xb_t)=1\end{matrix}}\bigg[\sum\limits_{c\in\mathcal{C}}
\sum\limits_{\xb_c}\mu_{c}(\xb_c)\theta_{c}(\xb_c)+\notag \\
 & \sum\limits_{c\in\mathcal{C}'}\sum\limits_{s\in \Scal(c)}
\sum\limits_{\xb_{s}}\bigg(\mu_s(\xb_s)-\sum_{\xb_{c\setminus s}}\mu_{c}(\xb_c)\bigg)\lambda_{c\rightarrow s}(\xb_s)\bigg]\notag\\
=& \sum_{t\in\Tcal}\max\limits_{\xb_t}b_t(\xb_t).\label{eq:DualNotArrange}
\end{align}
 %

 \noindent In \eq{eq:DualNotArrange}, if  $c\in\Scal(c)$ for some $c\in\Ccal'$, the variable $\lambda_{c\rightarrow c}(\xb_c)$ will always be cancelled out\footnote{\scriptsize In dual objective other than \eq{eq:DualNotArrange}, $\lambda_{c\rightarrow c}(\xb_c)$ 
 may not be cancelled out (\eg the dual objective used in GMPLP).}. As a result, $\lambda_{c\rightarrow c}(\xb_c)$ can be set to arbitrary value.
To optimise \eq{eq:DualNotArrange}, we use coordinate descent. 
For any $c\in\Ccal'$ fixing all 
$\lambda_{c'\rightarrow s}(\xb_s),c'\in\Ccal',s\in\Scal(c')$ except $\lambdab_{c,\Scal(c)}=(\lambda_{c\rightarrow 
s}(\xb_s))_{s\in\Scal(c)\setminus \{c\}}$ yeilds a sub-optimisation problem, 
\begin{align}
  &\begin{array}{c}
\argmin\limits_{\lambdab_{c,\Scal(c)}}g_c(\lambdab_{c,\Scal(c)})\end{array},
\notag\\
&\begin{array}{l}g_c(\lambdab_{c,\Scal(c)})\hspace{-0.02in}=\hspace{-0.02in}\bigg[\hspace{-0.03in}
\max\limits_
{\xb_c}\hspace{-0.03in}\big[\hat{\theta}_c(\xb_c)\hspace{-0.02in}-\hspace{-0.07in}
    \sum\limits_{s\in\Scal(c)\setminus\{c\}}\hspace{-0.03in}\lambda_{c\rightarrow 
s}(\xb_s)\hspace{-0.02in}+\hspace{-0.02in}\lambda_{c}(\xb_c)\big]+\end{array}\notag\\ 
&\begin{matrix}\sum\limits_{s\in\Scal(c)\setminus\{c\}}\hspace{-0.05in}\max\limits_{\xb_s}\hspace{-0.03in}\big[\hat
{ \theta} _s(\xb_s)\hspace{-0.01in}-\hspace{-0.01in}\gamma_{
s}(\xb_s)\hspace{-0.01in}+\hspace{-0.01in}\lambda_{s}^{-c}(\xb_s)\hspace{-0.01in}+\hspace{-0.01in}\lambda_{c\rightarrow s}(\xb_s)\hspace{-0.01in}\big]\hspace{-0.03in}\bigg]
  \end{matrix}\hspace{-0.03in},\hspace{-0.11in}\label{eq:subopt}
\end{align}

\noindent where $\forall 
s\in\Scal(c)\setminus \{c\},\xb_s$ 
\begin{align}
\hspace{-0.1in}\lambda_s^{-c}(\xb_s)&\hspace{-0.03in}=\hspace{-0.03in}\sum_{
\hat{c}\in\{ c'|c'\in\Ccal,c'\neq 
c,s\in\Scal(c')\setminus \{c'\}\}}\hspace{-0.07in}\lambda_{\hat{c}\rightarrow 
s}(\xb_s).\label{
eq:lambdasminusc}
\end{align}

\noindent A solution is provided in the proposition below.
\begin{proposition}
\label{prop:solution}
$\forall s\in\Scal(c)\setminus \{c\}, \xb_s$, let 
\begin{align}
     &\lambda_{c\rightarrow 
s}^*(\xb_s)=-\hat{\theta}_s(\xb_s)+\gamma_s(\xb_s)-\lambda_s^{-c}
(\xb_s)\notag\\
 &+\frac{1}{|\Scal(c)\setminus\{c\}|}\max_{\xb_{c\setminus 
s}}\bigg[\hat{\theta}_c(\xb_c)+\lambda_c(\xb_c)+\notag\\
 &\sum_{\hat{s}\in\Scal(c)\setminus \{c\}}\left(\hat{\theta}_{\hat{s}}(\xb_{\hat{s}}
)-\gamma_{\hat{s}}(\xb_{\hat{s}})+\lambda_{\hat{s}}^{-c}(\xb_{\hat{s}}
)\right)\bigg],
 \label{eqn:MSGUPD}
\end{align}
then $\lambdab_{c,\Scal(c)}^{*}=(\lambda_{c\rightarrow 
s}^{*}(\xb_s))_{s\in\Scal(c)\setminus \{c\}}$
 is a solution of \eq{eq:subopt}.
\end{proposition}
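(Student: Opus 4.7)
The plan is to establish a universal lower bound on $g_c(\lambdab_{c,\Scal(c)})$ that does not depend on $\lambdab_{c,\Scal(c)}$, and then verify the stated $\lambdab^*_{c,\Scal(c)}$ attains this bound. First I would abbreviate $A(\xb_c) := \hat\theta_c(\xb_c) + \lambda_c(\xb_c)$ and, for each $s \in \Scal(c)\setminus\{c\}$, $B_s(\xb_s) := \hat\theta_s(\xb_s) - \gamma_s(\xb_s) + \lambda_s^{-c}(\xb_s)$. A routine inspection of \eq{eq:GDDdefs} and \eq{eq:lambdasminusc} shows that neither $A$ nor any $B_s$ depends on the variables $\lambda_{c\to s}$ being optimised: in $\lambda_c(\xb_c)$ the sum is over $c' \neq c$; $\gamma_s(\xb_s)$ aggregates $\lambda_{s\to\hat s}$ (outgoing from $s$, not $c$); and $\lambda_s^{-c}$ explicitly excludes $c$. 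The substitution $\nu_s(\xb_s) := B_s(\xb_s) + \lambda_{c\to s}(\xb_s)$ then rewrites the sub-problem as minimising
\[
g_c = \max_{\xb_c}\Bigl[C(\xb_c) - \sum_{s\in\Scal(c)\setminus\{c\}} \nu_s(\xb_s)\Bigr] + \sum_{s\in\Scal(c)\setminus\{c\}} \max_{\xb_s}\nu_s(\xb_s),
\]
where $C(\xb_c) := A(\xb_c) + \sum_{\hat s\in\Scal(c)\setminus\{c\}} B_{\hat s}(\xb_{\hat s})$ is independent of $\lambdab_{c,\Scal(c)}$.

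For the lower bound I would argue that, for any $\xb_c$ with $s$-restriction $\xb_s$, each $\max_{\xb_s'}\nu_s(\xb_s') \ge \nu_s(\xb_s)$, so the sum of the two groups of maxima dominates $C(\xb_c) - \sum_s \nu_s(\xb_s) + \sum_s \nu_s(\xb_s) = C(\xb_c)$. Taking the maximum over $\xb_c$ yields the $\lambdab$-free bound $g_c \ge M := \max_{\xb_c} C(\xb_c)$.

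It then remains to check that the candidate in \eq{eqn:MSGUPD} achieves this bound. Setting $k := |\Scal(c)\setminus\{c\}|$ and $h_s(\xb_s) := \max_{\xb_{c\setminus s}} C(\xb_c)$, the candidate gives $\nu^*_s(\xb_s) = h_s(\xb_s)/k$. Since $\max_{\xb_s} h_s(\xb_s) = M$, we get $\sum_s \max_{\xb_s}\nu^*_s(\xb_s) = k \cdot (M/k) = M$. For the first term, $h_s(\xb_s) \ge C(\xb_c)$ for every $\xb_c$ extending $\xb_s$, so $\sum_s h_s(\xb_s) \ge k\,C(\xb_c)$, giving $C(\xb_c) - \sum_s \nu^*_s(\xb_s) \le 0$, with equality at any maximiser $\xb_c^\star$ of $C$ (there $h_s(\xb_s^\star) = M = C(\xb_c^\star)$ for every $s$). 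Hence the first max equals $0$ and $g_c(\lambdab^*_{c,\Scal(c)}) = M$, matching the lower bound.

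The main obstacle is purely notational: one must carefully track which quantities $\gamma_s$, $\lambda_c$ and $\lambda_s^{-c}$ do and do not involve the variables in $\lambdab_{c,\Scal(c)}$, so that the block coordinate sub-problem is genuinely a function of $\lambdab_{c,\Scal(c)}$ alone. Once that book-keeping is settled, the remainder is the familiar ``star-update'' identity behind MPLP/GMPLP coordinate descent, applied symmetrically across all sub-clusters $s\in\Scal(c)\setminus\{c\}$; the edge case $k=0$ is vacuous.
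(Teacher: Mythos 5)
Your proof is correct and follows essentially the same route the paper takes: you isolate the $\lambdab_{c,\Scal(c)}$-independent quantity $C(\xb_c)=A(\xb_c)+\sum_s B_s(\xb_s)$ (which is exactly $b_c(\xb_c)+\sum_s b_s(\xb_s)$ in the paper's belief notation, as used in Propositions~\ref{propos:DualDecrease} and~\ref{propos:bp}), establish the lower bound $g_c\geqslant\max_{\xb_c}C(\xb_c)$ via the cancellation of $\sum_s\nu_s$, and verify that the equal-split update attains it. The book-keeping that $\lambda_c$, $\gamma_s$, $\lambda_s^{-c}$ do not involve $\lambdab_{c,\Scal(c)}$ is exactly the needed detail and you handled it correctly.
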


The derivation of \eq{eq:DualNotArrange} and \eq{eq:subopt}, and the proof of 
Proposition \ref{prop:solution} are provided in Section \ref{sec:der_tdd_mp} in 
the supplementary material. The
$b_t(\xb_t)$ are often referred to as beliefs, and $\lambda_{c\rightarrow 
s}(\xb_s)$ messages (see \cite{globerson2007fixing,SontagEtAl_uai08}). 
In \eq{eqn:MSGUPD}, $\lambda_c(\xb_c)$ and $\gamma_s(\xb_s)$, 
$\lambda_s^{-c}(\xb_s), \forall s\in\Scal(c)\setminus\{c\}$ are known, and they do not depend on $\lambda_{c\rightarrow s}(\xb_s)$.
We summarise the message updating procedure in Algorithm \ref{algo:MSGUPGDD}. 
Dual Decomposition can be seen as a special case of GDD with a specific 
local marginal polytope $\Mcal_L(G,\Ccal,\Scal_d(\Ccal))$ in 
\eq{eq:MLDref}. 


\paragraph{Decoding} The beliefs $b_t(\xb_t), t\in\Tcal$ are computed via \eq{eq:bcdef} to evaluate the dual objective 
and decode an integer solution of the original MAP problem. For a $g(\boldsymbol{\lambda})$ obtained via GDD 
based message passing, we find $\xb^*$ (so called decoding) via 
\begin{align}
\xb_t^*\in\mathop{\mathrm{argmax}}_{\xb_t}b_t(\xb_t),\forall t \in \Tcal.
\label{eq:decode_t}
\end{align}

\noindent Here we use $\in$ instead of $=$ is because there may be multiple maximisers.
In fact, if a node $i \in \Vcal$ is also an extended cluster or sub-cluster 
(\ie 
$\exists t\in \Tcal$, s.t. $t=\{i\}$), 
then we perform more efficient decoding via
\begin{align}
x_i^*\in\mathop{\mathrm{argmax}}_{x_i}b_i(x_i).
\label{eq:decode}
\end{align}

\noindent Further discussion on decoding is deferred to Proposition 
\ref{propos:LocalConsistency} and Section \ref{subsec:stealth}. 
\setlength{\intextsep}{0pt}

\begin{algorithm}[!t]
  \scriptsize{
\caption{{GDD Message Passing}}
\label{algo:MSGUPGDD}
 \SetKwInOut{Input}{input}\SetKwInOut{Output}{output}

 \Input{ $G=(\Vcal,\Ccal)$, $\Mcal_L(G,\Ccal',\Scal(\Ccal'))$, $\Ccal'$, 
$\Scal(\Ccal')$, 
$\boldsymbol{\theta}$, $T_{g}$, $K_{max}$}
 \Output{$\boldsymbol{\lambda} = (\lambda_{c\rightarrow 
s}(\xb_s))_{c\in\Ccal',s\in\Scal(c)\setminus \{c\}}$}
 \BlankLine
 $k=0$,  $g^{0}(\boldsymbol{\lambda})=+\infty$, $\lambdab = \mathbf{0}$\;
 \Repeat{$|g^{k}(\boldsymbol{\lambda})-g^{k-1}(\boldsymbol{\lambda})|<T_g$ or 
$k> K_{max}$}{
   $k=k+1$\;
   \For{$c\in\mathcal{C}'$ }{
     Compute $\lambdab_{c,\Scal(c)}^{*}$ using \eq{eqn:MSGUPD}; update     $\lambdab_{c,\Scal(c)}=\lambdab_{c,\Scal(c)}^{*}$\;
   }
   Computing $\bb$ using \eq{eq:GDDdefs}\;
   $g^{k}(\boldsymbol{\lambda}) = \sum_{t\in \Tcal}\max_{\xb_t}b_{t}(\xb_t)$\;
   
   \tcc{By Proposition \ref{propos:converge}, 
$g(\boldsymbol{\lambda})$ always converges.}
 }}
\end{algorithm}

\subsubsection{Convergence and Decoding Consistency}\label{subsec:converge}
In this part we analyse the convergence and decoding consistency of GDD message passing. 

GDD essentially iterates over $c\in\Ccal'$, and updates the messages via 
\eq{eqn:MSGUPD}. The dual decrease defined below  
\begin{align}
    d(c) = &g_c(\lambdab_{c,\Scal(c)})-g_c(\lambdab^*_{c,\Scal(c)})
    \label{eq:DualDecreaseDef}
  \end{align}
  plays a role in the analysis of GDD. 


\begin{proposition}[Dual Decrease]\label{propos:DualDecrease}
 For any $c\in\Ccal'$, the dual decrease
  \begin{align}
    d(c)  =&\max_{\xb_c}b_c(\xb_c)+\sum_{s\in\Scal(c)\setminus\{c\}}\max_{\xb_s}b_{s}(\xb_s) 
\notag\\
  &- 
\max_{\xb_c}\bigg[b_c(\xb_c)+\sum_{s\in\Scal(c)\setminus\{c\}}b_{s}(\xb_s)\bigg]\geqslant 
0.\label{eq:DualDecrease}
  \end{align}
\end{proposition}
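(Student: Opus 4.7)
\textbf{Proof plan for Proposition \ref{propos:DualDecrease}.} The plan is to evaluate the one-block objective $g_c$ at the current dual point and at the updated dual point from Proposition \ref{prop:solution}, express both in terms of the beliefs $b_t$ via \eq{eq:GDDdefs}, and then finish by the elementary inequality $\sum \max \geq \max \sum$.

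First I would rewrite $g_c(\lambdab_{c,\Scal(c)})$ (at the \emph{pre-update} $\lambdab$) in terms of the current beliefs. Because $c \in \Ccal'$, the definition $\gamma_c(\xb_c)=\sum_{s\in\Scal(c)\setminus\{c\}}\lambda_{c\rightarrow s}(\xb_s)$ turns the first bracket of \eq{eq:subopt} into $\hat\theta_c(\xb_c)-\gamma_c(\xb_c)+\lambda_c(\xb_c)=b_c(\xb_c)$. For each $s\in\Scal(c)\setminus\{c\}$, the decomposition $\lambda_s(\xb_s)=\lambda_s^{-c}(\xb_s)+\lambda_{c\rightarrow s}(\xb_s)$ of \eq{def:lambda} and \eq{eq:lambdasminusc} turns the second bracket into $\hat\theta_s(\xb_s)-\gamma_s(\xb_s)+\lambda_s(\xb_s)=b_s(\xb_s)$. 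Hence
\begin{align*}
g_c(\lambdab_{c,\Scal(c)})=\max_{\xb_c}b_c(\xb_c)+\sum_{s\in\Scal(c)\setminus\{c\}}\max_{\xb_s}b_s(\xb_s),
\end{align*}
which is the first summand in the statement.

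Next I would evaluate $g_c$ at the optimiser $\lambdab^{*}_{c,\Scal(c)}$ given by \eq{eqn:MSGUPD}. Writing $K=|\Scal(c)\setminus\{c\}|$ and $A(\xb_c)=b_c(\xb_c)+\sum_{s\in\Scal(c)\setminus\{c\}}b_s(\xb_s)$, the identity $\hat\theta_s-\gamma_s+\lambda_s^{-c}=b_s-\lambda_{c\rightarrow s}$ makes the second-bracket update cancel cleanly: the $s$-th inner expression reduces to $\tfrac{1}{K}\max_{\xb_{c\setminus s}}A(\xb_c)$, so its maximum over $\xb_s$ is $\tfrac{1}{K}\max_{\xb_c}A(\xb_c)$, and summing over $s$ yields $\max_{\xb_c}A(\xb_c)$. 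For the first bracket, substituting $\sum_s\lambda^{*}_{c\rightarrow s}(\xb_s)$ and using the same cancellations gives
\begin{align*}
\hat\theta_c(\xb_c)-\sum_{s}\lambda^{*}_{c\rightarrow s}(\xb_s)+\lambda_c(\xb_c)=A(\xb_c)-\frac{1}{K}\sum_{s\in\Scal(c)\setminus\{c\}}\max_{\xb_{c\setminus s}}A(\xb_c).
\end{align*}
This last expression has maximum $0$: each $\max_{\xb_{c\setminus s}}A(\xb_c)\geq A(\xb_c)$ so the quantity is $\leq 0$, and choosing $\xb_c$ to be a joint maximiser of $A$ makes every $\max_{\xb_{c\setminus s}}A$ equal to $\max_{\xb_c}A=A(\xb_c)$, attaining $0$. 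Therefore $g_c(\lambdab^{*}_{c,\Scal(c)})=\max_{\xb_c}\bigl[b_c(\xb_c)+\sum_{s\in\Scal(c)\setminus\{c\}}b_s(\xb_s)\bigr]$.

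Finally I would subtract to get exactly the formula claimed for $d(c)$ in \eq{eq:DualDecrease}, and invoke the elementary subadditivity $\sum_{s}\max_{\xb_s} f_s(\xb_s)\geq\max_{\xb_c}\sum_{s}f_s(\xb_s)$ (applied with $f_c=b_c$ and $f_s=b_s$, since each $b_s$ depends only on $\xb_s\subseteq\xb_c$) to conclude $d(c)\geq 0$. I expect the main obstacle to be the second step: verifying that the post-update first bracket contributes exactly $0$ requires carefully tracking the factor $1/K$ and the telescoping at a joint maximiser of $A$; the rest is bookkeeping with the definitions in \eq{eq:GDDdefs}.
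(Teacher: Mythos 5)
Your proof is correct. You evaluate $g_c$ at the current dual point and at the optimiser $\lambdab^*_{c,\Scal(c)}$ from Proposition~\ref{prop:solution}, rewrite both in terms of the beliefs $b_t$ using \eq{eq:GDDdefs} and the split $\lambda_s=\lambda_s^{-c}+\lambda_{c\to s}$, show the post-update first bracket has maximum $0$ by a telescoping argument at a joint maximiser of $A$, and close with $\sum_s\max_{\xb_s}f_s\geqslant\max_{\xb_c}\sum_s f_s$ --- which is essentially the same route as the paper's supplementary proof, including the key observation that $A(\xb_c)=b_c(\xb_c)+\sum_{s\in\Scal(c)\setminus\{c\}}b_s(\xb_s)$ reproduces the bracketed expression in \eq{eqn:MSGUPD} because the $\lambda_{c\to s}$ terms cancel.
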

The proof is provided in Section \ref{sec:der_dual_decrease} of the supplementary.
A natural question is whether GDD is convergent, which is answered by the following proposition. 
\begin{proposition} [Convergence]
  \label{propos:converge}
GDD always converges.
\end{proposition}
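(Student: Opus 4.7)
The plan is to prove convergence by showing that the sequence $g^k(\boldsymbol{\lambda})$ generated by Algorithm \ref{algo:MSGUPGDD} is both monotonically non-increasing and bounded below, hence convergent by the monotone convergence theorem for real sequences.

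First I would invoke Proposition \ref{propos:DualDecrease} directly: each inner iteration picks some $c\in\Ccal'$ and replaces $\lambdab_{c,\Scal(c)}$ with $\lambdab_{c,\Scal(c)}^{*}$, which by \eq{eq:DualDecreaseDef} and \eq{eq:DualDecrease} yields $d(c)=g_c(\lambdab_{c,\Scal(c)})-g_c(\lambdab_{c,\Scal(c)}^{*})\geq 0$. Since $g(\boldsymbol{\lambda})$ differs from $g_c(\lambdab_{c,\Scal(c)})$ only by terms that do not depend on $\lambdab_{c,\Scal(c)}$, the change in $g(\boldsymbol{\lambda})$ across one inner update equals $-d(c)\leq 0$. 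Summing over the full sweep of $c\in\Ccal'$ in an outer iteration gives $g^{k}(\boldsymbol{\lambda})\leq g^{k-1}(\boldsymbol{\lambda})$, establishing monotonicity.

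Next I would establish a lower bound by weak LP duality. The primal problem \eq{eq:LPR} is a maximisation of a linear objective $\sum_{c\in\Ccal}\sum_{\xb_c}\mu_c(\xb_c)\theta_c(\xb_c)$ over the local marginal polytope $\Mcal_L(G,\Ccal',\Scal(\Ccal'))$, which is non-empty (e.g.\ the uniform distributions satisfy all constraints in \eq{eq:ML}) and bounded (each $\mu_c(\xb_c)\in[0,1]$). Hence the primal optimum $p^{*}$ is finite. Since $g(\boldsymbol{\lambda})$ was obtained as the standard Lagrangian dual of this LP (as derived in \eq{eq:DualNotArrange}), weak duality gives $g(\boldsymbol{\lambda})\geq p^{*}>-\infty$ for every $\boldsymbol{\lambda}$.

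Combining the two facts, $\{g^{k}(\boldsymbol{\lambda})\}_{k\geq 0}$ is a non-increasing sequence bounded below, so it converges to a limit $g^{\infty}\geq p^{*}$. In particular the stopping criterion $|g^{k}(\boldsymbol{\lambda})-g^{k-1}(\boldsymbol{\lambda})|<T_{g}$ in Algorithm \ref{algo:MSGUPGDD} is eventually satisfied for any $T_{g}>0$.

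The main potential obstacle is that convergence of the dual objective values does not automatically imply convergence of the dual iterates $\boldsymbol{\lambda}$ themselves, since coordinate descent on a non-smooth piecewise-linear function may only guarantee objective-value convergence rather than iterate convergence or convergence to a global minimiser. However, since the proposition as stated only asserts convergence of GDD (in the sense that the algorithm terminates/stabilises in objective value), the argument above suffices; questions about the quality of the limit point and decoding consistency are handled separately by the subsequent propositions.
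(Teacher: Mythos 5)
Your proposal is correct and takes essentially the same approach as the paper: monotonicity of $g^k(\boldsymbol{\lambda})$ via Proposition~\ref{propos:DualDecrease}, plus a lower bound from weak LP duality, and then the monotone convergence theorem. The only cosmetic difference is the lower bound itself: the paper bounds $g(\boldsymbol{\lambda})$ below by the integer MAP value $\max_{\xb}\sum_{c\in\Ccal}\theta_c(\xb_c)$, whereas you bound it by the LP-relaxation optimum $p^{*}$ after verifying the primal feasible set is non-empty and bounded; either works, and your extra remark that objective-value convergence need not imply iterate convergence is accurate and consistent with the paper's scope.
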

\begin{proof}
  According to duality and LP relaxation, we have for arbitrary 
$\boldsymbol{\lambda}$, 
  \begin{align}
g(\boldsymbol{\lambda})\geqslant\max_{\xb}\sum_{c\in\Ccal}
\theta_c(\xb_c).
  \end{align}
 By Proposition \ref{propos:DualDecrease} in each single step of coordinate descent,
the dual decrease $d(c)$ is non-negative. Thus GDD message passing produces a monotonically decreasing sequence of 
$g(\boldsymbol{\lambda})$.  Since the sequence has a lower bound, the sequence 
must converge.
\end{proof}

Note that Proposition \ref{propos:converge} does not guarantee 
$g(\boldsymbol{\lambda})$ reaches the limit in finite steps in GDD (GMPLP and 
Dual Decomposition have the same issue). However, in practice we observe that
GDD often converges in finite steps. 
The following proposition in part explains why the decoding in \eq{eq:decode} is 
reasonable.
\begin{proposition} [Decoding Consistency]
\label{propos:LocalConsistency}
If GDD reaches a fixed point in finite steps, then  $\forall c\in \Ccal', s \in \Scal(c)\setminus\{c\}$, there 
exist $\hat{\xb}_c \in \mathop{\mathrm{argmax}}_{\xb_c}b_c(\xb_c)$, and 
$\bar{\xb}_s \in \mathop{\mathrm{argmax}}_{\xb_s}b_s(\xb_s)$, 
$\mathrm{s.t.}~\hat{\xb}_s=\bar{\xb}_s.$
\end{proposition}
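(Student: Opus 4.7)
My plan is to reduce the statement to the zero--dual--decrease condition at a fixed point, and then exploit that $s\subseteq c$ whenever $s\in\Scal(c)$ so that a maximiser of $b_c$ automatically restricts to a candidate for $\xb_s$.

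First I would argue that at a fixed point of GDD, the dual decrease $d(c)$ is zero for every $c\in\Ccal'$. Indeed, by the definition of fixed point, applying the coordinate update \eq{eqn:MSGUPD} on block $c$ leaves $\lambdab_{c,\Scal(c)}$ unchanged, so $g_c(\lambdab_{c,\Scal(c)}) = g_c(\lambdab_{c,\Scal(c)}^{*})$ and hence $d(c)=0$ by \eq{eq:DualDecreaseDef}. Substituting into the closed form provided by Proposition~\ref{propos:DualDecrease} yields
\begin{align*}
\max_{\xb_c}b_c(\xb_c)+\sum_{s\in\Scal(c)\setminus\{c\}}\max_{\xb_s}b_{s}(\xb_s)
= \max_{\xb_c}\Bigl[b_c(\xb_c)+\sum_{s\in\Scal(c)\setminus\{c\}}b_{s}(\xb_s)\Bigr],
\end{align*}
where on the right-hand side each $\xb_s$ is understood as the restriction $[x_i]_{i\in s}$ of the single optimisation variable $\xb_c$, which is legitimate because $\Scal(c)$ consists of subsets of $c$.

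Next I would pick any maximiser $\hat{\xb}_c$ of the right-hand side and let $\hat{\xb}_s$ denote its restriction to $s$ for each $s\in\Scal(c)\setminus\{c\}$. Because each summand satisfies the trivial bound $b_c(\hat{\xb}_c)\le \max_{\xb_c}b_c(\xb_c)$ and $b_s(\hat{\xb}_s)\le \max_{\xb_s}b_s(\xb_s)$, the equality above forces every inequality to be tight simultaneously. Hence $\hat{\xb}_c\in\argmax_{\xb_c} b_c(\xb_c)$ and, setting $\bar{\xb}_s:=\hat{\xb}_s$, also $\bar{\xb}_s\in\argmax_{\xb_s} b_s(\xb_s)$. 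By construction $\hat{\xb}_s=\bar{\xb}_s$, which is exactly the claim.

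The only mildly delicate point is the transition from ``fixed point'' to $d(c)=0$: I would make sure that fixedness of $\lambdab$ really implies $g_c$ coincides at $\lambdab_{c,\Scal(c)}$ and $\lambdab_{c,\Scal(c)}^{*}$ (rather than merely attaining the same dual value up to a different minimiser). Once this is settled the rest is a tight--inequality argument and requires no further machinery; in particular, convergence of the sequence (Proposition~\ref{propos:converge}) is only used implicitly to make sense of ``fixed point''.
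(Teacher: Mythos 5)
Your argument takes essentially the same route as the paper: establish $d(c)=0$ at a fixed point, invoke the closed form of the dual decrease from Proposition~\ref{propos:DualDecrease} to obtain the equality of the separated and joint maxima, and then read off the conclusion. The paper stops at the equality and declares the proof complete, leaving the tight-inequality step implicit; you make that last step explicit, which is a genuine clarification but not a different approach.
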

\begin{proof}
If GDD reaches a fixed point, $\forall 
c\in\Ccal'$, 
$d(c) = 0$  (see \eq{eq:DualDecrease} ). Otherwise a non-zero dual decrease 
means GDD would not stop.  Thus $\forall c\in \Ccal'$,
\begin{align}
  & \max_{\xb_c}b_c(\xb_c)+\sum_{s\in\Scal(c)\setminus\{c\}}\max_{\xb_s}b_{s}(\xb_s) =\notag\\
&\max_{\xb_c}\bigg[b_c(\xb_c)+\sum_{s\in\Scal(c)\setminus\{c\}}b_{s}(\xb_s)\bigg],\vspace{
-0.05in}
\end{align}
which completes the proof.
\end{proof}
\IncMargin{1em}
\begin{algorithm}[!t]
\caption{{Belief Propagation Without Messages}}
\label{algo:MSGUPD}
\scriptsize{
 \SetKwInOut{Input}{input}\SetKwInOut{Output}{output}

 \Input{ $G=(\Vcal,\Ccal)$, $\Mcal_L(G,\Ccal',\Scal(\Ccal'))$, $\Ccal'$, 
$\Scal(\Ccal')$, 
$\boldsymbol{\theta}$, $T_g$, $K_{max}$,  }
 \Output{ $\bb=(b_t(\xb_t))_{t\in \Tcal}$;}
 \BlankLine
 $\bb=(\hat{\theta}_t(\xb_t))_{t\in \Tcal}$, $k=0$, $g^{0}(\boldsymbol{\lambda})=+\infty$\;
 \Repeat{$|g^{k}(\boldsymbol{\lambda})-g^{k-1}(\boldsymbol{\lambda})|<T_g$ or 
$k\geqslant K_{max}$}{
   $k=k+1$\;
   \For{$c\in\mathcal{C}'$ }{
     Compute $\bb_{c,\Scal(c)}^*$ using \eq{eq:bupdating}; update    $\bb_{c,\Scal(c)}=\bb_{c,\Scal(c)}^*$\;
   }
   $g^{k}(\boldsymbol{\lambda}) = \sum_{t\in 
\mathcal{T}}\max_{\xb_t}b_{t}(\xb_t)$\;
   
   \tcc{By Proposition \ref{propos:converge}, 
$g(\boldsymbol{\lambda})$ always converges.}
 }
}
\end{algorithm}\DecMargin{1em}
Proposition \ref{propos:LocalConsistency} essentially states that there exist two 
maximisers that agree on $\xb_s$. This in part justifies decoding via 
\eq{eq:decode} (\eg $s$ is a node). Further discussion on decoding is
provided in Section \ref{subsec:stealth}. 

It's obvious that the solution of GDD is exact, if the gap between the dual 
objective and the decoded primal objective is zero. Here we show 
that the other requirements for the exact solution also hold.

\begin{proposition} \label{propos:exact}
If there exists $\xb$ that maximises  $b_t(\xb_t),\forall t\in \Tcal$, the solution of GDD is 
exact.
\end{proposition}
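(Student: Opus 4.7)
My plan is to use weak LP duality together with a direct computation that shows the dual objective evaluated at a consistent maximiser collapses to the original MAP value. Since exactness amounts to closing the duality gap, I only need one inequality in each direction, and weak duality already gives one of them; the work is in producing a primal assignment whose objective matches the current dual value $g(\boldsymbol{\lambda})$.

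First, let $\bar{\xb}$ denote a joint assignment such that $\bar{\xb}_t \in \mathop{\mathrm{argmax}}_{\xb_t} b_t(\xb_t)$ for every $t \in \Tcal$, which exists by hypothesis. Then
\begin{align*}
g(\boldsymbol{\lambda}) = \sum_{t\in\Tcal}\max_{\xb_t}b_t(\xb_t) = \sum_{t\in\Tcal} b_t(\bar{\xb}_t) = \sum_{t\in\Tcal}\bigl[\hat{\theta}_t(\bar{\xb}_t)+\lambda_t(\bar{\xb}_t)-\gamma_t(\bar{\xb}_t)\bigr],
\end{align*}
using the definition of $b_t$ in \eq{eq:bcdef}.

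Next, the key step is to check that the $\lambda$ and $\gamma$ contributions cancel. By \eq{eq:hthetadef} and the requirement $\Ccal' \cup (\cup_{c \in \Ccal'}\Scal(c)) \supseteq \Ccal$, we have $\sum_{t\in\Tcal}\hat{\theta}_t(\bar{\xb}_t) = \sum_{c\in\Ccal}\theta_c(\bar{\xb}_c)$. From \eq{def:lambda}, summing $\lambda_t(\bar{\xb}_t)$ over $t \in \Tcal$ enumerates every pair $(c,s)$ with $c \in \Ccal'$ and $s \in \Scal(c)\setminus\{c\}$ exactly once, contributing $\lambda_{c\rightarrow s}(\bar{\xb}_s)$. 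From \eq{def:gamma}, summing $\gamma_t(\bar{\xb}_t)$ over $t \in \Tcal$ with the indicator $\mathbbm{1}(t\in\Ccal')$ does the same, so
\begin{align*}
\sum_{t\in\Tcal}\lambda_t(\bar{\xb}_t) \;=\; \sum_{c\in\Ccal'}\sum_{s\in\Scal(c)\setminus\{c\}}\lambda_{c\rightarrow s}(\bar{\xb}_s) \;=\; \sum_{t\in\Tcal}\gamma_t(\bar{\xb}_t).
\end{align*}
Hence $g(\boldsymbol{\lambda}) = \sum_{c\in\Ccal}\theta_c(\bar{\xb}_c)$.

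Finally, I invoke weak duality, which was already used in the proof of Proposition~\ref{propos:converge}: for any dual-feasible $\boldsymbol{\lambda}$, $g(\boldsymbol{\lambda}) \geqslant \max_{\xb}\sum_{c\in\Ccal}\theta_c(\xb_c) \geqslant \sum_{c\in\Ccal}\theta_c(\bar{\xb}_c)$. Combined with the equality above, every inequality is tight, so $\bar{\xb}$ attains the MAP optimum and the duality gap is zero, establishing exactness. The main obstacle is really just the bookkeeping in the cancellation step, i.e.\ re-indexing the sum $\sum_t \lambda_t(\bar{\xb}_t)$ over directed pairs $(c\to s)$ so it matches $\sum_t \gamma_t(\bar{\xb}_t)$; the rest is a short duality argument.
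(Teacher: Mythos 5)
Your proof is correct and follows what is essentially the standard argument for this type of result (the same one Sontag, Globerson, and Jaakkola use, which the paper cites as the precursor): evaluate the dual bound at the consistent maximiser $\bar{\xb}$, observe that the reparameterization terms $\lambda_t$ and $\gamma_t$ enumerate the same set of message terms $\lambda_{c\rightarrow s}(\bar{\xb}_s)$ and so cancel, and then close the gap with weak duality. The bookkeeping in the cancellation step is handled carefully (in particular, both $\lambda_t$ and $\gamma_t$ already exclude $c\to c$ self-messages, and the restrictions of a single joint $\bar{\xb}$ are automatically consistent on overlaps), so the argument is sound as written.
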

The proof is provided in Section \ref{sec:proof_propos_exact} of 
the supplementary. 

Proposition 4 and 5 generalise the results of Section 1.7 in \citep{SonGloJaa_optbook}. 


\subsection{Belief Propagation Without Messages}\label{subsec:tddbp}
GDD involves updating many messages (\eg  $\lambda_c(\xb_c)$ 
and 
$\gamma_s(x_s), \lambda_c^{-s}(x_s),\forall c \in\Ccal', s\in \Scal(c)\setminus\{c\}$). These messages are 
then used to compute the beliefs $b_t(\xb_t), \forall t \in \Tcal$ (see 
\eq{eq:bcdef}). Here we show that we can directly update the beliefs without 
computing and storing messages.

When optimising \eq{eq:subopt}, $b_c(\xb_c)$ 
and $b_s(\xb_s),s\in\Scal(c)\setminus\{c\}$ are determined by 
$\lambdab_{c,\Scal(c)}(\xb_c)$ (see \eq{eq:basfunctionlambda} in 
supplementary). Thus let
$\bb_{c,\Scal(c)}=(b_c(\xb_c),b_s(\xb_s))_{s\in\Scal(c)\setminus\{c\}}$ be the 
beliefs determined by $\lambdab_{c,\Scal(c)}$, and 
$\bb_{c,\Scal(c)}^*$ be the beliefs determined by 
$\lambdab_{c,\Scal(c)}^{*}$. 
We have the 
following proposition.
\begin{proposition}\label{propos:bp}
When optimising \eq{eq:subopt}, the beliefs 
$\bb_{c,\Scal(c)}^*$  can be computed from a
$\bb_{c,\Scal(c)}$ determined by arbitrary $\lambdab_{c,\Scal(c)}$ as 
following:
 \begin{align}
 b_s^{*}(\xb_s)\hspace{-0.01in}\hspace{-0.03in}  =& 
\frac{\hspace{-0.02in}\max\limits_{\xb_{c\setminus 
s}}\hspace{-0.04in}\bigg[\hspace{-0.02in}b_c(\xb_c)\hspace{-0.02in}+\hspace{-0.02in}\sum_{\hat{s
}\in 
\Scal(c)\setminus\{c\}}\hspace{-0.02in}b_{\hat{s}}(\xb_{\hat{s}})\hspace{-0.02in}\bigg]}{|\Scal(c)\setminus\{c\}|}\hspace{-0.02in},
\forall s\hspace{-0.03in}\in \hspace{-0.03in}
\Scal(c)\hspace{-0.03in}\setminus\hspace{-0.03in}\{\hspace{-0.01in}c\hspace{-0.01in}\}\hspace{-0.02in},\hspace{-0.02in}\xb_s\hspace{-0.07in}\notag\\
b_c^{*}(\xb_c)\hspace{-0.01in} =& b_c(\xb_c)\hspace{-0.02in}+\hspace{-0.07in}\sum_{\hat{s}\in 
\Scal(c)\setminus\{c\}}\hspace{-0.07in}b_{\hat{s}}(\xb_{\hat{s}}) - 
\hspace{-0.07in}\sum_{\hat{s}\in\Scal(c)\setminus\{c\}}\hspace{-0.07in}b^*_{\hat{s}}(\xb_{\hat{s}}),\forall 
\xb_c.\hspace{-0.05in} \label{eq:bupdating}
 \end{align}
\end{proposition}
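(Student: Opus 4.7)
The plan is to express everything in terms of the beliefs and then observe that the messages cancel out. I will work from the definitions in \eqref{eq:GDDdefs} and the update rule \eqref{eqn:MSGUPD}.

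First, I would rewrite $b_c$ and $b_s$ to isolate the dependence on $\lambdab_{c,\Scal(c)}$. From \eqref{eq:bcdef} and \eqref{def:gamma}, for $s \in \Scal(c) \setminus \{c\}$ I have
\[
b_s(\xb_s) = \hat{\theta}_s(\xb_s) - \gamma_s(\xb_s) + \lambda_s^{-c}(\xb_s) + \lambda_{c\rightarrow s}(\xb_s),
\]
and
\[
b_c(\xb_c) = \hat{\theta}_c(\xb_c) + \lambda_c(\xb_c) - \sum_{\hat{s}\in\Scal(c)\setminus\{c\}}\lambda_{c\rightarrow \hat{s}}(\xb_{\hat{s}}),
\]
since $\gamma_c(\xb_c)$ is precisely the sum on the right. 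Importantly, $\lambda_c$, $\lambda_s^{-c}$, $\gamma_s$, $\hat{\theta}_s$, $\hat{\theta}_c$ do \emph{not} depend on $\lambdab_{c,\Scal(c)}$, so the updated beliefs $b_s^{*}, b_c^{*}$ satisfy the same identities with $\lambda_{c\rightarrow s}$ replaced by $\lambda_{c\rightarrow s}^{*}$. Subtracting gives the two key ``delta'' relations
\[
b_s^{*}(\xb_s) - b_s(\xb_s) = \lambda_{c\rightarrow s}^{*}(\xb_s) - \lambda_{c\rightarrow s}(\xb_s),
\]
\[
b_c^{*}(\xb_c) - b_c(\xb_c) = -\sum_{\hat{s}\in\Scal(c)\setminus\{c\}}\bigl(\lambda_{c\rightarrow \hat{s}}^{*}(\xb_{\hat{s}}) - \lambda_{c\rightarrow \hat{s}}(\xb_{\hat{s}})\bigr),
\]
which I will use at the end to derive the formula for $b_c^{*}$ once $b_s^{*}$ is known.

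Next I would substitute the closed form \eqref{eqn:MSGUPD} for $\lambda_{c\rightarrow s}^{*}$ into the expression for $b_s^{*}$. The terms $\hat{\theta}_s(\xb_s)$, $-\gamma_s(\xb_s)$, $\lambda_s^{-c}(\xb_s)$ appearing outside the max cancel against the matching terms in $b_s^{*}$'s definition, leaving
\[
b_s^{*}(\xb_s) = \frac{1}{|\Scal(c)\setminus\{c\}|}\max_{\xb_{c\setminus s}}\!\bigg[\hat{\theta}_c(\xb_c) + \lambda_c(\xb_c) + \!\!\sum_{\hat{s}\in\Scal(c)\setminus\{c\}}\!\!\bigl(\hat{\theta}_{\hat{s}}(\xb_{\hat{s}}) - \gamma_{\hat{s}}(\xb_{\hat{s}}) + \lambda_{\hat{s}}^{-c}(\xb_{\hat{s}})\bigr)\bigg].
\]
The crucial algebraic step is then to rewrite the bracketed expression in terms of the current beliefs. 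Using the formulas above, $\hat{\theta}_c(\xb_c) + \lambda_c(\xb_c) = b_c(\xb_c) + \sum_{\hat{s}}\lambda_{c\rightarrow \hat{s}}(\xb_{\hat{s}})$ and $\hat{\theta}_{\hat{s}}(\xb_{\hat{s}}) - \gamma_{\hat{s}}(\xb_{\hat{s}}) + \lambda_{\hat{s}}^{-c}(\xb_{\hat{s}}) = b_{\hat{s}}(\xb_{\hat{s}}) - \lambda_{c\rightarrow \hat{s}}(\xb_{\hat{s}})$, so the $\lambda_{c\rightarrow \hat{s}}$ terms cancel exactly. This yields the claimed first equation for $b_s^{*}$.

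Finally, I would plug the resulting $b_s^{*}$ back into the second delta relation. Summing $b_s^{*}(\xb_s) - b_s(\xb_s) = \lambda_{c\rightarrow s}^{*}(\xb_s) - \lambda_{c\rightarrow s}(\xb_s)$ over $s\in\Scal(c)\setminus\{c\}$ and substituting into the delta relation for $b_c$ immediately gives the second formula. The only step that requires real care is the cancellation in the bracket; the rest is bookkeeping. I anticipate the main obstacle to be tracking the sign conventions and the precise definitions of $\gamma_s$, $\lambda_s^{-c}$, and $\lambda_c$ (none of which contain $\lambda_{c\rightarrow s}$ for $s\in\Scal(c)\setminus\{c\}$), so that the cancellations are justified.
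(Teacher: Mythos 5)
Your proof is correct and takes essentially the same route the paper indicates: express $b_s$ and $b_c$ as affine functions of $\lambdab_{c,\Scal(c)}$, substitute the closed-form update \eqref{eqn:MSGUPD} for $\lambda^*_{c\to s}$, and observe that the $\lambda_{c\to\hat{s}}$ terms cancel so the bracketed maximand becomes $b_c(\xb_c)+\sum_{\hat{s}}b_{\hat{s}}(\xb_{\hat{s}})$. The two ``delta'' identities you isolate (that $b_s^*-b_s=\lambda^*_{c\to s}-\lambda_{c\to s}$ and that $b_c^*-b_c$ is minus the sum of these increments) are exactly the cancellation the paper references around \eqref{eq:bsbssum}, and your final step for $b_c^*$ is the standard bookkeeping.
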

The proof is provided in Section \ref{sec:der_BP} of the supplementary.

The reason that $\bb_{c,\Scal(c)}^*$ can be computed using  
$\bb_{c,\Scal(c)}$ from an arbitrary $\lambdab_{c,\Scal(c)}$, is because 
$\lambda_{c\rightarrow s}(\xb_s), s\in\Scal(c)\setminus\{c\}$ is cancelled out (\eg see 
\eq{eq:bsbssum} in supplementary) 
during the calculation. 
In the first iteration, beliefs are initialised via 
$b_t(\xb_t)=\hat{\theta}_t(\xb_t),\forall t\in\Tcal $, and then we can
use $\bb_{c,\Scal(c)}$ from previous iterations to compute $\bb^*_{c,\Scal(c)}$ without messages and the potentials.

\paragraph{Memory Conservation} For message updating based
methods such as GDD message passing in Algorithm \ref{algo:MSGUPGDD},
Max-Sum Diffusion \citep{kovalevsky1975diffusion,Werner2010}, GMPLP\citep{globerson2007fixing} and etc,
both messages and potentials need to be stored in order to compute new
messages (see \eqref{eqn:MSGUPD} for example). However, the proposed
belief propagation without messages (summarised in Algorithm
\ref{algo:MSGUPD}) only needs to store beliefs. The beliefs can simple take the space of potentials (\ie initialisation), and then update.
For a graph $G$, we assume that each node takes $k = |$  Vals$(X_i)|$
states,  and let $\Mcal_L(G,\Ccal',\Scal(\Ccal'))$ be a local marginal polytope with specific $\Ccal'$ and
$\Scal(\Ccal')$. Then we need the following space to store beliefs,
potentials and messages: 
\begin{subequations}\label{eq:mem_b_p_m}
  \begin{align}
    \text{Mem}_{beliefs}& = \sum_{t\in\Tcal}k^{|t|},\label{eq:mem_b}\\
    \text{Mem}_{potentials}& = \sum_{c\in\Ccal}k^{|c|},\\
    \text{Mem}_{messages}& =
    \sum_{c\in\Ccal}\sum_{s\in\Scal(c)\setminus \{c\}}k^{|s|}.
  \end{align}
\end{subequations}
Recall the definition of $\Tcal$ in \eqref{eq:tdefs}, it is easy to
show that 
\begin{align}
  \text{Mem}_{beliefs} =& \sum_{c\in\Ccal}k^{|c|} +
  \sum_{s\in\Tcal\setminus \Ccal}k^{|s|}\notag\\
  = &  \sum_{c\in\Ccal}k^{|c|} +
  \sum_{s\in(\Ccal\cup (\cup_{c\in\Ccal}\Scal(c))\setminus
    \Ccal}k^{|s|}\notag\\
  \leqslant & \sum_{c\in\Ccal}k^{|c|} + \sum_{s\in
    \cup_{c\in\Ccal}\Scal(c)}k^{|s|}\notag\\
  \leqslant &  \sum_{c\in\Ccal}k^{|c|} + \sum_{c\in\Ccal}\sum_{s\in\Scal(c)\setminus \{c\}}k^{|s|}\notag\\
  = &\text{Mem}_{potentials} + \text{Mem}_{messages}.
\end{align}
This means that belief propagation without messages always uses less memory than message passing. If we choose $\Tcal = \Ccal$, and $\Scal(t) =
\{t'|t'\subset t\}$, the memory for message passing has a simpler form
\begin{align}
  \label{eq:mem_msgupd}
  \text{Mem}_{potentials} + \text{Mem}_{messages} &=
  \sum_{t\in\Tcal}k^{|t|} + \sum_{t\in\Tcal}\sum_{s \subset
    t}k^{|s|},\notag\\
  & = \sum_{t\in\Tcal}k^{|t|} +
  \sum_{t\in\Tcal}\big[(1+k)^{|t|}-k^{|t|} - 1\big]\notag\\
  & = \sum_{t\in\Tcal}\big[(1+k)^{|t|} - 1\big].
\end{align}
%
  Let $k=2$, $|t|=10$, we have
  $k^{|t|}=1024$, and $(k+1)^{|t|}-1=59048$, where message updating
  based methods uses approximately $59$ times memory as belief propagation without messages. 

\begin{figure}[t]
\centering
\includegraphics[width=40mm,height=30mm]{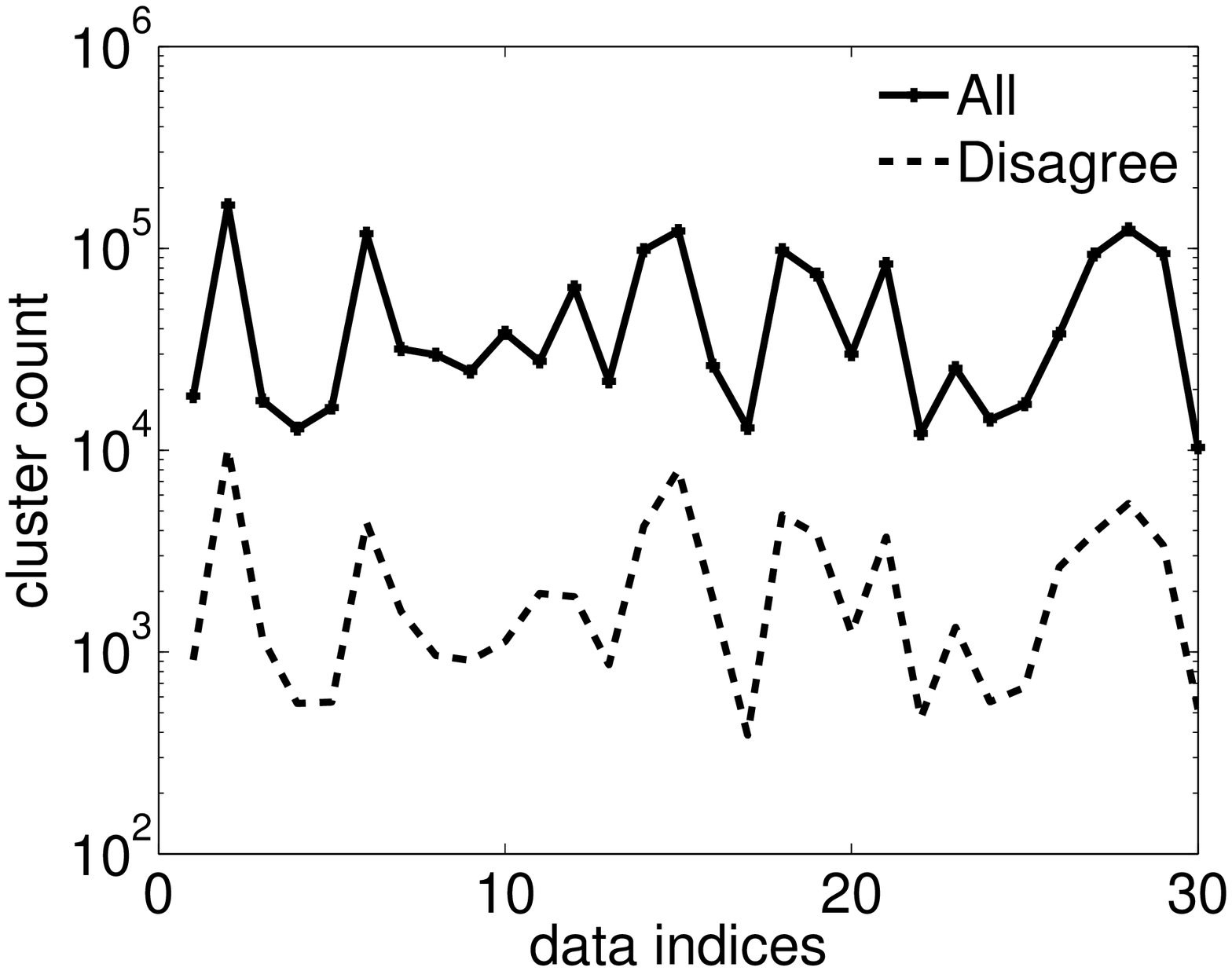}
\hspace{0.15in}
\includegraphics[width=40mm,height=30mm]{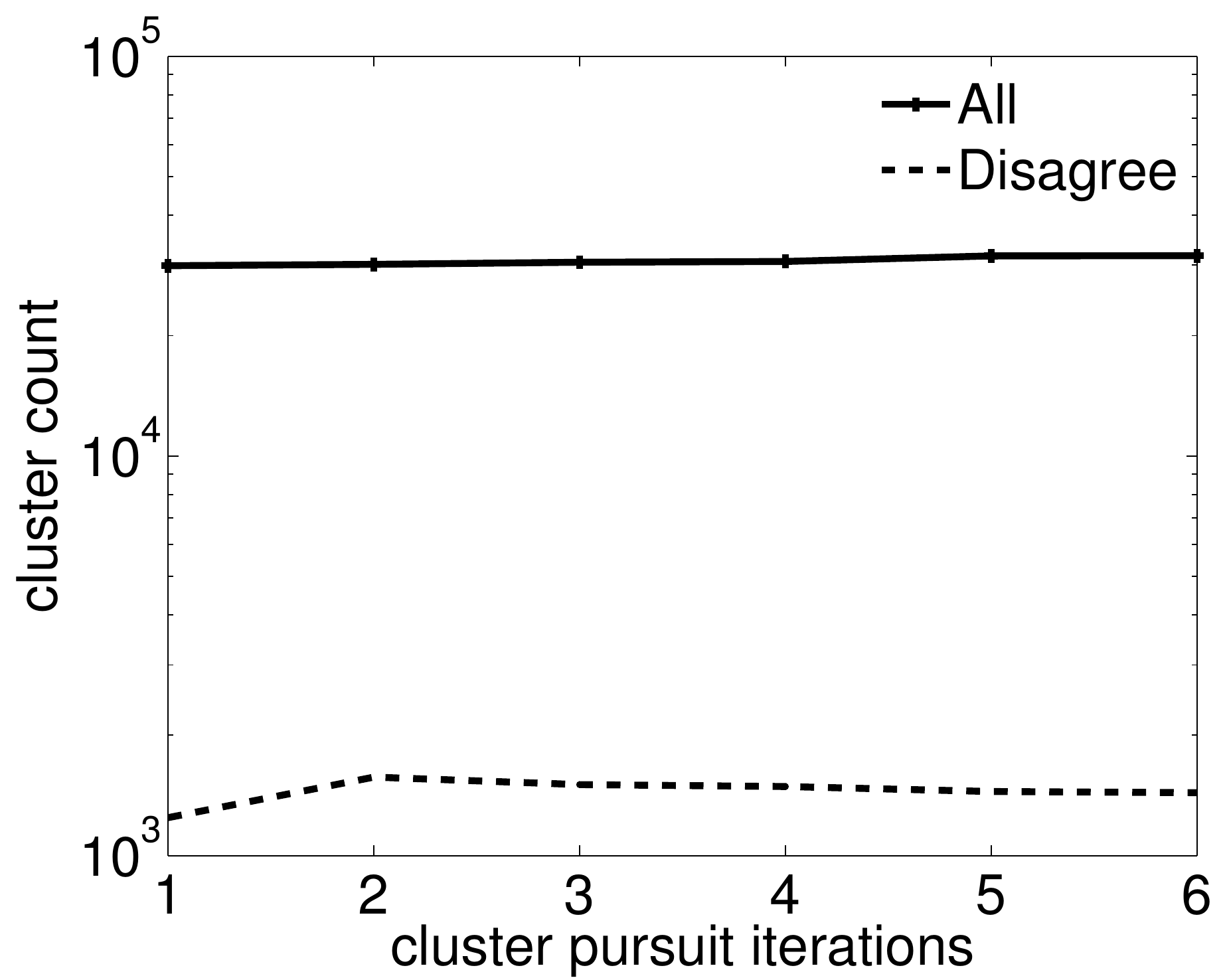}

\caption{{ ``stealth'' clusters (solid) v.s. disagreeing 
``stealth'' 
clusters (dashed) on dataset \textit{sidechain}. {\bf Left}: cluster pursuit for 
the first time for all problems (alphabetically sorted). {\bf Right}: multiple 
cluster pursuits on problem \textit{1qb7}.}}
\label{fig:cluaddcmp}
\end{figure}

\subsection{``Stealth'' Cluster Pursuit}\label{subsec:stealth}
If GDD does not find the exact solution, then there exists a gap
between the dual objective of GDD and the decoded primal
objective. Various approaches, including
\citep{komodakis2008beyond,Werner2008,SontagEtAl_uai08,batra2011tighter},
try to tighten the gap. 
These approaches typically involve two steps: 1) creating a dictionary of clusters, and then 2) search for a best cluster in the dictionary w.r.t. some score function. 
They typically use a \emph{fixed} dictionary of clusters. For example, \citet{SontagEtAl_uai08} consider the
dictionary as all possible triplets; \citet{Werner2008} considers
all order 4 cycles for grids type graphs. 

Dual decrease is a popular choice of the score function. For example, \citet{SontagEtAl_uai08}, brutal force search over the dictionary for the cluster with maximum dual decrease. \citet{batra2011tighter}
accelerate the process by computing primal dual gap for all clusters in the dictionary first (computing primal dual gap is much cheaper than computing dual decrease), and then only search over the clusters with non-zero primal dual gaps for the cluster with maximum dual decrease. 


In this section, we show a new cluster pursuit strategy, which
dynamically generates a dictionary of clusters instead of using a fixed one. The new cluster pursuit strategy is based on a special type of cluster which we call ``stealth'' clusters.
\begin{definition}[``Stealth'' cluster]
\label{def:stealth}
 $\forall c_1,c_2\in\Ccal'$, we say $c=c_1\cup c_2$ is a ``stealth'' cluster
 if
\begin{enumerate} 
\item $\exists s\in\Tcal$ s.t. $s\in\Scal(c_1)\setminus \{c_1\}, s\in\Scal(c_2)\setminus \{c_2\}$, and
\item $\nexists \hat{c}\in\Ccal'$ s.t. $c_1\in\Scal(\hat{c})\setminus\{\hat{c}\}, c_2\in\Scal(\hat{c})\setminus\{\hat{c}\}$.
\end{enumerate} 
\end{definition}

Proposition \ref{propos:LocalConsistency} essentially says there exist two 
maximisers that agree on $\xb_s$. There is however a situation where the decoding from different clusters may 
disagree. Let us consider two clusters $c_1,c_2$ in Definition \ref{def:stealth}. According to Proposition 
\ref{propos:LocalConsistency} there exists a maximiser $\xb_s'$ of 
$b_{s}(\xb_s)$ that agrees with $b_{c_1}(\xb_{c_1})$, and a maximiser $\xb_s''$ 
of $b_{s}(\xb_s)$ that agrees with $b_{c_2}(\xb_{c_2})$. However, if  
$b_s(\xb_s)$ has multiple maximisers, the maximisers $\xb_s'$ and $\xb_s''$ 
could be different. This means  $b_{c_1}(\xb_{c_1})$ and $b_{c_2}(\xb_{c_2})$ 
may disagree on their overlap. Adding $c_1\cup c_2$ into $\Ccal'$ with 
$\Scal(c_1\cup c_2)=\{s|s\in\Tcal, s\subset(c_1\cup c_2) \}$ at least one 
maximiser of $b_{c_1}(\xb_{c_1})$ and $b_{c_2}(\xb_{c_2})$ will become the same 
according to Proposition \ref{propos:LocalConsistency}. This
observation yields a strategy to
dynamically generate dictionary of clusters to tighten the relaxation.

With similar derivation as in \cite{SontagEtAl_uai08}, adding a new ``stealth'' cluster $c=c_1\cup c_2$ with 
$\Scal(c)=\{s|s\in\Tcal,s\subset c\}$, the dual decrease after one message 
updating for 
 $c$ is
\begin{align}
d_1(c) =& g_c(\lambdab_{c,\Scal(c)})-g_c(\lambdab^*_{c,\Scal(c)})\notag\\
 =&\sum_{s\in\mathcal{S}(c)\setminus\{c\}}\max_{\xb_s}b_s(\xb_s)\hspace{-0.5mm}-\hspace{-0.5mm}\max_{\xb_c}\sum_{s\in 
\mathcal{S}(c)\setminus\{c\}}b_s(\xb_s)\label{eqn:decreasepursuit}.
\end{align}
In practice, we add clusters that will lead to largest dual decrease. With these observations, a new cluster pursuit strategy, which we call ``stealth'' cluster
pursuit strategy, is summarised in Algorithm \ref{algo:CP}.
\begin{algorithm}[!t]
\scriptsize{
  \caption{{GDD with ``Stealth'' Cluster Pursuit}}
  \label{algo:CP}
  \SetKwInOut{Input}{input}\SetKwInOut{Output}{output}
  \Input{ $G=(\Vcal,\Ccal)$, $\Mcal_L(G,\Ccal',\Scal(\Ccal'))$, $\Ccal'$, 
$\Scal(\Ccal')$, 
$\boldsymbol{\theta}$,\\
threshold $T_{g}, T_{a}$, max iterations $K^1_{max}$ and $K^2_{max}$, \\
max running time$\mathscr{T}_{max}$, cluster addition size $n$ }
 \Output{$\xb^{*}$}
  $l=0$\;
  Initialise  $\boldsymbol{\lambda}^0 = \mathbf{0}$; 
$\bb^{0}:=(b^{0}_t(\xb_t))_{t\in\Tcal}=(\hat{\theta}_t(\xb_t))_{t\in\Tcal}$\;
  \Repeat{$|g(\boldsymbol{\lambda})-p(\xb^*|\boldsymbol{\theta})|\leqslant 
T_a$ or running time$>\mathscr{T}_{max}$}
  {
  $l=l+1$;   $\Tcal=\Ccal'\cup(\cup_{c\in\Ccal'}\Scal(c))$;   $\mathcal{P}=\emptyset$\;
  $K=\mathbbm{1}(l=1)K^1_{max}+\mathbbm{1}(l>1)K^2_{max}$\;
  \BlankLine
  Run Algo. \ref{algo:MSGUPGDD}    
$(\boldsymbol{\lambda}^{l},\bb^l)=$GDD$(G,\Mcal_L,
\Ccal', \Scal(\Ccal'),\boldsymbol{\theta}, T_{g}, K, 
\boldsymbol{\lambda}^{l-1})$\;
 or Algo. \ref{algo:MSGUPD}    
$\bb^l=$BP$(G,\Mcal_L,
\Ccal', \Scal,\boldsymbol{\theta}, T_{g}, K, \bb^{l-1})$\;
  \BlankLine
  
  \For{$t\in \Tcal$}{
     \For{$c_1,c_2\in\{c|c\in\Ccal', t\in\Scal(c)\setminus \{c\}\}$}{
    $\{\bar{\xb}_{c_1}\}=\mathop{\mathrm{argmax}}_{\xb_{c_1}}b^l_{c_1}(\xb_{c_1}) 
$\;
    $\{\hat{\xb}_{c_2}\}=
\mathop{\mathrm{argmax}}_{\xb_{c_1}}b^l_{c_2}(\xb_{c_2})$\;
    \If{$\nexists \bar{\xb}_{c_1},\hat{\xb}_{c_2}, ~\mathrm{s.t.}~ \hat{\xb}_t = \bar{\xb}_t 
$}{
      $\mathcal{P}=\mathcal{P}\cup\{c_1\cup c_2\}$\;
      $\Scal(c_1\cup c_2)=\{s|s\in\Tcal, s\subset(c_1\cup c_2) \}$\;
      Compute $d_1(c)$ according to \eq{eqn:decreasepursuit}\;
     }}
   }
   Add the $n$ clusters in $\mathcal{P}$ with largest $d_1(c)$ to  
$\mathcal{C}'$\;
   For all new added $c$, $\forall s\in\Scal(c)\setminus\{c\},\xb_s, \lambda_{c\rightarrow 
s}^{l}(\xb_s)=0$\;
  \BlankLine
   Decode $\xb^{*}$ using \eq{eq:decode_t} or \eq{eq:decode}\;
   $g(\boldsymbol{\lambda})=\sum_{t\in\Tcal}\max_{\xb_t}b_t(\xb_t)$\;
   }
  }
  
    
\end{algorithm}
In a nutshell, we search for disagreeing clusters which maximise the dual 
decrease. In the worst case, this can be slow if too many disagreeing 
``stealth'' clusters exist. However, in practice it is very fast. As we can see 
from Figure \ref{fig:cluaddcmp}, the number of disagreeing ``stealth'' clusters 
is far less (about $1\%$ to $10\%$) than the total number of ``stealth'' 
clusters, which leads to a 
significant speed up. More importantly, ``stealth'' cluster pursuit makes our 
 feasible set tighter than that of LP relaxation in \eq{eq:lpr} and GMPLP, 
which 
in turn are tighter than Dual Decomposition \cite{SonGloJaa_optbook}.

``Stealth'' cluster pursuit  may get bigger and bigger clusters which would become prohibitively expensive to solve. In our experiments, it's always computationally affordable. When it isn't, one can use low order terms to approximate $b_t$. Furthermore both the frustrated cycle search strategy in \cite{SontagChoeLi_uai12} and acceleration via evaluating primal dual gap first in \cite{batra2011tighter} are applicable to GDD as well.

\paragraph{Convergence and Consistency} ``Stealth'' cluster pursuit 
can be seen as adding new constraints only. No matter 
which clusters are added to $\Ccal'$, the dual decrease $d_1(c)$ is always 
no-negative. Thus GDD with ``stealth'' cluster pursuit still have the same 
convergence and consistency properties as original GDD presented in Section
\ref{subsec:converge}.


\section{Marginal Polytope Diagrams}

LP relaxation based message passings can be slow if there are too many constraints. This motivates us to 
seek ways of reducing the number of constraints to reduce computational 
complexity without sacrificing the quality of the solution. Here we first 
propose a new tool which we call marginal polytope diagrams. Then with this 
tool, we show how to reduce constraints without loosening the optimisation 
problem.  

\begin{definition}[Marginal Polytope Diagram]
  Given a graph $G=(\Vcal,\Ccal)$, $G^M=(\Vcal^M,\Ecal^{M})$ with node set 
$\Vcal^M$ and edge set $\Ecal^M$ is said to be a 
  \emph{marginal polytope diagram}
  of $G$ if 
  \begin{enumerate}
  \item $\Ccal\subseteq\Vcal^M\subseteq 2^{\Vcal}$ and
  \item a directed edge from $c$ to $s$ deonted by $(c\rightarrow s)$, belongs 
to $\Ecal^M$ only nodes if $c,s\in\Vcal^M,~s\subseteq c$.
  \end{enumerate}
\end{definition}
\paragraph{Remarks} Previous work in this vein includes
Region graphs \cite{yedidia2005constructing,koller2009probabilistic} and Hasse 
diagrams (a.k.a poset diagrams) 
\cite{pakzad2005estimation, wainwright2008graphical, mceliece2003belief}. 
What distinguishes marginal polytope diagrams, however, is the fact that
%
the receivers of an edge can be a subset of the senders, where 
Region graphs and Hasse diagrams require that the edge's receivers
must be a \emph{proper} subset of the senders. For example, the definition of
region graph in Page 419 of \citep{koller2009probabilistic}, requires that a receiver must be a \emph{proper} subset of a
sender in region graph. Likewise in Page 16 of \citep{yedidia2005constructing}, the authors state that a
region graph must be a directed acyclic graph, which means
that the edge's receivers
must be a \emph{proper} subset of the senders (otherwise there would be
a loop from a region to itself). 
This is particularly significant since some dual message passing algorithms (including GMPLP) send messages from a cluster $c$ to itself.
Hasse diagrams \cite{pakzad2005estimation} have a further restriction, which corresponds to a special case of  Marginal Polytope Diagram, where for arbitrary $v_1,v_2\in\Vcal^M,v_1\subset v_2$ edge 
$(v_1\rightarrow 
v_2)\in\Ecal^M$ if and only if $\nexists v_3\in\Vcal^M$, s.t. 
$v_2\subset v_3 \subset 
v_1$. 
 In some LP relaxation based message passing algorithms, some local marginalisation constraints from a 
cluster to itself 
are required, thus violating the proper subset requirement.
Both Hasse diagrams and Region graphs are 
inapplicable in this case.
For example, in Section 6 of 
\cite{globerson2007fixing}, GMPLP sends messages from one cluster to 
itself, which requires 
a local marginalisation constraint from one cluster to itself. In 
\cite{SontagEtAl_uai08}, the message $\lambda_{ij \to ij}$ (in their Figure 1) is from the 
edge $ij$ to itself, which requires a local marginalisation 
constraint from the edge to itself. The proposed 
marginal polytope diagram not only handles the above situations, but also provides 
a natural correspondence among marginal polytope diagram, local marginal 
polytope and MAP message passing.

\begin{figure}[!t]
\centering
 \includegraphics[width=0.5\textwidth]{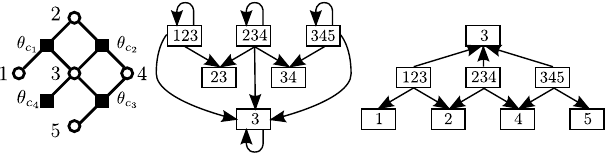} 
  \caption{{An example of marginal polytope diagram.  {\bf Left:} 
a factor graph 
$G=(\Vcal,\Ccal)$ with $\Vcal=\{1,2,3,4,5\}$ and 
$\Ccal=\{c_1,c_2,c_3,c_4\}$, where 
$c_1=\{1,2,3\},~c_2=\{2,3,4\}$,  $c_3=\{3,4,5\}$ and $c_4=\{3\}$. {\bf Middle:} 
a marginal 
polytope diagram associated with the local marginal polytope  
$\Mcal_L(G,\Ccal,\Scal_g(\Ccal))$ in GMPLP. {\bf Right:} a marginal polytope 
diagram 
associated with the local marginal polytope $\Mcal_L(G,\Ccal,\Scal_d(\Ccal))$ 
in dual 
decomposition. 
}}\label{fig:MPDExamples}

\end{figure}
In marginal polytope diagrams, we use rectangles to represent nodes (to differentiate from a graph of graphical models) similar to Hasse 
diagrams (see Section 4.2.1 in \cite{wainwright2008graphical}). An example is given in 
Figure \ref{fig:MPDExamples}. For 
the marginal polytope diagram associated with GMPLP (in Figure \ref{fig:MPDExamples} middle) has edges from a node to itself, which are not 
allowed in 
both Region graphs and Hasse diagrams. Also edges like 
$(\{2,3,4\}\rightarrow \{3\})$ are not allowed in Hasse diagrams. We choose to use the term diagram instead of graph in order to distinguish from graphs in graphical models.

\paragraph{From local marginal polytope to diagram} Given a graph 
$G=(\Vcal,\Ccal)$ and arbitrary local marginal polytope 
$\Mcal_L(G,\Ccal',\Scal(\Ccal'))$, 
the corresponding 
marginal polytope diagram
$G^M=(\Vcal^M,\Ecal^{M})$ can be constructed as:
\begin{enumerate}
\item $\Vcal^M=\Tcal$ ($\Tcal$ is defined in \eq{eq:tdefs}),
\item $\forall c\in\Ccal',$ if $s\in\Scal(c)$, then $(c\rightarrow 
s)\in\Ecal^{M}$.
\end{enumerate}

\paragraph{From diagram to local marginal polytope}
Given a graph $G=(\Vcal,\Ccal)$ and a marginal polytope diagram 
$G^M=(\Vcal^M,\Ecal^{M})$, the corresponding local marginal polytope  
$\Mcal_L(G,\Ccal',\Scal(\Ccal'))$ can be recovered as follows:
\begin{align}
  \label{eq:ReconstructML}
  \Ccal'&=\{c|c\in\Vcal^M,\exists (c\rightarrow s) \in \Ecal^{M}\},\notag\\
  \Scal(c) &= \{s|(c\rightarrow s)\in \Ecal^{M}\},\forall 
c\in\Ccal'.
\end{align}

\subsection{Equivalent Edges}\label{sec:eqedges}

\begin{definition}[Edge equivalence]\label{def:EQE}
For arbitrary $G=(\Vcal,\Ccal)$, and marginal polytope diagram 
$G^M=(\Vcal^M,\Ecal^M)$ of $G$,  $\forall c_1,c_2,t\in\Vcal^M,t\subseteq 
c_1, 
t\subseteq c_2$, given
   \begin{align}
     \Ucal = \big\{&\sum_{\xb_{c\setminus 
s}}\mu_c(\xb_c)=\mu_s(\xb_s), \forall (c\rightarrow s)\in\notag 
\\
     &(\Ecal^M\setminus\{(\hat{c}\rightarrow t)|\hat{c}\in\Vcal^M,t\subseteq 
\hat{c}\}),\xb_s\big\},\label{eq:Ucaldef}
   \end{align}
   edges $(c_1\rightarrow t)$ and $(c_2\rightarrow t)$ are said to be 
\emph{equivalent} w.r.t $G^M$ denoted by $(c_1\rightarrow t)\Leftrightarrow 
(c_2\rightarrow 
t)$, if the following holds:
   \begin{align}
     &\Big\{\mub|\Ucal\cup\{\sum_{\xb_{c_1\setminus 
t}}\mu_{c_1}(\xb_{c_1})=\mu_t(\xb_t),\forall \xb_t\}\Big\}\notag\\=&\Big\{\mub 
| 
\Ucal \cup \{\sum_{\xb_{c_2\setminus 
t}}\mu_{c_2}(\xb_{c_2})=\mu_t(\xb_t),\forall \xb_t\}\Big\}.
   \end{align}
\end{definition}Note that the 
definition of edge equivalence does not require $(c_1\rightarrow t)$ and 
$(c_2\rightarrow t)$ from $\Ecal^M$.
Checking whether two edges are equivalent via Definition 
\ref{def:EQE} might be inconvenient. In fact, edge equivalence can be read directly from a marginal polytope diagram. 
\begin{figure}
\centering
\begin{minipage}[t]{0.4\textwidth}\centering
\includegraphics[width=0.8\textwidth]{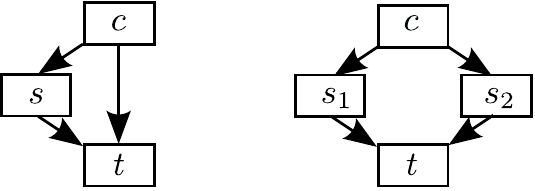}

 \caption{{Two types of edge equivalence in Proposition 
\ref{propos:eq_edge}. {\bf Left:} Type 1 $(c\rightarrow t)\Leftrightarrow 
(s\rightarrow t)$. {\bf Right:} Type 2 $(s_1\rightarrow t)\Leftrightarrow 
(s_2\rightarrow 
t)$.}}\label{fig:edge_equi_two_type}
\end{minipage}
\quad\quad
\begin{minipage}[t]{0.4\textwidth}
 \centering
\includegraphics[width=0.8\textwidth]{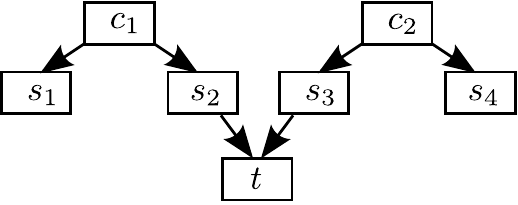}

 \caption{{A counter example for the simpler edge equivalence 
definition.} }\label{fig:EquiExam}
 
\end{minipage}
\end{figure}
\begin{proposition}
 \label{propos:eq_edge}
Given a graph $G=(\Vcal, \Ccal)$ and a marginal polytope diagram 
$G^M=(\Vcal^M,\Ecal^M)$ of $G$, we have
\begin{enumerate}
\item $\forall c,s,t\in\Vcal^M$, $t\subset s\subset c$, if $(c\rightarrow 
s)\in\Ecal^M$, then $(c\rightarrow t)\Leftrightarrow (s\rightarrow t)$;
\item If $(c\rightarrow s_1),(c\rightarrow s_2)\in\Ecal^M$, then $\forall 
t\in\Vcal^M, t\subset s_1,t\subset s_2$, $(s_1\rightarrow t)\Leftrightarrow 
(s_2\rightarrow t)$.
\end{enumerate}
\end{proposition}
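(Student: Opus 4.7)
}

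The plan is to prove both parts by direct manipulation of nested marginalisations, using the crucial observation that marginalisation is transitive along chains of nested subsets. The key is that if $\mu_c$ already marginalises to $\mu_s$ via a constraint present in $\Ucal$, then marginalising $\mu_c$ down to $\mu_t$ and marginalising $\mu_s$ down to $\mu_t$ yield the same expression.

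For Part 1, I would first check that the edge $(c\to s)$, which is given to lie in $\Ecal^M$, actually survives the exclusion in the definition of $\Ucal$ (it does, since $s\neq t$ because $t\subset s$). Thus the constraint $\sum_{\xb_{c\setminus s}}\mu_c(\xb_c)=\mu_s(\xb_s)$ is available. Under this constraint I would split the sum $\sum_{\xb_{c\setminus t}}\mu_c(\xb_c) = \sum_{\xb_{s\setminus t}}\sum_{\xb_{c\setminus s}}\mu_c(\xb_c) = \sum_{\xb_{s\setminus t}}\mu_s(\xb_s)$, which shows that the $(c\to t)$ constraint and the $(s\to t)$ constraint cut out exactly the same $\mub$ when combined with $\Ucal$. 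Both set-inclusions in Definition \ref{def:EQE} then follow symmetrically.

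For Part 2, both edges $(c\to s_1)$ and $(c\to s_2)$ lie in $\Ecal^M$, and since $s_1,s_2\neq t$ (because $t\subset s_1$ and $t\subset s_2$) both survive in $\Ucal$. I would then show that each of $(s_1\to t)$ and $(s_2\to t)$ is equivalent (modulo $\Ucal$) to the constraint $\sum_{\xb_{c\setminus t}}\mu_c(\xb_c)=\mu_t(\xb_t)$, by the same splitting of sums as in Part 1 applied to the chains $t\subset s_1\subset c$ and $t\subset s_2\subset c$. Transitivity of equivalence then yields $(s_1\to t)\Leftrightarrow(s_2\to t)$.

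There is no significant obstacle; the argument is essentially a bookkeeping exercise. The only subtle point is verifying that the constraints needed for the rewriting are genuinely present in $\Ucal$, i.e.\ that they are not among the excluded edges whose target is $t$. This relies on the strict inclusions $t\subset s$ (in Part 1) and $t\subset s_i$ (in Part 2), which I would highlight explicitly when invoking $\Ucal$. Once that check is made, both parts reduce to a one-line interchange of summations.
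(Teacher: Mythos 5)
Your proposal is correct and takes essentially the same approach the paper uses: verify that the enabling constraint (the one targeting $s$, $s_1$, or $s_2$, never $t$) survives the exclusion in $\Ucal$, then split the marginalisation $\sum_{\xb_{c\setminus t}}$ into $\sum_{\xb_{s\setminus t}}\sum_{\xb_{c\setminus s}}$ to pass between the two constraints, which yields both inclusions of Definition~\ref{def:EQE}. Your reliance on transitivity in Part~2 is harmless (set equality is transitive by inspection, independently of Proposition~\ref{propos:edge_equi_relation}), and the degenerate case $s_i = c$ collapses to the reflexive case, so no gap remains.
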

The proof is provided in Section \ref{sec:Proofofeqedge} of supplementary. In 
Figure \ref{fig:edge_equi_two_type} we give examples of the two types of edge 
equivalence in Proposition \ref{propos:eq_edge}. Furthermore, the following 
proposition always holds.
\begin{proposition}\label{propos:edge_equi_relation}
 For arbitrary $G=(\Vcal,\Ccal)$, and marginal polytope diagram 
$G^M=(\Vcal^M,\Ecal^M)$ of $G$, edge equivalence w.r.t. $G^M$ is an 
equivalence relation.
\end{proposition}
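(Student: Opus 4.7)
The plan is to show reflexivity, symmetry, and transitivity directly from the definition, by observing that edge equivalence is defined as set equality of two feasible polytopes, and set equality is itself an equivalence relation. The key point to articulate clearly is that the auxiliary set $\Ucal$ in \eq{eq:Ucaldef} depends only on $G^M$ and on the common target node $t$; it does \emph{not} depend on the choice of sender node $c_1$ or $c_2$. So once $t$ is fixed, all candidate edges $(c\rightarrow t)$ with $t\subseteq c\in\Vcal^M$ are compared against the \emph{same} $\Ucal$.

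First I would fix $t\in\Vcal^M$ and, for each $c\in\Vcal^M$ with $t\subseteq c$, introduce the shorthand
\begin{align*}
  F_c \;=\; \Big\{\mub\;\Big|\;\Ucal\cup\Big\{\sum_{\xb_{c\setminus t}}\mu_c(\xb_c)=\mu_t(\xb_t),\,\forall \xb_t\Big\}\Big\}.
\end{align*}
By Definition \ref{def:EQE}, $(c_1\rightarrow t)\Leftrightarrow(c_2\rightarrow t)$ is exactly the statement $F_{c_1}=F_{c_2}$. Reflexivity $F_{c}=F_{c}$ is immediate. Symmetry follows since $F_{c_1}=F_{c_2}$ gives $F_{c_2}=F_{c_1}$. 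For transitivity, if $F_{c_1}=F_{c_2}$ and $F_{c_2}=F_{c_3}$, then $F_{c_1}=F_{c_3}$, so $(c_1\rightarrow t)\Leftrightarrow(c_3\rightarrow t)$.

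The only step that deserves care is verifying that $\Ucal$ truly depends only on $t$ and $G^M$, not on the specific $c_i$ under consideration; this is what makes the three $F_{c_i}$ legitimately comparable as sets in the same ambient space. Inspecting \eq{eq:Ucaldef}, one sees that $\Ucal$ is built by removing \emph{all} edges of $\Ecal^M$ of the form $(\hat c\rightarrow t)$, so the removal is symmetric in the candidate senders and independent of which $c_i$ we later re-insert. With that observation the proof reduces to the equivalence-relation properties of ordinary set equality, so there is no serious obstacle; the work is essentially definitional unpacking.
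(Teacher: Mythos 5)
Your proof is correct, and the crucial observation—that $\Ucal$ in \eq{eq:Ucaldef} removes \emph{all} edges of the form $(\hat{c}\rightarrow t)$ and therefore depends only on $t$ and $G^M$, not on the particular senders being compared—is precisely what makes edge equivalence reduce to plain set equality of the polytopes $F_c$, from which reflexivity, symmetry and transitivity are immediate. This matches the paper's reasoning; the paper's surrounding discussion of the ``simpler'' edge-equivalence definition (which removes only the two edges in question, and for which transitivity fails as shown in Figure \ref{fig:EquiExam}) is exactly the motivation for keeping $\Ucal$ independent of $c_1,c_2$, and your proof correctly isolates that as the load-bearing step.
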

The proof is provided in Section \ref{sec:proof_equi_relation} of 
supplementary. 

\paragraph{Simpler edge equivalence definition?} Readers may wonder why in Definition \ref{def:EQE}, all edges sent to $t$ are removed (see \eq{eq:Ucaldef}), instead of only removing two edges $(c_1\rightarrow t),(c_2\rightarrow t)$. The answer is that if we did so, the resulting edge equivalence (we call it simpler edge equivalence) would no longer be an equivalence relation. 
To see this, we can replace $\Ecal^M\setminus \{(\hat{c}\rightarrow t)|\hat{c}\in\Vcal^M\}$ 
with $\Ecal^M\setminus\{(c_1\rightarrow t),(c_2\rightarrow t)\}$ in 
\eq{eq:Ucaldef}, and see
a counter example in Figure \ref{fig:EquiExam}.
In considering whether $(s_1\rightarrow t)\notin \Ecal^M$ and $(s_4\rightarrow t)\notin 
\Ecal^M$ are equivalent, we need to consider two constraint sets 
$\Ucal_{a}=\{\sum_{\xb_{c\setminus s}}\mu_c(\xb_c)=\mu_s(\xb_s), \forall (c\rightarrow s) \in \Ecal^M\cup\{(s_1\rightarrow t)\},\xb_s\}$ and 
$\Ucal_{b}=\{\sum_{\xb_{c\setminus s}}\mu_c(\xb_c)=\mu_s(\xb_s), \forall (c\rightarrow s) \in \Ecal^M\cup\{(s_4\rightarrow t)\},\xb_s\}$. Then
by the fact $(s_1\rightarrow t) \Leftrightarrow (s_2\rightarrow t)$,
$(s_3\rightarrow t) \Leftrightarrow (s_4\rightarrow t)$, $(s_2\rightarrow 
t)\in\Ecal^M, (s_3\rightarrow t)\in\Ecal^M$, we have $\{\mub|\Ucal_{a}\}= \{\mub|\Ucal_{b}\}$. Thus $(s_1\rightarrow t)\Leftrightarrow(s_4\rightarrow t)$ by the simpler edge equivalence definition. However, by the simpler edge equivalence definition $(s_2\rightarrow t)$, $(s_3\rightarrow t)$ are not equivalent  in general. This means transitivity does not hold. Thus the simpler edge equivalence is not an equivalence relation. Figure \ref{fig:EquiExam} is not a counter example for Definition \ref{def:EQE}, because $(s_1\rightarrow t)$, 
$(s_4\rightarrow t)$ are not equivalent by Definition \ref{def:EQE}.

With edge equivalence, we can see that given a marginal polytope diagram 
$G^M=(\Vcal^M,\Ecal^M)$ of a graph $G=(\Vcal,\Ccal)$, the following two 
operations would not change the corresponding local marginal polytope. 
\begin{enumerate}
\item Adding a new edge that is equivalent to an existing edge in $\Ecal^M$ 
\label{oper_add};
\item Removing one of two existing equivalent edges in 
$\Ecal^M$\label{oper_remove}.
\end{enumerate}

By composing the two operations above, we are able to derive a series of 
operations which would not change the corresponding local marginal polytope. 
For illustration, using Operation \ref{oper_add}) first and then using 
Operation \ref{oper_remove}) would lead to an operation: replacing an existing 
edge in $\Ecal^M$ with an equivalent edge. Repeating Operation 
\ref{oper_remove}) we can merge several equivalent edges in $\Ecal^M$ to one 
edge. Repeating  Operation \ref{oper_remove}) and then using 
Operation \ref{oper_add}) we can replace a group of equivalent edges in 
$\Ecal^M$ with a new equivalent edge.

\subsection{Redundant Nodes}

There is a type of node,
the removal of which from
a marginal polytope diagram does not 
change the local marginal polytope. 
\begin{definition}[Redundant Node]
  For any graph $G=(\Vcal,\Ccal)$, and  marginal polytope diagram 
$G^M=(\Vcal^M,\Ecal^M)$ of $G$, we say $v\in\Vcal^M\setminus \Ccal$ is a 
\emph{redundant node} w.r.t. $G^M$, if 
  \begin{align}
    &\Big\{\mub| \sum_{\xb_{c\setminus s}}\mu_c(\xb_c)=\mu_s(\xb_s),\forall 
(c\rightarrow s)\in\Ecal^M,\xb_s\Big\}\notag\\
    =&\Big\{\mub| \sum_{\xb_{c\setminus s}}\mu_c(\xb_c)=\mu_s(\xb_s),\forall 
(c\rightarrow s)\in\hat{\Ecal}^M,\xb_s\Big\}\notag
  \end{align}
  where 
  \begin{align*}
  \hat{\Ecal}^M=&\Big[\Ecal^M\setminus (\{(c\rightarrow v)\in\Ecal^M\} \cup 
\{(v 
\rightarrow s)\in\Ecal^M\})\Big] \\
  &\cup\Big\{(c\rightarrow s)|(c\rightarrow v)\in\Ecal^M,(v\rightarrow 
s)\in\Ecal^M\Big\}.
  \end{align*}
\end{definition}
The following proposition provides an easy way to find redundant nodes in a 
marginal polytope diagram. 
\begin{proposition}
 \label{propos:renodes}
Given a graph $G=(\Vcal,\Ccal)$ and a marginal polytope diagram 
$G^M=(\Vcal^M,\Ecal^M)$, $v\in\Vcal^M\setminus \Ccal$ is a redundant node 
w.r.t. $G^M$ if 
either
of the following statements is true:
\end{proposition}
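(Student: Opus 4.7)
The plan is to show, for each of the two sufficient conditions on $v$, that the constraint system indexed by $\Ecal^M$ and that indexed by $\hat{\Ecal}^M$ cut out the same feasible set. The main tool will be Proposition \ref{propos:eq_edge} on edge equivalence, combined with the two polytope-preserving diagram operations recorded after Proposition \ref{propos:edge_equi_relation}: (a) augmenting $\Ecal^M$ by an edge equivalent to one already in $\Ecal^M$, and (b) deleting one of two existing equivalent edges in $\Ecal^M$. Each operation individually leaves the local marginal polytope unchanged, so any finite composition does as well.

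The natural first sufficient condition would be that $v$ has exactly one incoming edge $(c\rightarrow v)\in\Ecal^M$. Under this hypothesis, for each outgoing edge $(v\rightarrow s)\in\Ecal^M$ we have $s\subset v\subset c$, so part 1 of Proposition \ref{propos:eq_edge} yields $(c\rightarrow s)\Leftrightarrow(v\rightarrow s)$. Iterating (a) then (b) replaces every $(v\rightarrow s)$ by the direct edge $(c\rightarrow s)$ without changing the polytope. At that stage the only edge incident to $v$ is $(c\rightarrow v)$; since $v\notin\Ccal$, the variable $\mu_v$ does not enter the objective and is pinned down uniquely by the single constraint $\sum_{\xb_{c\setminus v}}\mu_c(\xb_c)=\mu_v(\xb_v)$. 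Projecting $\mu_v$ out is therefore lossless, and the remaining constraints are exactly those indexed by $\hat{\Ecal}^M$. The dual second condition -- expected to say that $v$ has exactly one outgoing edge $(v\rightarrow s)$, possibly with an additional topological proviso -- is handled symmetrically, using part 2 of Proposition \ref{propos:eq_edge} to push each incoming edge $(c_i\rightarrow v)$ down to the direct edge $(c_i\rightarrow s)$.

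The main obstacle lies in this second case. Multiple incoming edges $(c_i\rightarrow v)\in\Ecal^M$ collectively impose the compatibility $\sum_{\xb_{c_i\setminus v}}\mu_{c_i}=\sum_{\xb_{c_j\setminus v}}\mu_{c_j}$ for all $i,j$; after bypassing $v$ one retains only $\sum_{\xb_{c_i\setminus s}}\mu_{c_i}=\mu_s$, which is strictly weaker in general. The argument must therefore exploit the extra structural content of the second condition to show that $\xb_v$-marginal compatibility is already forced by the edges of $\hat{\Ecal}^M$ -- most plausibly through a chain of part~2 equivalences involving a common ancestor cluster of the $c_i$'s in $\Vcal^M$. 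Once this compatibility is verified, the same projection argument used in the first case eliminates $\mu_v$ and completes the proof.
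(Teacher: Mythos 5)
Your treatment of the first sufficient condition is sound and in the same spirit as the paper's. The paper actually argues by direct set equality (showing that the constraint set with $\{(c\to v)\}\cup\{(v\to s)\in\Ecal^M\}$ equals that with $\{(c\to v)\}\cup\{(c\to s):(v\to s)\in\Ecal^M\}$, and then dropping the constraint $\sum_{\xb_{c\setminus v}}\mu_c=\mu_v$ since $v\notin\Ccal$ and $\mu_v$ appears nowhere else). Your route — invoking Proposition \ref{propos:eq_edge} part~1 repeatedly and then projecting out $\mu_v$ — reaches the same place and is valid: the hypothesis of part~1 (namely $(c\to v)\in\Ecal^M$) persists through every intermediate diagram, so each add/delete step is licensed.

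The genuine gap is in the second case, and it stems from guessing the wrong condition. The paper's second condition is not ``$v$ has exactly one outgoing edge''; it is that \emph{all incoming edges} $(c\to v)\in\Ecal^M$ are \emph{equivalent} w.r.t.\ $G^M$ in the sense of Definition \ref{def:EQE}. You correctly identified why the naïve bypass would fail for multiple incoming edges — after rerouting $(c_i\to v)$ past $v$ you lose the compatibility constraints $\sum_{\xb_{c_i\setminus v}}\mu_{c_i}=\sum_{\xb_{c_j\setminus v}}\mu_{c_j}$ that $v$ was enforcing — but you then speculate about recovering them via a ``common ancestor.'' The paper's hypothesis eliminates the obstacle outright: if all incoming edges to $v$ are pairwise equivalent, then operation (b) (delete one of two equivalent edges) applied repeatedly lets you discard all but a single incoming edge without changing the polytope, at which point case 1 applies verbatim. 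So the mechanism is exactly what you were groping toward, but it is supplied by the hypothesis rather than something to be proved; your proposed argument for the imagined ``one outgoing edge'' condition would not establish redundancy, since such a $v$ can still genuinely constrain the $\mub$'s through its multiple, inequivalent incoming edges.
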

\begin{enumerate}
\item There is only one $(c\rightarrow v)\in\Ecal^M$;
\item All $(c\rightarrow v)\in\Ecal^M$ are equivalent w.r.t. $G^M$ according to 
Definition 
\ref{def:EQE}.
\end{enumerate}
\begin{proof}
Let us consider the first case where there is only one $(c\rightarrow 
v)\in\Ecal^M$. It is easy to check that
  \begin{subequations}
  \begin{eqnarray}
  \left\{
      \mub\left|
        \begin{array}{l}
        \forall(v\rightarrow 
s)\in\Ecal^M,\xb_s, \sum\limits_{\xb_{c\setminus 
s}}\mu_c(\xb_c)=\mu_s(\xb_s);\\
        \forall \xb_v,~~\sum\limits_{\xb_{c\setminus 
v}}\mu_c(\xb_c)=\mu_v(\xb_v)
        \end{array}
        \right.
    \right\}\label{eqn:UB}\\
    = \left\{\mub\left|
        \begin{array}{l}
          \forall \xb_v,~\sum\limits_{\xb_{c\setminus 
v}}\mu_c(\xb_c)=\mu_v(\xb_v);\\
          \forall (v\rightarrow s) \in \Ecal^M, 
\xb_s, ~\sum\limits_{\xb_{v\setminus s}}\mu_v(\xb_v)=\mu_s(\xb_s)
        \end{array}
      \right.
    \right\}.\label{eqn:UA}
  \end{eqnarray}
\end{subequations}
\begin{figure}[!t]
  \centering
  \includegraphics[width=0.5\textwidth]{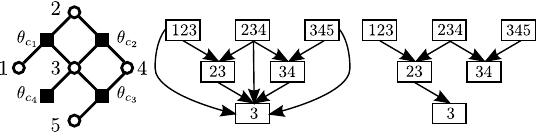}
  \caption{{An example of equivalent edges removal.  {\bf Left:} 
  a factor graph 
$G=(\Vcal,\Ccal)$ with $\Vcal=\{1,2,3,4,5\}$ and 
$\Ccal=\{c_1,c_2,c_3,c_4\}$, where 
$c_1=\{1,2,3\},~c_2=\{2,3,4\}$,  $c_3=\{3,4,5\}$ and $c_4=\{3\}$. {\bf Middle:} 
a marginal polytope diagram whose edges $(\{1,2,3\} \rightarrow 
\{3\})$,$(\{2,3,4\} \rightarrow \{3\})$, $(\{3,4,5\} \rightarrow 
\{3\})$,$(\{2,3\} \rightarrow \{3\})$ and $(\{3,4\}\rightarrow\{3\})$ are 
equivalent.  {\bf Right:} a marginal polytope diagram after removal of 
equivalent edges. 
}}
  \label{fig:RCEdges}
\end{figure}
Since $v\notin \Ccal$, $\mu_v(\xb_v)$ is not part of $\mub$ in \eq{def:mub}. 
Note that removing $\sum_{\xb_{c\setminus v}}\mu_c(\xb_c) = 
\mu_v(\xb_v),~\forall 
\xb_v$ from \eq{eqn:UB} would not change the set of $\mub$ described by 
\eq{eqn:UB} as no other variables depend on $\mu_v(\xb_v)$. That is, 

   \begin{align}
  &\Big\{\mub|\sum_{\xb_{\hat{c}\setminus 
\hat{s}}}\mu_{\hat{c}}(\xb_{\hat{c}})=\mu_{\hat{s}}(\xb_{\hat{s}}),\forall 
(\hat{ c}
\rightarrow \hat{s})\in\Ecal^M_1, \xb_{\hat{s}}\Big\}\notag\\
    =&\Big\{\mub|\sum_{\xb_{\hat{c}\setminus 
\hat{s}}}\mu_{\hat{c}}(\xb_{\hat{c}})=\mu_{\hat{s}}(\xb_{\hat{s}}),\forall 
(\hat{c}
\rightarrow \hat{s})\in\Ecal^M_2,\xb_{\hat{s}}\Big\}\notag,
  \end{align}
where $\Ecal^M_1=\{(c\rightarrow s)|(v\rightarrow 
s)\in\Ecal^M\}$, and 
$\Ecal^M_2=\{(c\rightarrow v)\}\cup\{(v\rightarrow s)\in\Ecal^M\}$.
Let $A = \{\mub|\sum_{\xb_{\hat{c}\setminus 
\hat{s}}}\mu_{\hat{c}}(\xb_{\hat{c}})=\mu_{\hat{s}}(\xb_{\hat{s}}),\forall 
(\hat{ c } \rightarrow \hat{s})\in\Ecal^M\setminus \Ecal^M_2, \xb_{\hat{s}}\}.$
We have  
\begin{align}
    &\Big\{\mub|\sum_{\xb_{\hat{c}\setminus 
\hat{s}}}\mu_{\hat{c}}(\xb_{\hat{c}})=\mu_{\hat{s}}(\xb_{\hat{s}}),\forall 
(\hat{c}
\rightarrow \hat{s})\in\Ecal^M_1,\xb_{\hat{s}}\Big\} \cap A \notag\\
    =&\Big\{\mub|\sum_{\xb_{\hat{c}\setminus 
\hat{s}}}\mu_{\hat{c}}(\xb_{\hat{c}})=\mu_{\hat{s}}(\xb_{\hat{s}}),\forall(\hat{
c}
\rightarrow \hat{s})\in\Ecal^M_2, \xb_{\hat{s}}\Big\} \cap A 
\notag.
\end{align}
Since  ${\Ecal}^M=(\Ecal^M\setminus\Ecal^M_2)\cup\Ecal^M_2$ and 
$\hat{\Ecal}^M=(\Ecal^M\setminus\Ecal^M_2)\cup\Ecal^M_1$,
  \begin{align}
    &\Big\{\mub| \sum_{\xb_{\hat{c}\setminus 
\hat{s}}}\mu_{\hat{c}}(\xb_{\hat{c}})=\mu_{\hat{s}}(\xb_{\hat{s}}),\forall 
(\hat{c}\rightarrow \hat{s})\in\hat{\Ecal}^M,\xb_{\hat{s}}\Big\}\notag\\
    =&\Big\{\mub| \sum_{\xb_{\hat{c}\setminus 
\hat{s}}}\mu_{\hat{c}}(\xb_{\hat{c}})=\mu_{\hat{s}}(\xb_{\hat{s}}),\forall 
(\hat{c}\rightarrow 
\hat{s})\in\Ecal^M,\xb_{\hat{s}}\Big\}.\notag
  \end{align}
Hence $v$ is a redundant node by definition.

Now let us consider the second case, where there are multiple equivalent 
$(c\rightarrow v)\in\Ecal^M$. By edge equivalence we can keep one of 
$(c\rightarrow v)\in\Ecal^M$ and remove the rest without changing the local 
marginal polytope, which becomes the first case.
\end{proof}

\section{Constraint Reduction}
\begin{figure}[!t]
  \centering
  \includegraphics[width=0.5\textwidth]{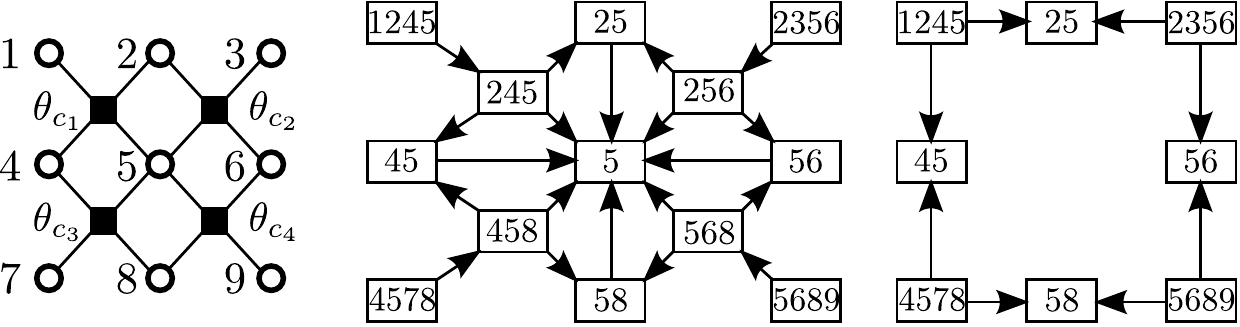}

  \caption{{An example of redundant node removal. {\bf Left:} 
  a factor graph $G=(\Vcal,\Ccal)$ with $\Vcal=\{1,2,3,4,5,6,7,8,9\}$, and 
$\Ccal=\{c_1,c_2,c_3,c_4\}$, where 
$c_1=\{1,2,4,5\},~c_2=\{2,3,5,6\},~c_3=\{4,5,7,8\}$, and $c_4=\{5,6,8,9\}$. 
{\bf Middle:} a marginal polytope diagram with  
  redundant nodes $\{2,4,5\}$, $\{2,5,6\}$, $\{4,5,8\}$, $\{5,6,8\}$ and 
$\{5\}$. {\bf Right:} marginal polytope diagram after redundant nodes removal.}}
  \label{fig:RCExample}
\end{figure}

Equivalent edges offer an effective way to reduce constraints. By composing the 
two basic operations in Section \ref{sec:eqedges}, we can get a series of 
operations which would not change the corresponding local marginal polytope. 
For illustration we can partition 
$\Ecal^M$ into several equivalent classes by equivalence between edges, and 
we can simply pick up arbitrarily many edges in each equivalent 
class to get a new marginal polytope diagram with 
fewer
edges. As each edge 
corresponds to a local marginalisation constraint, the number of constraints can 
be efficiently reduced by the above operations.  As shown in Figure 
\ref{fig:RCEdges}, all edges to node $\{3\}$ are equivalent, thus we can keep 
just one of them in the diagram to keep the local marginal polytope unaltered yet
with fewer constraints. 

Given a graph $G=(\Vcal,\Ccal)$, and a marginal polytope diagram 
$G^M=(\Vcal^M,\Ecal^M)$ of $G$, if a node $v\in\Vcal^M$ is a redundant node, 
we 
can reduce the number of constraints by
 removing $v$ from $\Vcal^M$ to obtain a 
marginal polytope diagram as follows:
\begin{align}
 G^M_R=&(\Vcal^M_R,\Ecal^M_R),  \Vcal^M_R= \Vcal^M\setminus \{v\},\notag\\
 \Ecal^M_R=&[\Ecal^M\setminus (\{(c\rightarrow v)\in\Ecal^M\}\cup 
\{(v\rightarrow s)\in\Ecal^M\})]&\notag\\
 &\cup\{(c\rightarrow s)|(c\rightarrow v),(v\rightarrow 
s)\in\Ecal^M\},
\end{align}
and according to the definition of redundant nodes, $G^M_R$ and $G^M$ correspond to 
the same local marginal polytope.

Figure \ref{fig:RCExample} gives an example of redundant node removal. In the 
middle diagram, one can see that nodes  $\{2,4,5\}$, $\{2,5,6\}$, $\{4,5,8\}$, 
and $\{5,6,8\}$ are redundant  because their indegrees are 1. For node 
$\{5\}$, as all edges to $\{5\}$ are equivalent, node $\{5\}$ is also 
redundant. 
Removal of these redundant nodes leads to fewer constraints without changing 
the local marginal polytope.

It is worth mentioning that \cite{Kolmogorov2012} is perhaps closest idea to ours in spirit. However, \citep[Proposition 2.1]{Kolmogorov2012} considers the removal of edges only, whereas ours considers the removal of both edges and nodes.

\subsection{Is the Minimal Number of Constraints Always a Good Choice?}
Using marginal polytope diagrams one can safely reduce the number of constraints without 
altering the local marginal polytope. 
On one hand,  fewer constraints means  fewer belief updates on 
$b_t(\xb_t),t\in\Tcal$, which leads to lower run time per iteration. On 
the 
other hand,  fewer constraints means  fewer coordinates (\ie search 
directions), 
and thus that the algorithm 
is more likely to get stuck at corners due to the 
non-smoothness of the dual objective and the nature of coordinate descent 
(this has been often observed empirically too). This means that minimal number of constraints is not 
always a good choice. As we shall see in the next section, a trade-off between 
the minimal number of constraints and the maximal number of constraints 
performs best.

\section{From Constraint Reduction To New Message Passing Algorithms}
\label{sec:algs}
Here we propose three new efficient algorithms for MAP inference, using different 
constraint reduction strategies (via marginal polytope diagrams). All three 
algorithms are based on the GDD belief propagation procedure which is 
equivalent 
to GDD message passing, thus the theoretical properties in Section 
\ref{subsec:converge} also hold for these three algorithms.

To derive new algorithms, we first construct a local marginal polytope as 
an initial local marginal polytope, and then by different 
constraint reduction strategies we get three different algorithms. For arbitrary graph 
$G=(\Vcal,\Ccal)$, we let $\Ccal'$ be $\Ccal_0=\{c'|c'\subseteq 
c,c\in\Ccal\}$ and $\Scal(\Ccal')$ be 
$\Scal_0(\Ccal_0)=(\Scal_0(c))_{c\in\Ccal_0}$ with 
$\Scal_0(c)=\{s|s\in\Ccal_0,s\subset c\}$ to  construct a local marginal 
polytope $\Mcal_L(G,\Ccal_0,\Scal_0(\Ccal_0))$ as a initial local marginal 
polytope. Thus the marginal polytope diagram is 
$G^M_0=(\Vcal^M_0,\Ecal^M_0)$ where
\begin{align}
  \Vcal^M_0&=\{v|v\subseteq c,c\in\Ccal\}\notag,\\
  \Ecal^M_0&=\{(c\rightarrow s)|c,s\in\Vcal^M_0,s\subset 
c\}.\label{eq:vsesdef}
\end{align}

The marginal polytope diagram $G^M_0$ and GDD provide the base for all three 
algorithms. 
%
 \subsection{Power Set Algorithm}
 In the first algorithm which we call Power Set algorithm, we do not remove any 
redundant nodes in $G^M_0$. One can see that $\forall (c\rightarrow 
t)\in\Ecal^M_0,|c|-|t| > 1$, $\exists s\in\Vcal^M_0, s.t. |s|=|t|+1, 
(c\rightarrow s)\in\Ecal^M_0$, which suggests $(c \rightarrow t)\Leftrightarrow 
(s\rightarrow t)$. Using equivalent edges we get a marginal polytope 
diagram $G^M_p=(\Vcal^M_p,\Ecal^M_p)$ as follows
\begin{align}
  \Vcal^M_p&=\Vcal^M_0\notag,\\
  \Ecal^M_p&=\{(c\rightarrow s)|c,s\in\Vcal^M_p,s\subset 
c,|c|=|s|+1\},
\end{align}
which corresponds to the same local marginal polytope as $G^M_0$.
Thus we define  $
  \Ccal_p=\{c|c\subseteq\hat{c},\hat{c}\in\Ccal\}$, 
$\Scal_p(c)=\{s|(c\rightarrow s) \in\Ecal^M_p\}$, and let  
$\Ccal'$ be $\Ccal_p$, and $\Scal(\Ccal')$ be
$\Scal_p(\Ccal_p)=(\Scal_p(c))_{c\in\Ccal'}$, the 
corresponding 
local marginal polytope becomes $\Mcal_L(G,\Ccal_p,\Scal_p(\Ccal_p))$.
By the fact that $c\notin \Scal_p(c), \forall c\in\Ccal_p$, applying belief propagation without messages to 
$\Mcal_L(G,\Ccal_p,\Scal_p(\Ccal_p))$ yields the following belief propagation 
form $\forall c\in\Ccal_p$:
\begin{align}
  b_s^{*}(\xb_s)&=\frac{1}{|c|}\max_{\xb_{c\setminus 
s}}[b_c(\xb_c)+\sum_{\hat{s}\in\Scal_p(c)}b_{\hat{s}}(\xb_{\hat{s}})],\forall 
s\in\Scal_p(c),\xb_s\notag\\
b_c^*(\xb_c)&=b_c(\xb_c)\hspace{-0.03in}+\hspace{-0.03in}\sum_{\hat{s}
\in\Scal_p(c)}b_{\hat{s}} (\xb_{\hat{s}}
)\hspace{-0.03in}-\hspace{-0.03in}\sum_ { \hat{s}\in\Scal_p(c)
}b_{\hat{s}}^*(\xb_{\hat{s}}),\forall \xb_c.
\end{align}

\subsection{$\pi$-System Algorithm}
\label{sec:pi-system-message}
The second algorithm which we call $\pi$-System algorithm, is based on  the
$\pi$-system \citep{kallenberg2002foundations} extended from $\Ccal$. Such a 
$\pi$-system denoted by $\Ccal_{\pi}$ has the following properites:
\begin{enumerate}
  \item if $c\in\Ccal$, then $c\in\Ccal_{\pi}$;
  \item if $c_1,c_2\in\Ccal_{\pi}$, then $c_1\cap c_2 \in\Ccal_{\pi}$.
\end{enumerate}
We can construct $\Ccal_{\pi}$ using the properties above by assigning all 
elements in $\Ccal$ to $\Ccal_{\pi}$ and adding intersections repeatedly to 
$\Ccal_{\pi}$.
\begin{proposition}
\label{prop:pi}
All $v\in\Vcal^M_0\setminus \Ccal_{\pi}$ are redundant nodes w.r.t $G^M_0$.
\end{proposition}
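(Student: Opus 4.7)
The plan is to apply Proposition \ref{propos:renodes}: it suffices to show that for any $v\in\Vcal^M_0\setminus\Ccal_\pi$, all incoming edges $(c\rightarrow v)\in\Ecal^M_0$ are equivalent w.r.t.\ $G^M_0$. Then, since all incoming edges form a single equivalence class, case 2 of Proposition \ref{propos:renodes} gives redundancy. Note first that $v$ has at least one incoming edge: since $v\in\Vcal^M_0$ there is some $a\in\Ccal$ with $v\subseteq a$, and $v=a$ would place $v$ in $\Ccal\subseteq\Ccal_\pi$, contradicting our assumption.

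Given any two parents $c_1,c_2\in\Vcal^M_0$ of $v$ (i.e.\ $v\subsetneq c_i$), I would first reduce to the case $c_1,c_2\in\Ccal$. By definition of $\Vcal^M_0$, there exist $a_i\in\Ccal$ with $c_i\subseteq a_i$. If $c_i\subsetneq a_i$, then $(a_i\rightarrow c_i)\in\Ecal^M_0$, and since $v\subsetneq c_i\subsetneq a_i$, Proposition~\ref{propos:eq_edge} (Type 1) yields $(a_i\rightarrow v)\Leftrightarrow (c_i\rightarrow v)$. So by transitivity (Proposition~\ref{propos:edge_equi_relation}) it is enough to prove $(a_1\rightarrow v)\Leftrightarrow (a_2\rightarrow v)$ with $a_1,a_2\in\Ccal$.

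For the core step, I would exploit the $\pi$-system property. Since $a_1,a_2\in\Ccal\subseteq\Ccal_\pi$, we have $a_1\cap a_2\in\Ccal_\pi$, and $v\subseteq a_1\cap a_2$. Because $v\notin\Ccal_\pi$, necessarily $v\subsetneq a_1\cap a_2$, so $a_1\cap a_2\in\Vcal^M_0$ and $(a_1\cap a_2\rightarrow v)\in\Ecal^M_0$. If $a_1\cap a_2=a_1$ (i.e.\ $a_1\subseteq a_2$), then $v\subsetneq a_1\subsetneq a_2$ and Type 1 directly gives $(a_2\rightarrow v)\Leftrightarrow (a_1\rightarrow v)$; the symmetric case is analogous. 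Otherwise $a_1\cap a_2\subsetneq a_i$ for both $i$, so $(a_i\rightarrow a_1\cap a_2)\in\Ecal^M_0$, and Type 1 applied along $v\subsetneq a_1\cap a_2\subsetneq a_i$ gives $(a_i\rightarrow v)\Leftrightarrow (a_1\cap a_2\rightarrow v)$ for each $i$. Transitivity then yields $(a_1\rightarrow v)\Leftrightarrow (a_2\rightarrow v)$, completing the argument.

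The main obstacle I anticipate is merely bookkeeping: one must carefully track strict vs.\ weak subset relations so that each edge invoked genuinely lies in $\Ecal^M_0$, and one must verify that $a_1\cap a_2$ itself belongs to $\Vcal^M_0$ (which follows because $a_1\cap a_2\subseteq a_1\in\Ccal$). The crux of the argument is the observation that $v\notin\Ccal_\pi$ forces $v$ to sit strictly below the intersection $a_1\cap a_2$, which is precisely what makes the intermediate node $a_1\cap a_2$ available as the common point through which the two incoming edges can be connected by Type 1 equivalence.
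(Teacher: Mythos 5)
Your proof is correct, and while it rests on the same key observation as the paper's --- namely that closure of $\Ccal_\pi$ under intersection, combined with $v\notin\Ccal_\pi$, forces $v$ to lie strictly below any intersection of clusters from $\Ccal$ containing it --- the bookkeeping is organised somewhat differently. The paper introduces the single canonical node $s=\bigcap_{c\in\Ccal_v}c$ (the intersection over \emph{all} clusters of $\Ccal$ strictly containing $v$), shows $v\subsetneq s$, and then proves $(p\to v)\Leftrightarrow (s\to v)$ for every parent $p$ by a three-way case split that invokes both Type~1 and Type~2 of Proposition~\ref{propos:eq_edge}. You instead relate any two parents directly: first reduce, via Type~1, to parents $a_1,a_2\in\Ccal$, and then use the \emph{pairwise} intersection $a_1\cap a_2$ as the go-between, connecting $(a_1\to v)$ and $(a_2\to v)$ through $(a_1\cap a_2\to v)$ by two Type~1 applications and transitivity (Proposition~\ref{propos:edge_equi_relation}). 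Your version trades the single canonical intermediate for pairwise ones, but in exchange uses only Type~1 equivalence and makes the appeal to the equivalence-relation structure explicit; it also explicitly checks that $v$ has at least one incoming edge, a detail the paper leaves implicit. The only small omission is the degenerate case $a_1=a_2$, which is trivial. Both arguments are of comparable length and rigour.
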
 
\begin{proof}
Since $\Ccal\subseteq \Ccal_{\pi}$, for any $v\in\Vcal^M_0\setminus 
\Ccal_{\pi}$,  we have $v\in \Vcal^M_0\setminus \Ccal$. Now we prove the 
proposition by proving that all edges to $v$ are equivalent.

Let $P_v=\{p|(p\rightarrow v)\in\Ecal^M_0\}$, and $\Ccal_v=\{c|c\in\Ccal, 
v\subset c\}$. We let $s=\cap_{c\in\Ccal_v}c$, and we must have $v\subseteq s$. 
Moreover, if $v=s$ we have $v=\cap_{c\in\Ccal_v}c\in\Ccal_{\pi}$, this 
contradicts the fact that $v\in \Vcal^M\setminus \Ccal_{\pi}$. Thus we must 
have $v\subset s$. Then, by the definition of $\Vcal^M_0$ and $\Ecal^M_0$ in 
\eq{eq:vsesdef}, $\forall p\in P_v$, $\exists c\in\Ccal_v$, s.t. 
$p\subseteq c$. By the fact that $s=\cap_{\hat{c}\in\Ccal_v}\hat{c}$, we have 
$s\subseteq c$. Thus if $p=c=s$, then $(p\rightarrow v)\Leftrightarrow (s 
\rightarrow v)$ naively holds. If only one of $p$ and $s$ is 
equal to $c$,  we have $(p\rightarrow v)\Leftrightarrow 
(s \rightarrow v)$ by Proposition \ref{propos:eq_edge} (the first case). If 
both $p$ and $s$ are not equal to $c$, by Proposition 
\ref{propos:eq_edge} (the second case) we have $(p\rightarrow v)\Leftrightarrow 
(s \rightarrow v)$. As a result, all $(p\rightarrow v), p\in P_v $ are 
equivalent, which implies that $v$ is redundant node w.r.t. $G^M_0$ by 
Proposition \ref{propos:renodes}.
\end{proof}

By edge equivalence, we construct a marginal polytope diagram 
$G^M_{\pi}=(\Vcal^M_{\pi},\Ecal^M_{\pi})$ below,
\begin{align}
  \Vcal^M_{\pi} &= \Ccal_{\pi},\\
  \Ecal^M_{\pi} &= \{(c\rightarrow s)|c,s\in\Ccal_{\pi},s\subset 
c,\nexists 
t\in\Ccal_{\pi},\mathrm{s.t.}~s\subset t\subset c\}.\notag
\end{align}
Thus we define $\Scal_{\pi}(c)=\{s|(c\rightarrow s) 
\in\Ecal^M_{\pi}\}$.
Let $\Ccal'$ be $\Ccal_{\pi}$, and $\Scal(\Ccal')$ be 
$\Scal_{\pi}(\Ccal_{\pi})=(\Scal_{\pi}(c))_{c\in\Ccal_{\pi}}$ the corresponding 
marginal polytope becomes $ \Mcal_L(G,\Ccal_{\pi},\Scal_{\pi}(\Ccal_{\pi}))$. 
By the fact that $c\notin \Scal_\pi(c), \forall c\in\Ccal_\pi$, applying belief propagation without messages on 
$\Mcal_L(G,\Ccal_{\pi},\Scal_{\pi}(\Ccal_{\pi}))$ results in the following 
belief propagation form $\forall c\in\Ccal_{\pi}$:
 \begin{align}
b_s^{*}(\xb_s)\hspace{-0.03in}&=\hspace{-0.03in}\frac{1}{|\Scal_{\pi}(c)|}\max_{
\xb_{c\setminus 
s}}[b_c(\xb_c)\hspace{-0.02in}+\hspace{-0.07in}\sum_{\hat{s}\in\Scal_{\pi}(c)}b_
{\hat{s}}(\xb_{ \hat{s}})],
\forall  s\in\Scal_\pi(c),\xb_s\notag\\
b_c^*(\xb_c)\hspace{-0.03in}&=\hspace{-0.03in}b_c(\xb_c)+\hspace{-0.03in}\sum_{
\hat{s}\in\Scal_{ \pi}(c)}b_{\hat{ s}}(\xb_{\hat{s}
})-\hspace{-0.03in}\sum_{\hat{s}\in\Scal_{\pi}(c)}b_{\hat{s}}^*(\xb_{\hat{s}}),
\forall \xb_c.
 \label{eqn:MSGUPDTRADE}
\end{align}

Note that a node in the $\pi$-system may still be a redundant node.

\subsection{Maximal-Cluster Intersection algorithm}
Here we remove more redundant nodes by 
introducing the notion of maximal clusters.
\begin{definition} [maximal cluster] Given a graph $G = (\Vcal, \Ccal)$, a 
cluster $c$ is said to be a \emph{maximal cluster} of $\Ccal$, if 
$c\in\Ccal,\nexists \hat{c}\in\Ccal, ~\mathrm{s.t.}~ c\subset \hat{c}$.
\end{definition}
The intersection of all maximal clusters is 
\begin{align}
  \Ical_m=\{s|s=c\cap c',~c,c'\in\Ccal_m\},
\end{align}
where \begin{align}
  \Ccal_m=\{c|c\in\Ccal,\nexists \hat{c}\in\Ccal, ~\mathrm{s.t.}~ c\subset 
\hat{c}\}\label{eq:Ccalm}.
\end{align}

\begin{proposition}\label{propos:renodes_mi}
All $v\in\Vcal^M_0\setminus \{\Ccal \cup \Ical_m\}$ are redundant 
nodes w.r.t $G^M_0$. 
\end{proposition}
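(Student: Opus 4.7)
The plan is to apply Proposition \ref{propos:renodes} by showing that, for $v\in\Vcal^M_0\setminus(\Ccal\cup\Ical_m)$, all incoming edges $(p\to v)\in\Ecal^M_0$ are equivalent with respect to $G^M_0$. Let $\Ccal_{m,v}=\{c\in\Ccal_m:v\subseteq c\}$. Any $p$ with $(p\to v)\in\Ecal^M_0$ lies in $\Vcal^M_0$ with $v\subsetneq p$, so $p\subseteq \hat c$ for some $\hat c\in\Ccal$, and $\hat c$ in turn is contained in some maximal cluster $c_m\in\Ccal_{m,v}$; hence $p\subseteq c_m$ for some $c_m\in\Ccal_{m,v}$. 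Moreover $v\subsetneq c_m$ because $v\notin\Ccal$ while $c_m\in\Ccal$.

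Next I would argue, in two steps, that every such $(p\to v)$ is equivalent to a canonical edge coming from a maximal cluster. Step~1 (reducing to maximal clusters): if $p\subseteq c_m$ with $p\neq c_m$, then $v\subsetneq p\subsetneq c_m$ and $(c_m\to p)\in\Ecal^M_0$, so by Proposition \ref{propos:eq_edge}(1) we have $(c_m\to v)\Leftrightarrow(p\to v)$; the degenerate case $p=c_m$ is trivial. This shows every incoming edge of $v$ is equivalent to $(c_m\to v)$ for some $c_m\in\Ccal_{m,v}$. Step~2 (identifying edges from different maximal clusters): for any two distinct $c_m,c_m'\in\Ccal_{m,v}$, let $s=c_m\cap c_m'$. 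Then $v\subseteq s$, $s\in\Vcal^M_0$, and the edges $(c_m\to s)$ and $(c_m'\to s)$ lie in $\Ecal^M_0$. Applying Proposition \ref{propos:eq_edge}(1) along each of these gives
\[
(c_m\to v)\Leftrightarrow (s\to v)\Leftrightarrow (c_m'\to v),
\]
and combining with transitivity from Proposition \ref{propos:edge_equi_relation} yields equivalence of all the $(c_m\to v)$.

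The hard part — and the only place the hypothesis $v\notin\Ical_m$ is used — is to justify Step~2, specifically that the intermediate node $s$ really satisfies $v\subsetneq s$ so that Proposition \ref{propos:eq_edge}(1) applies. If instead $v=s=c_m\cap c_m'$, then by definition of $\Ical_m$ we would have $v\in\Ical_m$, contradicting the hypothesis. (The case $|\Ccal_{m,v}|=1$ makes Step~2 vacuous, and Step~1 alone already equates every $(p\to v)$ with the single $(c_m\to v)$.) Combining Steps~1 and~2 with transitivity, all edges in $\{(p\to v)\in\Ecal^M_0\}$ lie in a single equivalence class, so by the second clause of Proposition \ref{propos:renodes} the node $v$ is redundant w.r.t.\ $G^M_0$.
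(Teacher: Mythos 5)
Your proof is correct and complete. Every application of Proposition~\ref{propos:eq_edge}(1) is licensed: all intermediate nodes you introduce ($p$, $c_m$, $s=c_m\cap c_m'$) lie in $\Vcal^M_0$ because $\Vcal^M_0$ is downward closed under $\subseteq$, and the edges $(c_m\to p)$, $(c_m\to s)$, $(c_m'\to s)$ lie in $\Ecal^M_0$ because $\Ecal^M_0$ contains every proper-subset edge. You also correctly check the two proper-inclusion hypotheses: $s\subsetneq c_m$ and $s\subsetneq c_m'$ because distinct maximal clusters are incomparable, and $v\subsetneq s$ because $v=s$ would force $v\in\Ical_m$.

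The argument mirrors the paper's in-text proof of the companion result, Proposition~\ref{prop:pi} for the $\pi$-system, but with one adaptation that is genuinely forced by the structure of $\Ical_m$. In the $\pi$-system proof the paper collapses all incoming edges of $v$ onto a single canonical edge $(s\to v)$ with $s=\cap_{c\in\Ccal_v}c$, and the needed inequality $v\neq s$ follows because $\Ccal_\pi$ is closed under \emph{arbitrary} intersections. Here $\Ical_m$ contains only \emph{pairwise} intersections of maximal clusters, so when $|\Ccal_{m,v}|\geq 3$ the total intersection $\cap_{c\in\Ccal_{m,v}}c$ need not lie in $\Ccal\cup\Ical_m$, and one cannot deduce $v\neq\cap_{c\in\Ccal_{m,v}}c$ from $v\notin\Ccal\cup\Ical_m$ alone. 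Your two-step route — first push each $(p\to v)$ up to an edge $(c_m\to v)$ from a maximal cluster via Proposition~\ref{propos:eq_edge}(1), then identify any two such edges through the pairwise intersection $s=c_m\cap c_m'\in\Ical_m$ and transitivity (Proposition~\ref{propos:edge_equi_relation}) — is precisely the adjustment the pairwise definition of $\Ical_m$ demands, and it closes the argument cleanly.
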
 
The proof is provided in Section  
\ref{sec:proof_propos_mi} of the supplementary material.

By edge equivalence, we construct a marginal polytope diagram 
$G^M_m=(\Vcal^M_m,\Ecal^M_m)$ with
\begin{align}
  \Vcal^M_{m} &= \Ccal\cup\Ical_{m}\notag\\
  \Ecal^M_{m} &= \{(c\rightarrow 
s)|c\in\Ccal_{m},s\in\Ccal\cup\Ical_m,s\subset c\}.
\end{align}
Thus we define $\Scal_m(c)=\{s|(c\rightarrow s)\in \Ecal^M_m\}$. Let 
$\Ccal'$ be $\Ccal_m$, and $\Scal(\Ccal)$ be 
$\Scal_m(\Ccal_m)=(\Scal_m(c))_{c\in\Ccal_m}$, the corresponding local marginal 
polytope becomes $\Mcal_L(G,\Ccal_m,\Scal_m(\Ccal_m))$. By the fact that $c\notin \Scal_m(c), \forall c\in\Ccal_m$, applying 
belief propagation without messages to $\Mcal_L(G,\Ccal_m,\Scal_m(\Ccal_m))$ 
yields the following belief propagation $\forall c\in\Ccal_M$:
\begin{align}
b_s^{*}(\xb_s)\hspace{-0.03in}&=\hspace{-0.03in}\frac{1}{|\Scal_m(c)|}\hspace{
-0.03in}\max_{\xb_ {c\setminus 
s}}[b_c(\xb_c)\hspace{-0.03in}+\hspace{-0.09in}\sum_{\hat{s}\in\Scal_m(c)}
\hspace{-0.03in}b_{ \hat{s}}(\xb_s)], \forall
s\in\Scal_m(c),\xb_s\notag\\
b_c^*(\xb_c)\hspace{-0.03in}&=\hspace{-0.03in}b_c(\xb_c)\hspace{-0.03in}+\hspace
{-0.03in}\sum_{ \hat{s} \in\Scal_m(c)}b_{\hat{s}} 
(\xb_s)\hspace{-0.03in}-\hspace{-0.03in}\sum_{
\hat{s}\in\Scal_m(c)}b_{\hat{s}}^*(\xb_{\hat{s}}),\forall \xb_c.
\end{align}
\begin{table}[!t]
  \centering
  \caption{{Comparison of different algorithms.
}}
  \label{tab:CMPMP}
{
  \begin{tabular}{p{3cm}p{2.5cm}}
    \hline
    Methods  & Cluster Pursuit \\
    \hline
    Sontag12 & ``Triplet''+``Cycle''\cite{SontagChoeLi_uai12} \\
     GMPLP+T  &``Triplet''\cite{SontagEtAl_uai08}\\
    \tabincell{l}{GMPLP+S}  &``Stealth''(Ours)\\
    PS  & ``Stealth'' (Ours) \\
    $\pi$-S  &``Stealth'' (Ours)\\
    MI  &``Stealth'' (Ours)\\
    \hline
  \end{tabular}}
\end{table}

\section{Experiments}
MAP LP relaxation can be solved using standard LP solvers such as
CPLEX, Gurobi, LPSOLVE \etc. However, for the inference 
problems in our experiments the LP relaxations typically have
more than $10^5$ variables and $10^6$ constraints. It is very slow to use 
standard LP solvers in this case. Even state-of-the-art commercial LP solvers such as CPLEX 
have been reported to be slower than message passing based algorithms 
\cite{yanover2006linear}. Thus we only
compare our methods against message passing based algorithms.

We compare the proposed algorithms (all with ``stealth'' cluster pursuit) which are 
Power Set algorithm (PS), $\pi$-System algorithm ($\pi$-S) and Maximal-Cluster 
Intersection algorithm (MI), with 3 competitors: GMPLP \cite{globerson2007fixing} with 
``triplet'' cluster pursuit \cite{SontagEtAl_uai08} (GMPLP+T), GMPLP with our 
``stealth'' cluster pursuit (GMPLP+S), and Dual Decomposition with ``triplet'' and ``cycle'' cluster pursuit 
\cite{SontagChoeLi_uai12} (Sontag12). All algorithms run belief
propagation/message passing with all original constraints (including the ones with higher order
potentials).  After several iterations of belief
propagation, if there is a gap between the dual and decoded
primal, different cluster pursuit strategy are applied to tighten the
LP relaxations. A brief summary of these methods is provided in Table 
\ref{tab:CMPMP}.  Max-Sum Diffusion (MSD) \cite{Werner2008,
  Werner2010} has been shown empirically inferior to GMPLP \cite[see]
[Figure  1.5]{SonGloJaa_optbook}. Similarly TRW-S of
\cite{Kolmogorov2012} has been shown to be inferior to GMPLP in the
higher order potential case  \citep[see][Section 5]{Kolmogorov2012}. Thus we
compare primarily with Sontag12 and GMPLP. Note that Sontag12 is
considered the state-of-the-art.


We implement our algorithms and GMPLP in C++. For Sontag12, we use 
their released C++ code\footnote{\scriptsize For computational efficiency, we optimised Sontag \etal's released code (achieving the same 
output but with 2-3 times speed up). This is done for GDD and GMPLP as well to ensure a fair comparison. All algorithms are compiled with option 
``-O3 -fomit-frame-pointer -pipe -ffast-math'' using ``clang-4.2'', and all 
experiments are running in single thread with I7 3610QM and 16GB RAM.}. As all algorithms use a framework similar to 
Algorithm \ref{algo:CP} (with different the message updating and cluster 
pursuit), we can describe the termination criteria for these algorithms using the notation from 
Algorithm \ref{algo:CP}. For all algorithms, the threshold of inner loop $T_g$ is set to $10^{-8}$, and the maximum number of
iterations $K^{1}_{max}=1000$. We adopt $K^{2}_{max}=20$ in light of the faster convergence observed empirically in \cite{SontagEtAl_uai08}. The threshold for the 
outer loop $T_a$ is set to $10^{-6}$. In each cluster pursuit we add $n=20$ 
new clusters, and the maximum running time 
$\mathscr{T}_{max}$ is set to 1 hour.
\begin{figure}[t]
\centering
\subfigure[\scriptsize graph 
structure]{\includegraphics[width=4cm]{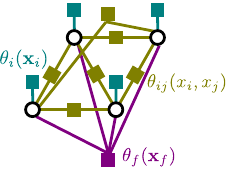}\label{fig:graphstruct}
}
\quad
\subfigure[\scriptsize primal and 
dual]{\label{fig:randomresult}
\includegraphics[width=4cm]{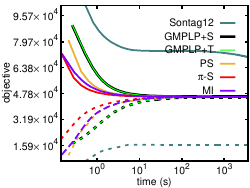}}

\caption{{Experiment on synthetic data. (a) Graph structure; (b) 
Dual 
objectives (solid line) and the values of decoded integer solution in the 
primal 
 (dashed line).}}
\label{Fig:ResultRandom}
\end{figure}

GDD based algorithms can be implemented as either a message passing procedure or a belief 
propagation procedure without messages. We implemented both, and observed that both have similar speed (see Section \ref{cmp_bp_mp} in 
the supplementary material). Of course, the latter uses less storage. For presentation clarity, we only report the result of GDD using belief 
propagation without messages here.

\subsection{Synthetic data}
We generate a synthetic graphical model with a structure commonly used
in image segmentation and denoising. The structure is a $128\times
128$ grid shown in Figure \ref{fig:graphstruct} with 3 types of
potentials: node potentials, edge potentials and higher order potentials. 
We consider the problem below,
\begin{align*}
\max_{\xb}\big[\sum_{i\in \Vcal}\theta_{i}(x_i)+\sum_{ij\in 
\Ecal}\theta_{ij}(x_i,x_j)+\sum_{f\in\mathcal{F}}\theta_{f}(\xb_f)
\big],
\end{align*}
where each $x_i \in \{1,2,3\}$ and $|f| = 4$.  All potentials are
generated from normal distribution $N(0,1)$. For clusters with order $\ge 4$, Sontag12 only 
enforces local marginalisation constraints from clusters to nodes, thus its 
initial local marginal polytope is looser than that of  GMPLP and our methods. 
As shown in Figure \ref{fig:randomresult}, the proposed methods,
converge much faster than GMPLP+S, GMPLP+T and Sontag12. 
Also Sontag12's gap between the dual objective and the decoded primal objective is 
much larger than that of our methods and GMPLP (even with cluster pursuit to tighten the local marginal 
polytope). 

\subsection{Protein-Protein Interaction}

\begin{figure}[!t]
  \centering  
  \addtolength{\subfigcapskip}{-0.06in}
\subfigure[\scriptsize\textit{protein2}]{
\includegraphics[width=0.4\textwidth]{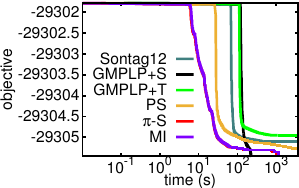}
\label{subfig:ppi2}}
 \quad
\subfigure[\scriptsize\textit{protein4}]{
\includegraphics[width=0.4\textwidth]{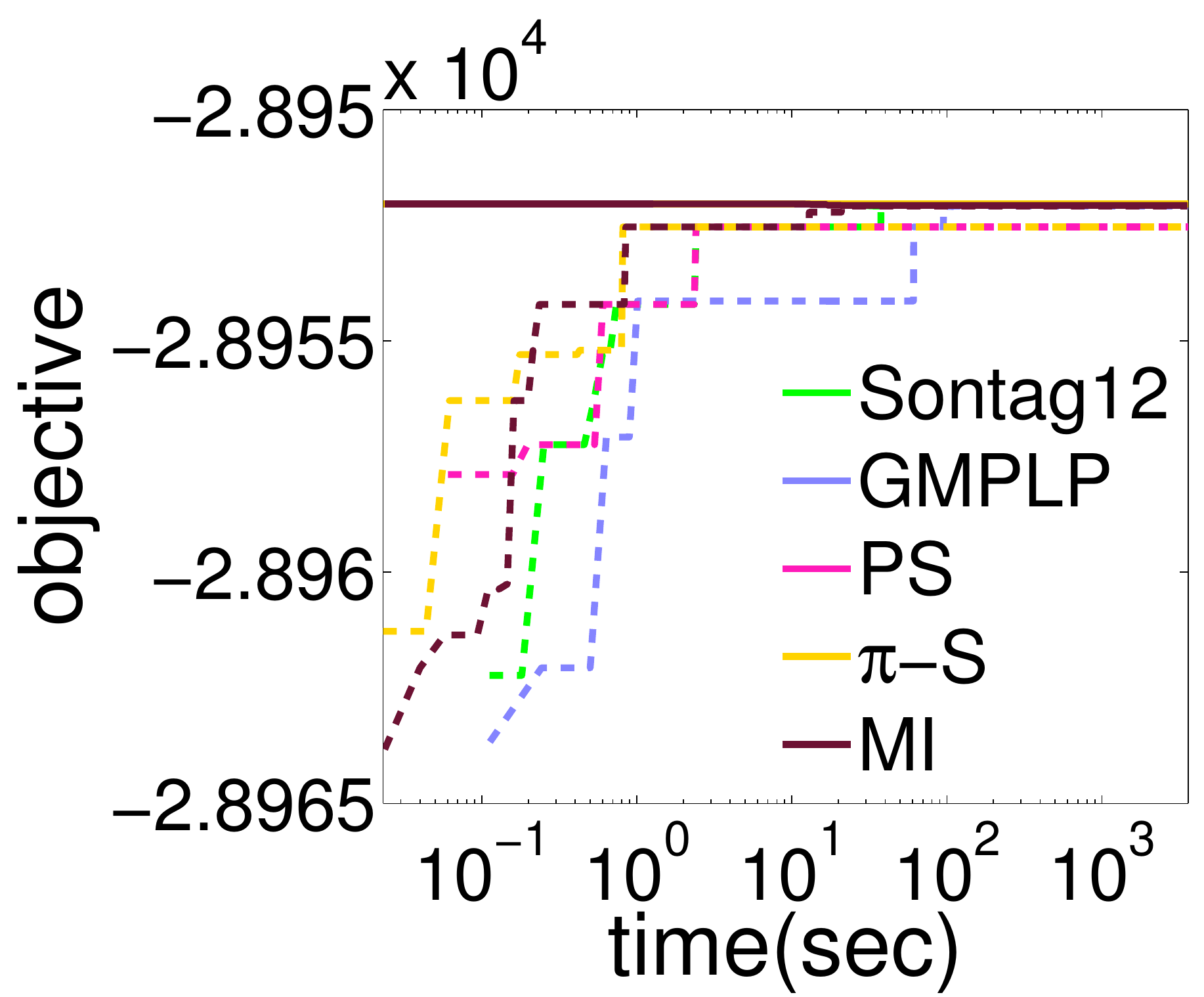}
\label{subfig:ppi5}}

  \caption{{Dual objectives on 2 PPI problems. (a) On
\textit{protein2}, all algorithms get exact solution except Sontag12, GMPLP+T and PS; (b) On \textit{protein4}, all 
algorithms find the exact solution except GMPLP+\{T,S\}.}}
  \label{fig:resppi}
\end{figure}

\begin{table}[!t]
\caption{{Average running time ($\pm$ standard deviation) for  one iteration of updating all beliefs or messages on PPI dataset}}

  \label{tab:RTCMP}
  \centering
{
  \begin{tabular}{p{2cm}<{\centering}p{4.5cm}<{\centering}}
    \hline
    Methods  &  Average Running Time (sec) \\
    \hline
    Sontag12 & 0.0765$\pm$0.0113\\
GMPLP+S & 0.1189$\pm$0.0093\\
GMPLP+T & 0.1118$\pm$0.0020\\
PS & 0.0359$\pm$0.0052\\
$\pi$-S & 0.0120$\pm$0.0030\\
MI & {\sffamily \fontseries{bx}\selectfont 0.0112$\pm$0.0020}\\
    \hline
  \end{tabular}}
\end{table}

Here we consider 8 Protein-Protein Interaction (PPI) inference problems (from \textit{protein1} to \textit{protein8}) used in 
\cite{SontagChoeLi_uai12}. In each problem, there are 
typically over 14000 nodes, and more than 42000 
potentials defined on nodes, edges and triplets. Since the highest order of the potentials is only 3 (triplets), the local marginal polytopes (without cluster 
pursuit) of all methods here are the same tight. Thus performance difference 
here is mainly due to different cluster pursuit strategies and computational 
complexity per iteration. 

We test all methods on all 8 problems. The average running time for one iteration of updating all beliefs  or messages in Table \ref{tab:RTCMP} (\ie  steps 4-7 in Algorithm \ref{algo:MSGUPD} for ours, and the counterpart for the competitors similar to steps 4-8 in Algorithm \ref{algo:MSGUPGDD}). We can see that the proposed methods have the smallest average running time, followed by Sontag12, and then by GMPLP.

We present dual objective plots on two problems in Figure \ref{fig:resppi} here, and provide the results for all problems in the supplementary (Section 8.2). Overall, the proposed methods converge fastest and two of them ($\pi$-S and MI) find exact solutions on 3 problems: \textit{protein2}, \textit{protein4} and \textit{protein8}. Sontag12 finds exact solutions on \textit{protein4} and \textit{protein8}, and achieved the smallest dual objective values on the problems where all methods failed to find the exact solutions. In terms of convergence, Sontag12 converges slower than proposed methods and faster than GMPLP.  GMPLP+T does not find an exact solution on any of the 8 problems, and GMPLP+S finds the exact solution on \textit{protein2} only.


%

\subsection{Image Segmentation}

\begin{figure}[!t]
\centering

\addtolength{\subfigcapskip}{-0.03in}

\includegraphics[height=2.7cm,width=3.6cm]{banana1}
~
\includegraphics[height=2.7cm,width=3.6cm]{bool}
~
\includegraphics[height=2.7cm,width=3.6cm]{book}

\vspace{0.25mm}

\includegraphics[height=2.7cm,width=3.6cm]{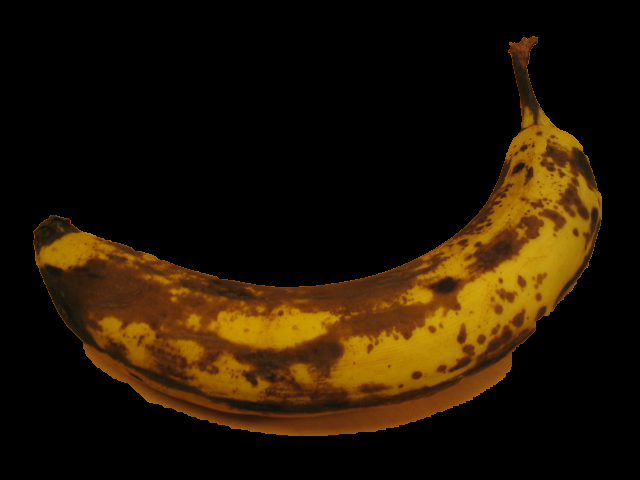}
~
\includegraphics[height=2.7cm,width=3.6cm]{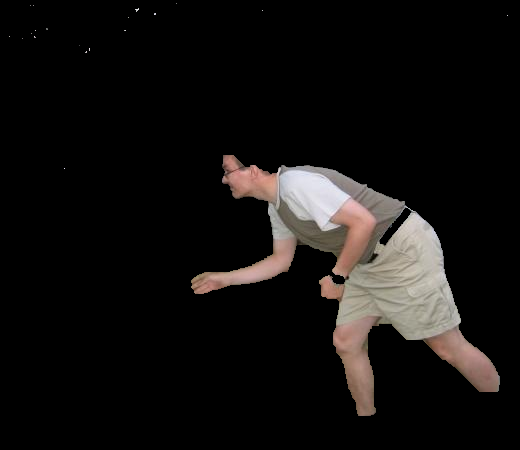}
~
\includegraphics[height=2.7cm,width=3.6cm]{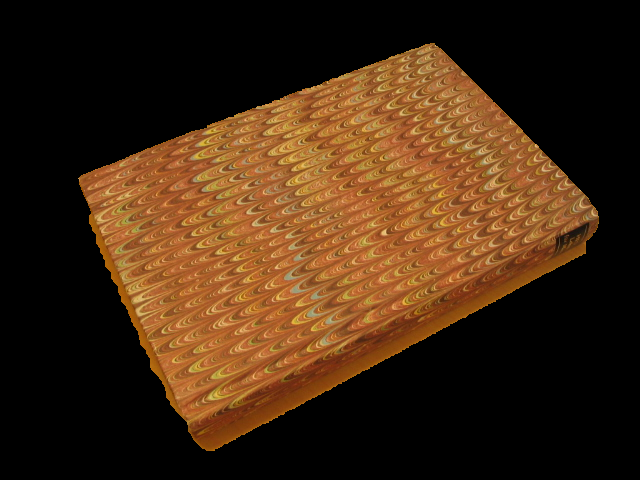}

\vspace{0.5mm}

\includegraphics[width=3.6cm]{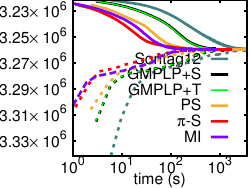} 
\includegraphics[width=3.6cm]{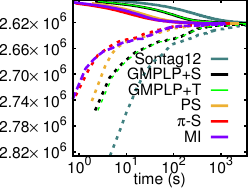} 
\includegraphics[width=3.6cm]{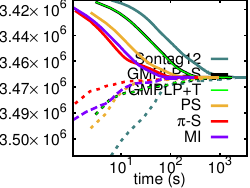} 

\vspace{0.25mm}

\includegraphics[width=3.6cm]
{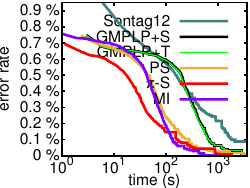} 
\includegraphics[width=3.6cm]
{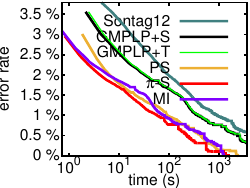} 
\includegraphics[width=3.6cm]
{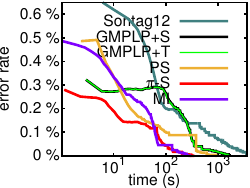} 

\footnotesize{\textit{~~~~banana1~~~~~~~~~~~~~~~~~~~~~~~~~~~bool~~~~~~~~~~~~~~~~~~~~~~~~~~~~~book}}

%
%

\caption{{Image segmentation results. {\bf First row}: original 
images; {\bf Second row}:  segmentations corresponding exact solutions of the MAP inference problems; {\bf Third row}: decoded primal objectives (dashed line) and dual objectives (solid line); {\bf Fourth row}: error rate plots.} Each column corresponds to the results of one image. }
\label{fig:Seg}
\end{figure}



Image segmentation is often seen as a MAP inference problem over PGMs. 
Following \cite{kohli2009robust}, we consider the MAP 
problem below 
\[
\max_{\xb}\big[\sum_{i\in \Vcal}\theta_{i}(x_i)+\sum_{ij\in 
\Ecal}\theta_{ij}(x_i,x_j)+\sum_{f\in\mathcal{F}}\theta_{f}(\xb_f)
\big].
\]
where $|f|=4$. We use the same graph structure as in Figure 
\ref{fig:graphstruct}, and follow the potentials in \cite{kohli2009robust}, where the colour terms in $\theta_i(x_i)$ are computed as in \cite{blake2004interactive}. More details including 
parameter settings are provided in the supplementary material. 
Here we segment three images: 
\textit{banana1}, \textit{book} and \textit{bool} in the  MSRC Grabcut dataset
\footnote{\scriptsize
\url{
http://research.microsoft.com/en-us/um/cambridge/projects/visionimagevideoediting/segmentation/grabcut.htm}}. The resolution of the images varies from 
$520\times 450$ to $640\times 480$, 
and each of the inference problems has about $2\times 10^5$ to $3\times 
10^5$ nodes and more than $10^6$ potentials. 
Sontag12 
failed to find exact solutions
in all 3 images. GMPLP+S and GMPLP+T find exact solution 
on \textit{book} only. The proposed methods, PS, $\pi$-S and MI, find 
the exact solution on all three problems.
The result is shown in Figure \ref{fig:Seg}. From the third row of Figure \ref{fig:Seg} (primal-dual objetives), we can see that the proposed method converges much faster than the competitors. From the fourth row of Figure \ref{fig:Seg}, we can see that the inference error rate (against the exact solution) reduced quickest to zero for the proposed methods.

\subsection{Image Matching}
Here we consider key point based image matching between two images (a source image and a destination image). 
First we detect key points from both images via SIFT \cite{lowe1999object} detector. 
Assume that there are $m$ key points from the source image and $n$ key points from the destination image. Let $\{p(i)\in\mathbb{R}^2\}_{i=1,2, \ldots, m}$
and $\{q(i)\in\mathbb{R}^2\}_{i=1,2, \ldots, n}$ be the coordinates of key points in the source and the destination images respectively. Let $\{h(i)\}_{i=1,2,\ldots, m}$ and $\{g(i)\}_{i=1,2,\ldots, n}$ be the SIFT feature vectors of the source and the destination images respectively. Assume $m \leq n$ (otherwise swap the source image and the destination image to guarantee so). The task is for each key point $i \in \{1,2,\ldots, m \}$ in the source image, to find a corresponding key point $x_i \in  \{1,2,\ldots, n \}$ in the destination image. When there are a large number of points involved, a practical way is to restrict a corresponding key point $x_i \in \{1, \ldots, k\}\cup\{-1\}$. Here if $i$ has a corresponding point, we restrict it from its $k$-nearest neighbours of the SIFT feature vector $h(i)$ in $\{q(i)\in\mathbb{R}^2\}_{i=1,2, \ldots, n}$. If $i$ has no corresponding point, we let $x_i = -1$.  

Let $\Vcal=\{1,2,\ldots m\}$,  and the matching problem can be formulated as 
a MAP problem similar to 
\cite{li2010object}, 
\begin{align}
\label{eq:map-match}
\max_{\xb} \Big\{\sum_{i\in 
\Vcal}\theta_i(x_i)+\sum_{f\in\mathcal{F}}\theta_f(\xb_f)\Big\},
\end{align}

\noindent where $|f|=4$, $x_i \in \{1, \ldots, k\}\cup\{-1\}$, and constructing of $\Fcal$ is provided in the supplementary.
In \cite{li2010object}, key points in the source image are filtered and reduced to less than 100 (see Section 3 of \cite{li2010object}), thus they often have corresponding key points in the destination image. As a result they did not use $-1$ to handle the no correspondence case. However, this strategy gives rise to a danger that potentially important key points may be removed too. 
 Also the small scale of their problem allows them to let $x_i$ take all $n$ states. In our experiment, we keep all key points (often over $10^3$ in both source and destination images). In that situation, we face two issues: 1) each $f\in\Fcal$ needs $\Ocal((k+1)^4)$ storage for potentials and beliefs; 2)  some key points in the source image have no corresponding points in the destination image. For the first issue, we set $k=4$ for computational efficiency. For 
the second issue, we extend the models in \cite{li2010object} to handle the potential lack of correspondence.  The 
node and higher order potentials are defined as follows:
\begin{align*}
  \theta_i(x_i)&=\left\{ \begin{array}{cl}
      -\eta & x_i=-1\\
-\delta_i||h(i)-g(x_i)||_2^2 & \mathrm{otherwise}
    \end{array}
\right.\\
\theta_f(\xb_f)&=\left\{\begin{array}{cl}
    0 & \exists i \in f, s.t. x_i = -1\\
    -||P_{\xb_f}W_f||_1 & \mathrm{otherwise}
\end{array}
\right.
\end{align*}

\begin{figure*}[t]
\renewcommand{\thesubfigure}{}
\addtolength{\subfigcapskip}{-0.03in}
\centering

\subfigure[{\scriptsize  \textit{bikes}}]{\includegraphics[height=1.3cm,width=3cm]{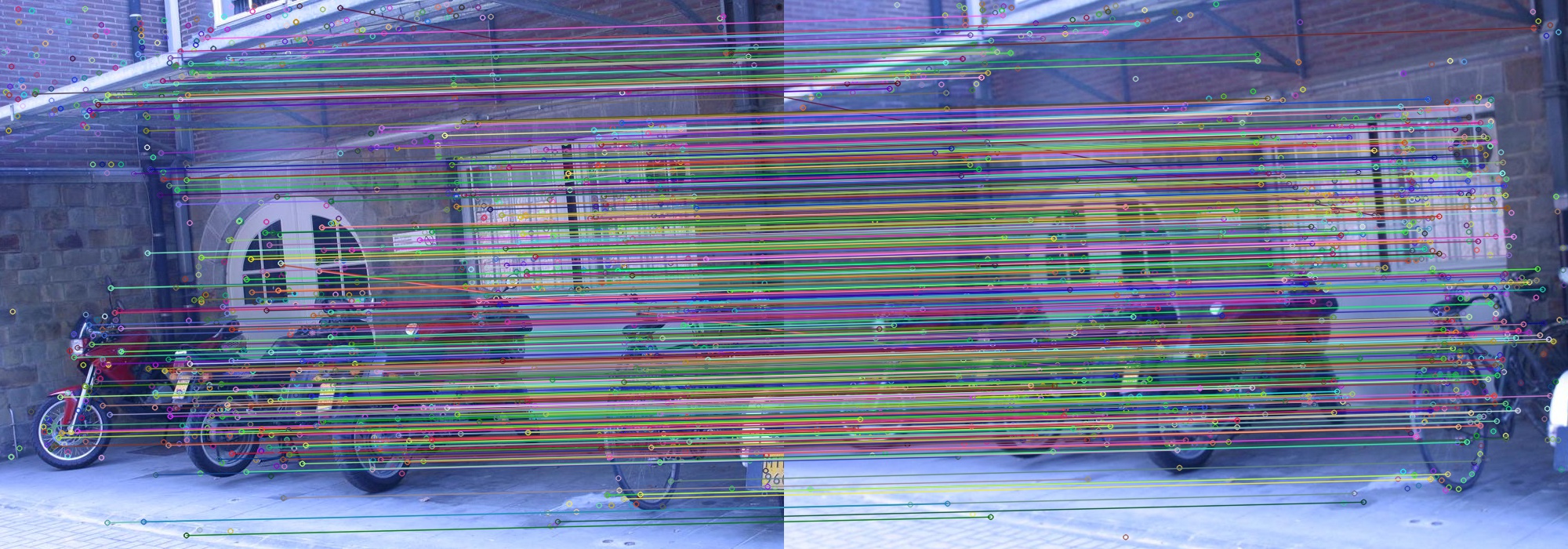}}
\quad\quad
\subfigure[{\scriptsize  \textit{wall}}]{\includegraphics[height=1.3cm,width=3cm]{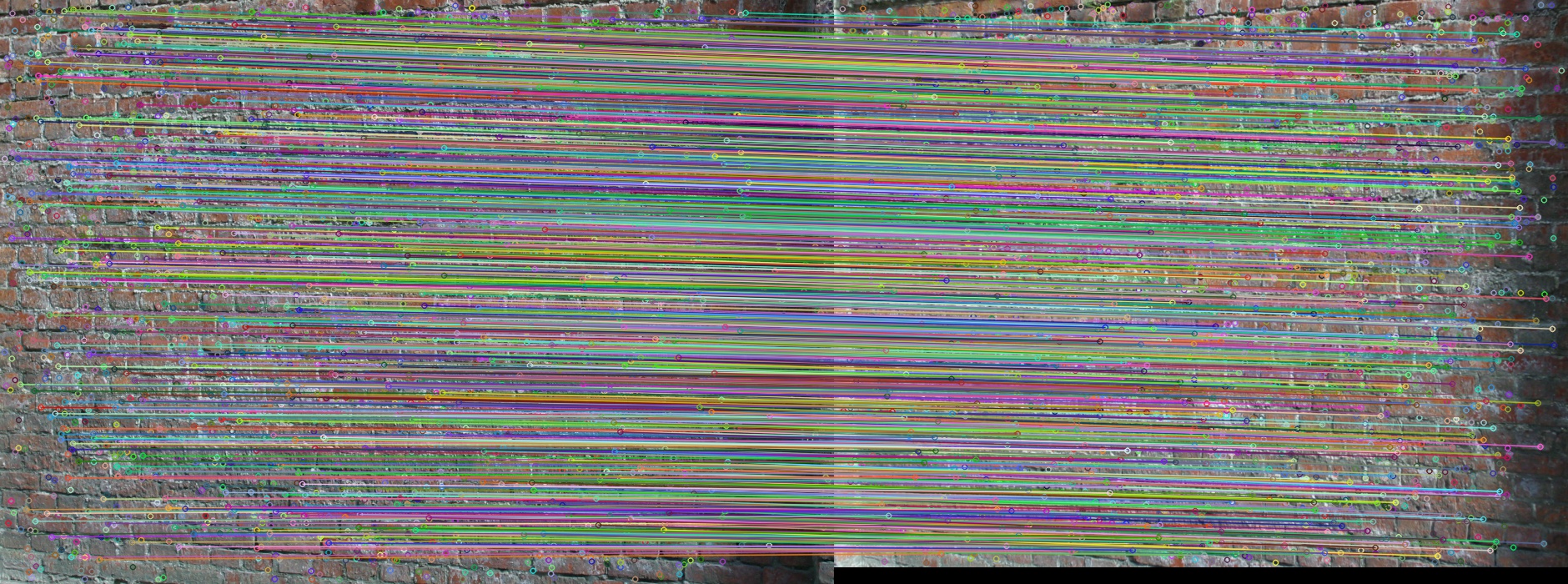}}
\quad\quad
\subfigure[{\scriptsize  \textit{trees}}]{\includegraphics[height=1.3cm,width=3cm]{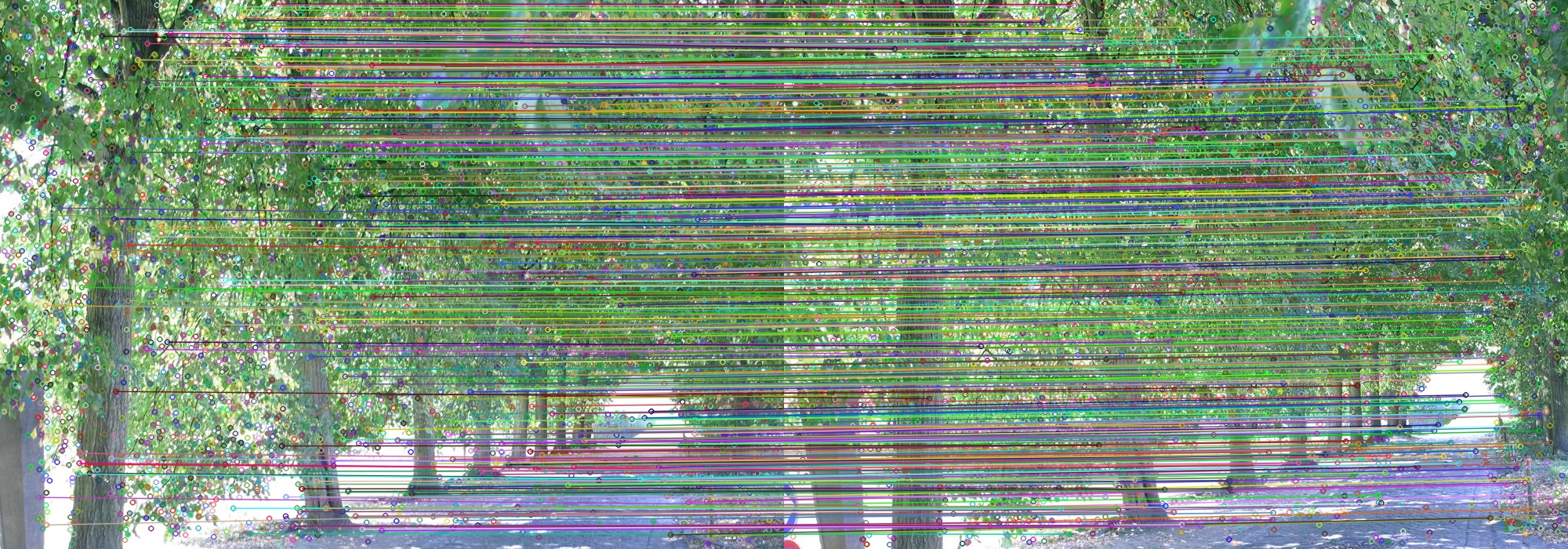}}

\vspace{-0.1in}

\subfigure[\scriptsize  \textit{graf}]{\includegraphics[height=1.3cm,width=3cm]{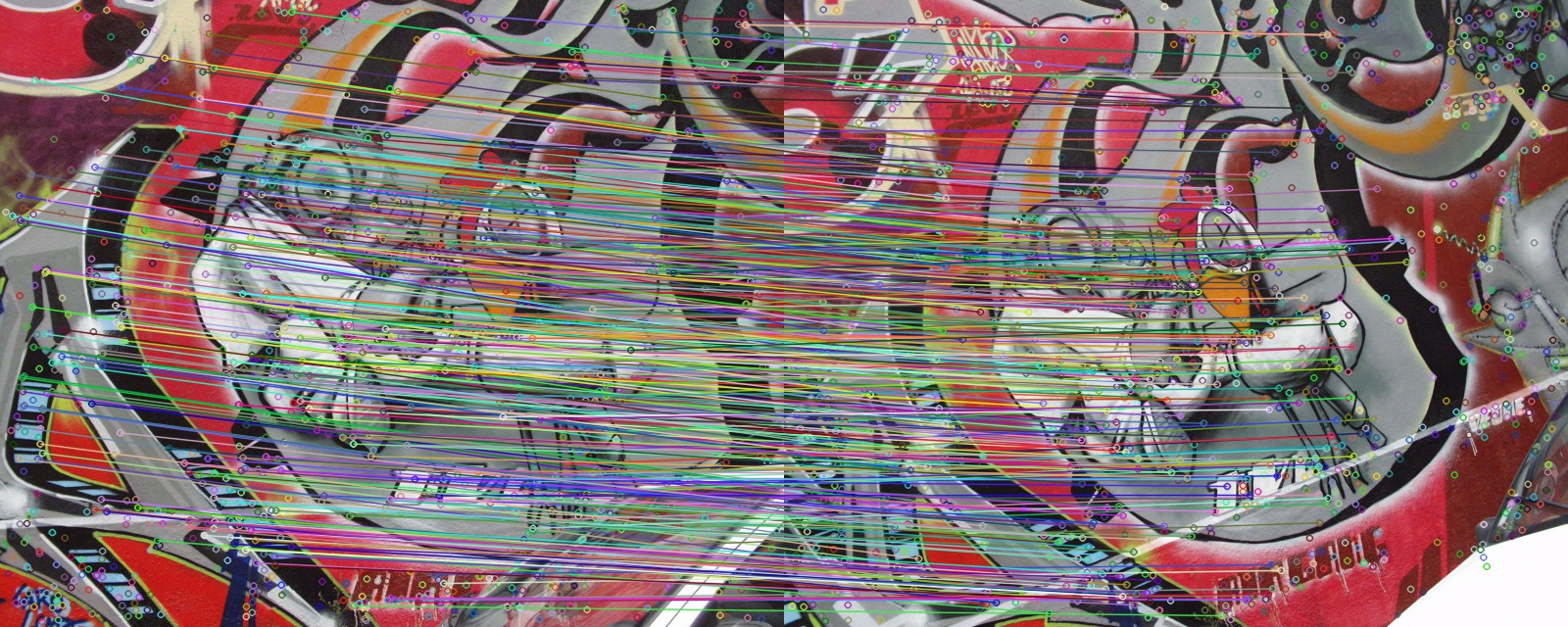}}
\quad\quad
\subfigure[\scriptsize  \textit{bark}]{\includegraphics[height=1.3cm,width=3cm]{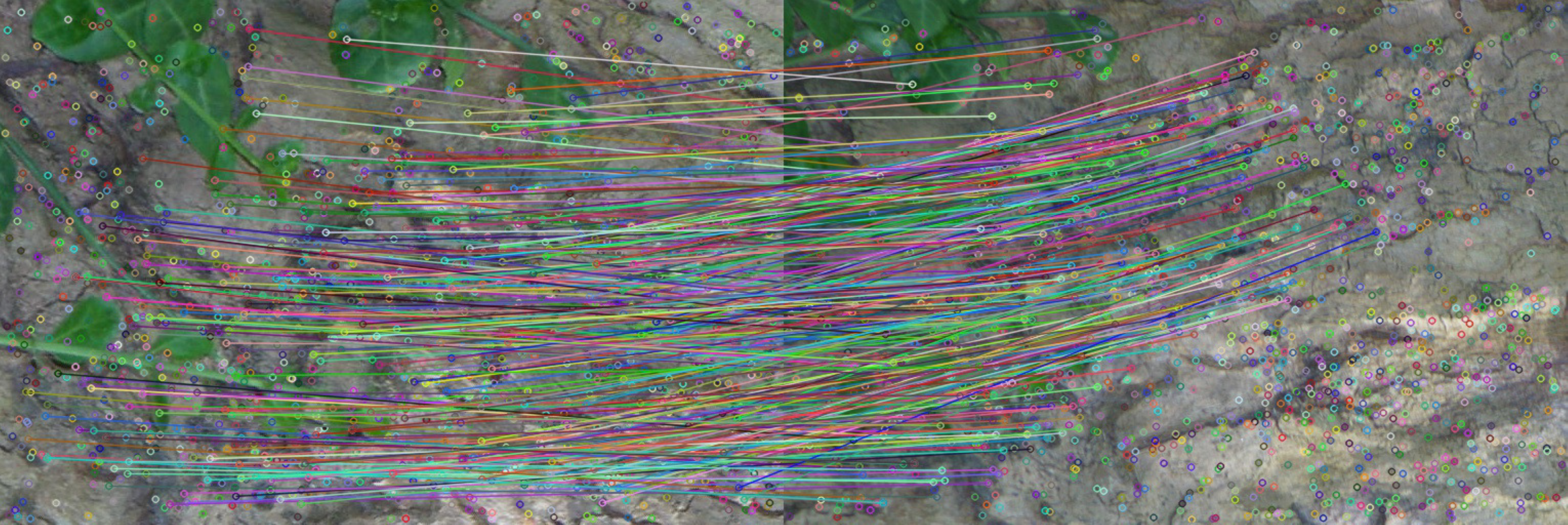}}
\quad\quad
\subfigure[\scriptsize  \textit{ubc}]{\includegraphics[height=1.3cm,width=3cm]{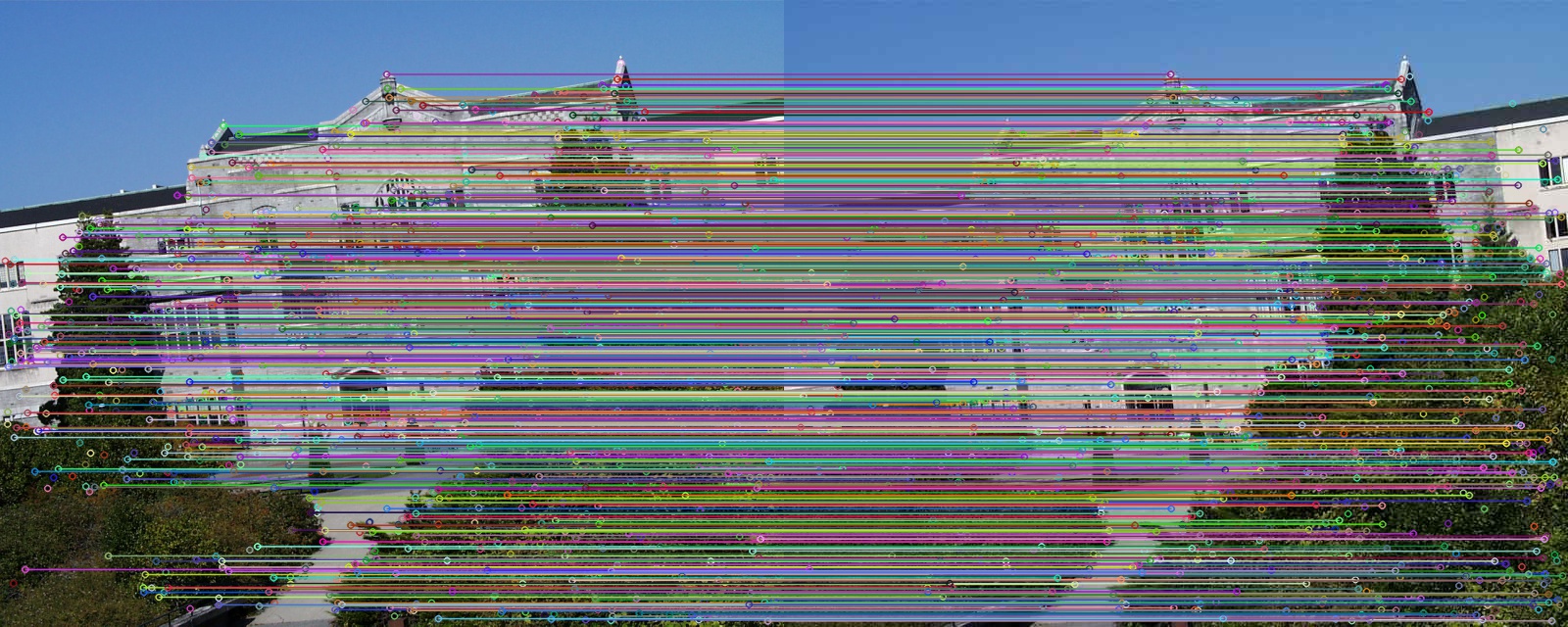}}

\vspace{-0.1in}

\subfigure[\scriptsize 
\textit{bikes12(1219)}]{\includegraphics[height=25mm]{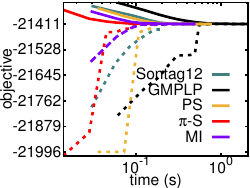}}
\subfigure[\scriptsize 
\textit{wall12(2184)}]{\includegraphics[height=25mm]{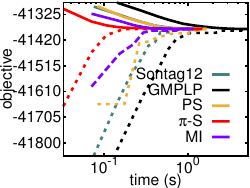}}
\subfigure[\scriptsize 
\textit{trees12(3284)}]{\includegraphics[height=25mm]{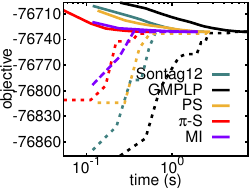}}

\vspace{-0.1in}

\subfigure[\scriptsize 
\textit{graf12(1203)}]{\includegraphics[height=25mm]{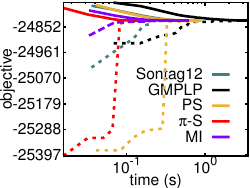}}
\subfigure[\scriptsize 
\textit{bark12(1486)}]{\includegraphics[height=25mm]{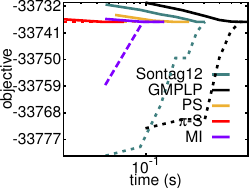}}
\subfigure[\scriptsize 
\textit{ubc12(1182)}]{\includegraphics[height=25mm]{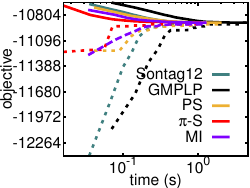}}

\vspace{-0.1in}

\subfigure[\scriptsize 
\textit{bikes12(1219)}]{\includegraphics[height=25mm]{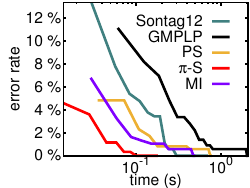}}
\subfigure[\scriptsize 
\textit{wall12(2184)}]{\includegraphics[height=25mm]{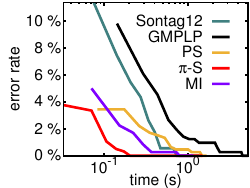}}
\subfigure[\scriptsize 
\textit{trees12(3284)}]{\includegraphics[height=25mm]{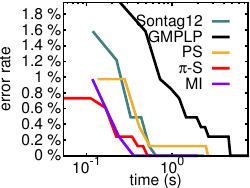}}

\vspace{-0.1in}

\subfigure[\scriptsize 
\textit{graf12(1203)}]{\includegraphics[height=25mm]{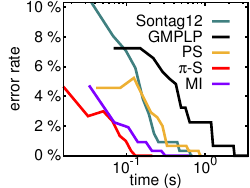}}
\subfigure[\scriptsize 
\textit{bark12(1486)}]{\includegraphics[height=25mm]{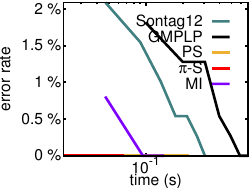}}
\subfigure[\scriptsize 
\textit{ubc12(1182)}]{\includegraphics[height=25mm]{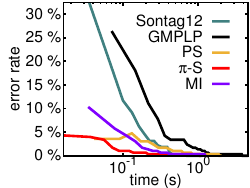}}

\vspace{-0.1in}

\caption{{Image matching result. {\bf Top 2 rows}: images and 
matching 
results; {\bf Middle 2 rows}: decoded primal (dashed line) and dual 
(solid line); {\bf Bottom 2 rows}: error rate plots. $[d]ij(n)$ means 
matching between $i^{th}$ image  and $j^{th}$ image in dataset $[d]$, with a 
PGM 
of $n$ nodes.}}
\label{fig:match}
\vspace{-0.1in}
\end{figure*}

\noindent where $\eta$ and $\delta_i$ are user specified parameters, 
$P_{\xb_f}=[q(x_i)]_{i\in f}\in\mathbb{R}^{2\times 4}$, and $W_f\in\mathbb{R}^4$ is a column 
vector computed via solving (5) in \cite{li2010object} (details provided in supplementary).  

We set $\eta=-25$, and 
$\delta_i=100/\max_{x_i}||h(i)-g(x_i)||_2^2,\forall i\in\Vcal$. We test all algorithms on 6 image sequences from \textit{Affine Covariant 
Regions Datasets} \footnote{\scriptsize\url{http://www.robots.ox.ac.uk/~vgg/data/data-aff.html}}.
Each inference problem has about $1\times10^3$ to $3\times 10^3$ nodes and 
$2\times 10^3$ to $6\times 10^3$ potentials. 
All algorithms find exact solutions. GMPLP converges before using
cluster pursuit, thus GMPLP+S and GMPLP+T became the same (reported as GMPLP). From Figure \ref{fig:match} we can see that our $\pi$-S converges 
fastest among all methods in all images, followed by our MI. 
\begin{table*}
  \caption{Running time comparison. Total running time and the averaged ranking (Avg. Rank) of the speed are reported. Best results are in boldface (the smaller the better).}
\label{tab:RTCmp}

\centering

\begin{tabular}{p{1.5cm}<{\centering}p{9mm}<{\centering}p{9mm}<{\centering}
p{9mm}<{\centering}p{9mm}<{\centering}p{9mm}<{\centering}p{9mm}<{
\centering}
p{9mm}<{\centering}p{9mm}<{\centering}}
\hline\noalign{\smallskip}
&\footnotesize{bark12} &\footnotesize{bark23} &\footnotesize{bark34}
&\footnotesize{bark45} &\footnotesize{bark56} &\footnotesize{bikes12}
&\footnotesize{bikes23} &\footnotesize{bikes34} \\
\noalign{\smallskip}\hline\noalign{\smallskip}
Sontag12 &0.30&1.00&0.38&1.29&0.66&0.70&0.65&0.61\\
GMPLP &0.68&2.07&2.10&5.35&2.77&2.10&2.44&1.25\\
PS(Ours) &0.23&1.30&0.33&1.37&0.77&0.79&1.00&0.41\\
$\pi$-S(Ours) &{\sffamily \fontseries{bx}\selectfont 0.07}&{\sffamily
  \fontseries{bx}\selectfont 0.60}&{\sffamily
  \fontseries{bx}\selectfont 0.13}&{\sffamily
  \fontseries{bx}\selectfont 0.51}&{\sffamily
  \fontseries{bx}\selectfont 0.24}&{\sffamily
  \fontseries{bx}\selectfont 0.18}&1.14&{\sffamily
  \fontseries{bx}\selectfont 0.10}\\
MI(Ours) &0.14&0.78&0.23&0.92&0.44&0.50&{\sffamily \fontseries{bx}\selectfont 0.59}&0.25\\
\noalign{\smallskip}\hline
\end{tabular}

\vspace{1mm}

\begin{tabular}{p{1.5cm}<{\centering}p{9mm}<{\centering}p{9mm}<{\centering}
p{9mm}<{\centering}p{9mm}<{\centering}p{9mm}<{\centering}p{9mm}<{
\centering}
p{9mm}<{\centering}p{9mm}<{\centering}}
\hline\noalign{\smallskip}
&\footnotesize{bikes45} &\footnotesize{bikes56} &\footnotesize{graf12}
&\footnotesize{graf23} &\footnotesize{graf34} &\footnotesize{graf45}
&\footnotesize{graf56} & \footnotesize{trees12}\\
\noalign{\smallskip}\hline\noalign{\smallskip}
Sontag12 & 0.30&0.55&0.69&0.54&1.31&2.52&1.20&2.36\\
GMPLP & 2.11&0.76&3.54&3.66&5.51&3.52&3.28&7.71\\
PS(Ours) & 0.59&0.25&0.90&1.27&1.89&1.68&1.19&2.76\\
$\pi$-S(Ours) & {\sffamily \fontseries{bx}\selectfont
  0.26}&0.21&{\sffamily \fontseries{bx}\selectfont 0.21}&{\sffamily
  \fontseries{bx}\selectfont 0.17}&{\sffamily
  \fontseries{bx}\selectfont 0.74}&{\sffamily
  \fontseries{bx}\selectfont 0.33}&{\sffamily
  \fontseries{bx}\selectfont 0.25}&{\sffamily
  \fontseries{bx}\selectfont 0.58}\\ 
MI(Ours) & 0.40&{\sffamily \fontseries{bx}\selectfont
  0.20}&0.62&0.85&1.43&0.91&0.59&1.99\\
\noalign{\smallskip}\hline
\end{tabular}

\vspace{1mm}

\begin{tabular}{p{1.5cm}<{\centering}p{9mm}<{\centering}p{9mm}<{\centering}
p{9mm}<{\centering}p{9mm}<{\centering}p{9mm}<{\centering}p{9mm}<{\centering} 
p{9mm}<{\centering}p{9mm}<{\centering}}
\hline\noalign{\smallskip}
 &\footnotesize{trees23} &\footnotesize{trees34}
 &\footnotesize{trees45} &\footnotesize{trees56} &\footnotesize{ubc12}
 &\footnotesize{ubc23} &\footnotesize{ubc34} &\footnotesize{ubc45}  \\
\noalign{\smallskip}\hline\noalign{\smallskip}
Sontag12 &3.21&0.87&{\sffamily \fontseries{bx}\selectfont
  0.18}&1.13&1.15&0.77&{\sffamily \fontseries{bx}\selectfont
  0.46}&0.33\\
GMPLP &6.95&3.46&0.76&3.88&4.65&4.88&4.33&1.98\\
PS(Ours) &1.58&1.61&1.41&1.43&2.04&1.96&1.87&0.48\\
$\pi$-S(Ours) &1.71&{\sffamily \fontseries{bx}\selectfont
  0.40}&0.41&{\sffamily \fontseries{bx}\selectfont 0.49}&{\sffamily
  \fontseries{bx}\selectfont 1.09}&{\sffamily
  \fontseries{bx}\selectfont 0.59}&0.55&0.97\\ 
MI(Ours) &{\sffamily \fontseries{bx}\selectfont
  1.14}&1.28&0.72&1.59&1.28&1.33&1.29&{\sffamily
  \fontseries{bx}\selectfont 0.30}\\
\noalign{\smallskip}\hline
\end{tabular}

\vspace{1mm}

{
\hspace{2pt}
\begin{tabular}{p{1.5cm}<{\centering}p{9mm}<{\centering}p{9mm}<{\centering}
p{9mm}<{\centering}p{9mm}<{\centering}p{9mm}<{\centering}p{9mm}<{\centering}>{\columncolor{Gray}} 
p{20.75mm}<{\centering}}
\hline
\begin{minipage}{1mm}
\vspace{1mm}
$~~$\\
\vspace{1mm}
\end{minipage}
&\footnotesize{ubc56} &\footnotesize{wall12} &\footnotesize{wall23}
&\footnotesize{wall34} &\footnotesize{wall45} &\footnotesize{wall56} &
{  Avg. Rank}\\
\hline
Sontag12 & 0.05&1.38&{\sffamily \fontseries{bx}\selectfont
  1.39}&1.21&0.97&{\sffamily \fontseries{bx}\selectfont 0.74}&{
  2.600}\\
GMPLP & 0.19&5.32&7.93&5.68&2.52&2.26&{ 4.933}\\
PS(Ours) & 0.11&1.68&3.33&2.05&0.85&1.15&{ 3.567}\\
$\pi$-S(Ours) &{\sffamily \fontseries{bx}\selectfont
  0.04}&1.49&10.75&{\sffamily \fontseries{bx}\selectfont
  1.06}&0.95&0.94&{\sffamily \fontseries{bx}\selectfont  1.700}\\
MI(Ours) & 0.10&{\sffamily \fontseries{bx}\selectfont 1.17}&1.77&1.75&{\sffamily \fontseries{bx}\selectfont 0.52}&0.93&{2.200}\begin{minipage}{0.05mm}$~$\\\vspace{1mm}
\end{minipage}
\\
\hline
\end{tabular}
}
\vspace{2pt}
\end{table*}
\begin{table*}
\centering
  \caption{Iterations comparison. Number of iterations and the averaged ranking (Avg. Rank) are reported. Best results are in boldface (the smaller the better). }
\label{tab:ITCmp}
\centering 

\begin{tabular}{p{1.5cm}<{\centering}p{9mm}<{\centering}p{9mm}<{\centering}
p{9mm}<{\centering}p{9mm}<{\centering}p{9mm}<{\centering}p{9mm}<{
\centering}
p{9mm}<{\centering}p{9mm}<{\centering}}
\hline\noalign{\smallskip}
&\footnotesize{bark12} &\footnotesize{bark23} &\footnotesize{bark34}
&\footnotesize{bark45} &\footnotesize{bark56} &\footnotesize{bikes12}
&\footnotesize{bikes23} &\footnotesize{bikes34} \\
\noalign{\smallskip}\hline\noalign{\smallskip}
Sontag12 &7&24&9&31&18&26&23&23\\
GMPLP &7&22&23&58&34&34&38&21\\
PS(Ours) &4&24&6&25&16&22&27&12\\
$\pi$-S(Ours) &{\sffamily
  \fontseries{bx}\selectfont 3}&28&6&24&13&{\sffamily
  \fontseries{bx}\selectfont 13}&80&{\sffamily
  \fontseries{bx}\selectfont 7}\\
MI(Ours) &{\sffamily \fontseries{bx}\selectfont3}&{\sffamily \fontseries{bx}\selectfont17}&{\sffamily \fontseries{bx}\selectfont5}&{\sffamily \fontseries{bx}\selectfont20}&{\sffamily \fontseries{bx}\selectfont11}&17&{\sffamily \fontseries{bx}\selectfont19}&8\\
\noalign{\smallskip}\hline
\end{tabular}

\vspace{1mm}

\begin{tabular}{p{1.5cm}<{\centering}p{9mm}<{\centering}p{9mm}<{\centering}
p{9mm}<{\centering}p{9mm}<{\centering}p{9mm}<{\centering}p{9mm}<{
\centering}
p{9mm}<{\centering}p{9mm}<{\centering}}
\hline\noalign{\smallskip}
 &\footnotesize{bikes45} &\footnotesize{bikes56} &\footnotesize{graf12} &\footnotesize{graf23} &\footnotesize{graf34} &\footnotesize{graf45} &\footnotesize{graf56} & \footnotesize{trees12}\\
\noalign{\smallskip}\hline\noalign{\smallskip}
Sontag12 & {\sffamily
  \fontseries{bx}\selectfont16}&41&22&15&33&64&28&21\\
GMPLP & 51 & 25 &51&45&62&40&34&32\\
PS(Ours) &24&14&22&27&36&33&21&20\\
$\pi$-S(Ours)&26&30&{\sffamily
  \fontseries{bx}\selectfont13}&{\sffamily
  \fontseries{bx}\selectfont9}&35&{\sffamily
  \fontseries{bx}\selectfont16}&{\sffamily
  \fontseries{bx}\selectfont11}&{\sffamily \fontseries{bx}\selectfont
  10}\\
MI(Ours) 8&19&{\sffamily \fontseries{bx}\selectfont13}&18&21&{\sffamily \fontseries{bx}\selectfont32}&21&12&16\\
\noalign{\smallskip}\hline
\end{tabular}

\vspace{1mm}

\begin{tabular}{p{1.5cm}<{\centering}p{9mm}<{\centering}p{9mm}<{\centering}
p{9mm}<{\centering}p{9mm}<{\centering}p{9mm}<{\centering}p{9mm}<{\centering} 
p{9mm}<{\centering}p{9mm}<{\centering}}
\hline\noalign{\smallskip}
 &\footnotesize{trees23} &\footnotesize{trees34}
 &\footnotesize{trees45} &\footnotesize{trees56} &\footnotesize{ubc12}
 &\footnotesize{ubc23} &\footnotesize{ubc34} &\footnotesize{ubc45}  \\
\noalign{\smallskip}\hline\noalign{\smallskip}
Sontag12 &29&{\sffamily \fontseries{bx}\selectfont6}&{\sffamily
  \fontseries{bx}\selectfont1}&8&{\sffamily
  \fontseries{bx}\selectfont36}&{\sffamily
  \fontseries{bx}\selectfont23}&{\sffamily
  \fontseries{bx}\selectfont12}&9\\
GMPLP &29&12&2&12&66&66&52&24\\
PS(Ours) &11&10&7&8&49&45&38&10\\
$\pi$-S(Ours) &30&{\sffamily \fontseries{bx}\selectfont6}&5&{\sffamily
  \fontseries{bx}\selectfont6}&67&34&28&49\\
MI(Ours) &{\sffamily
  \fontseries{bx}\selectfont9}&9&4&10&37&36&29&{\sffamily
  \fontseries{bx}\selectfont7}\\
\noalign{\smallskip}\hline
\end{tabular}

\vspace{1mm}

{
\hspace{0pt}
\begin{tabular}{p{1.5cm}<{\centering}p{9mm}<{\centering}p{9mm}<{\centering}
p{9mm}<{\centering}p{9mm}<{\centering}p{9mm}<{\centering}p{9mm}<{\centering}>{\columncolor{Gray}} 
p{20.75mm}<{\centering}}
\hline
\begin{minipage}{1mm}
\vspace{1mm}
$~~$\\
\vspace{1mm}
\end{minipage}
&\footnotesize{ubc56} &\footnotesize{wall12} &\footnotesize{wall23}
&\footnotesize{wall34} &\footnotesize{wall45} &\footnotesize{wall56} &
{  Avg. Rank}\\
\hline
Sontag12 &{\sffamily \fontseries{bx}\selectfont1}&20&{\sffamily \fontseries{bx}\selectfont21}&{\sffamily \fontseries{bx}\selectfont19}&15&{\sffamily \fontseries{bx}\selectfont11} & { 2.600}\\
GMPLP &2&35&54&40&17&15 & { 4.200}\\
PS(Ours) &2&18&37&25&10&13 & { 2.967}\\
$\pi$-S(Ours) &2&44&288&32&27&27 & {2.867}\\
MI(Ours) &2&{\sffamily \fontseries{bx}\selectfont16}&24&25&{\sffamily \fontseries{bx}\selectfont7}&12 & {\sffamily \fontseries{bx}\selectfont  1.833}\begin{minipage}{0.05mm}$~$\\\vspace{1mm}
\end{minipage}\\
\hline
\end{tabular}
}
\end{table*}
Both the total running time and the number of iterations required to reach an exact solution for the matching problems are reported in Tables \ref{tab:RTCmp} and \ref{tab:ITCmp}. 
In several cases $\pi$-S 
takes an abnormally long time because it was trapped at a local optimum and 
cluster pursuit had to be applied to escape its basin of attraction. Two of the proposed 
methods, PS and $\pi$-S, take less running time and iterations in most cases 
as number of constraints and variables is sufficiently reduced without loosening 
the local marginal polytope. 
\section{Conclusion}
We have proposed a unified formulation of MAP LP relaxations which allows to conveniently compare different LP relaxation with different formulations of objectives and different dimensions of primal variables. With the unified formulation, a new tool, the Marginal Polytope Diagram, is proposed to describe LP relaxations. With a group of propositions, we can easily find equivalence between different marginal polytope diagrams. Thus constraint reduction can be carried out via the removal of redundant nodes and replacement of equivalent edges in the marginal polytope diagram. Together with the unified formulation and constraint reduction, we have also proposed three new message passing algorithms, two of which have shown significant speed up over the state-of-the-art
methods. Extension to marginal inference is of future work.

\bibliographystyle{spbasic}      
{\bibliography{ref}}

\includepdf[pages=-]{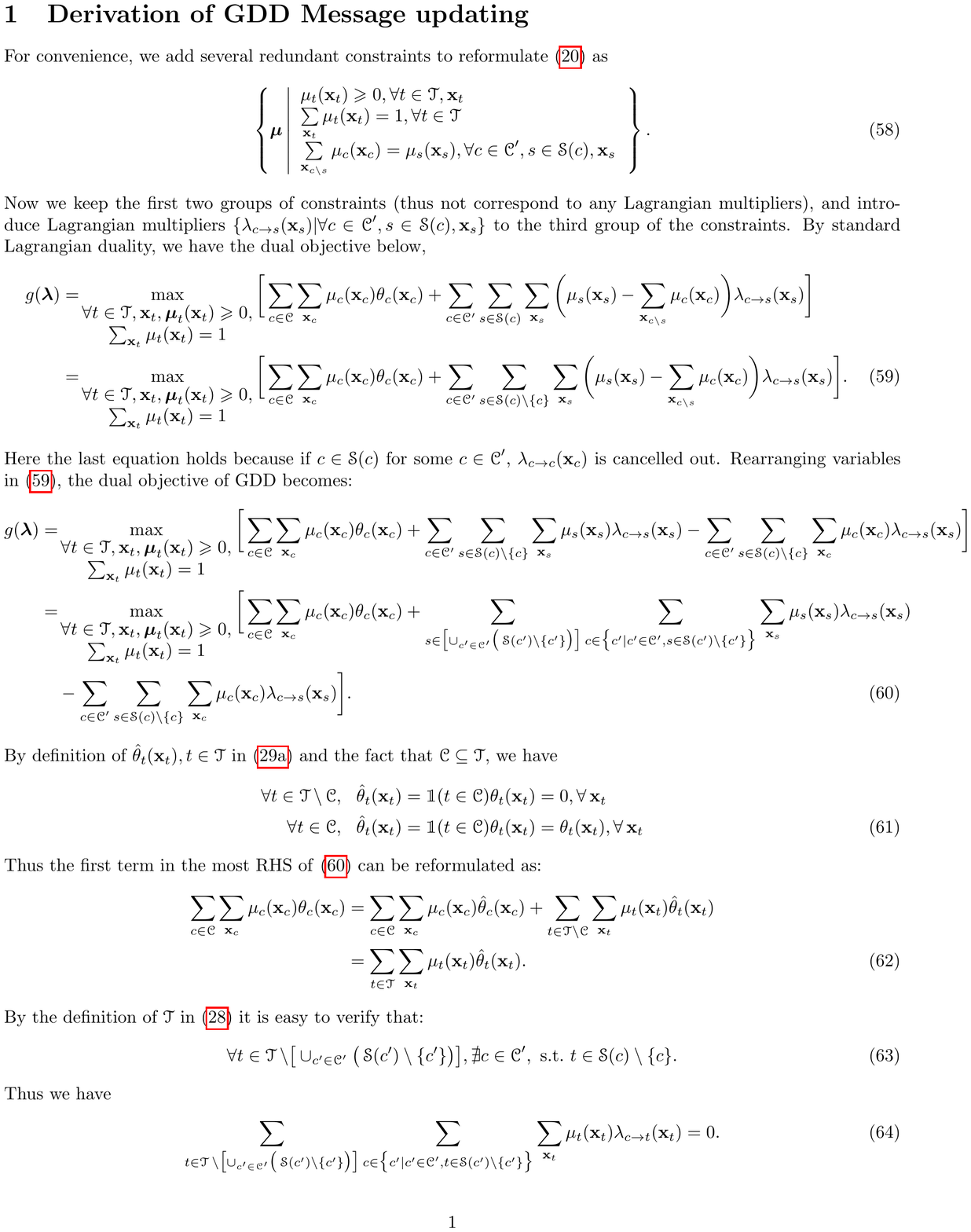}
\end{document}